\documentclass[twoside,letterpaper]{article}

\PassOptionsToPackage{authoryear,round}{natbib}
\usepackage[hidelinks]{hyperref}

\usepackage[preprint]{aistats2027}

\usepackage[T1]{fontenc}
\usepackage{lmodern} 
\usepackage{booktabs}
\usepackage{nicefrac}
\usepackage{microtype}
\usepackage{xcolor}
\usepackage{color}
\usepackage[most]{tcolorbox}

\usepackage{amsthm}
\usepackage{amssymb}
\usepackage{pifont}
\usepackage{enumitem}
\usepackage{comment}
\usepackage{bm}
\usepackage{bbm}

\usepackage{graphicx}
\usepackage{subcaption}
\usepackage{algorithm}
\usepackage{algorithmic}

\usepackage[disable]{todonotes}

\usepackage[round]{natbib}

\usepackage{hyperref}

\renewcommand{\epsilon}{\varepsilon}
\newcommand{\bb}[1]{\mathbb{#1}}
\newcommand{\rr}{\mathbb{R}}
\newcommand{\nn}{\mathbb{N}}
\newcommand{\pp}[1]{\mathbb{P}\left[#1\right]} 
\newcommand{\ee}[1]{\mathbb{E}\left[#1\right]} 
\newcommand{\var}[1]{\mathbb{V}\text{ar}\left[#1\right]} 
\newcommand{\Sk}{\mathcal{S}_k} 
\newcommand{\ps}{(\Omega, \mathcal{F}, \mathbb{P})} 
\newcommand{\alg}{\mathcal{A} = (T, \hat{\sigma})} 
\newcommand{\pmin}{P_{\textup{min}}} 
\newcommand{\supp}[1]{\textup{Supp}(#1)} 
\newcommand{\kl}[2]{\textup{D}_{\textup{KL}}\left(#1 \, ||\, #2\right)} 

\newcommand{\sigu}{\sigma_u} 
\newcommand{\ut}{(U^t)_{t\in \mathbb{N}}} 
\newcommand{\rum}{\text{RUM}(\bm{u},\nu)} 
\newcommand{\rumv}{\text{RUM}(\bm{v}, \nu)} 
\newcommand{\fr}{\varphi_{FR}} 
\newcommand{\wo}{\varphi_{WO}} 
\newcommand{\ff}{\mathbb{F} = (\mathcal{F}_t)_{t\in \mathbb{N}}} 
\newcommand{\utilitygap}[2]{\Delta_{#1, #2}} 

\newcommand{\pij}{P_{ij}} 
\newcommand{\pji}{P_{ji}} 
\newcommand{\estij}[2]{P_{#1}^{#2}} 
\newcommand{\xij}[2]{\mathbbm{1}_{E_{#1}^{#2}}} 
\newcommand{\svij}[2]{V_{#1}^{#2}} 

\newcommand{\wpi}{P_i} 
\newcommand{\wpj}{P_j} 
\newcommand{\esti}[2]{P_{#1}^{#2}} 
\newcommand{\yi}[2]{\mathbbm{1}_{E_{#1}^{#2}}} 
\newcommand{\svi}[2]{V_{#1}^{#2}} 

\newcommand{\shat}{\hat{\sigma}} 
\newcommand{\shatstar}{\hat{\sigma}^\star} 
\newcommand{\tstar}{T^\star} 
\newcommand{\etastar}{\eta^\star} 
\newcommand{\epdel}{(\varepsilon, \delta)} 
\newcommand{\deltan}[1]{\delta_{#1}} 
\newcommand{\dedge}[2]{#1 \longrightarrow #2} 

\newcommand{\cmark}{\ding{51}} 

\theoremstyle{plain} 
\newtheorem{theorem}{Theorem}

\newtheorem{lemma}[theorem]{Lemma}

\theoremstyle{definition} 
\newtheorem{definition}{Definition}

\newlist{assumptionV}{enumerate}{1}  
\setlist[assumptionV,1]{
    label=\textbf{(V)},
    ref=(V),
    leftmargin=*,
    labelsep=.5em,
    topsep=2pt,
    itemsep=0pt
}
\newlist{assumptionWO}{enumerate}{1} 
\setlist[assumptionWO,1]{
    label=\textbf{(WO\arabic*)},
    ref=(WO\arabic*),
    leftmargin=*,
    labelsep=.5em,
    topsep=2pt,
    itemsep=0pt
}

\begin{document}

\runningtitle{Learning a Ranking from Human Feedback in Log-Concave Random Utility Models}

\twocolumn[
\aistatstitle{Learning a Ranking from Human Feedback \\ in Log-Concave Random Utility Models}
\aistatsauthor{Diego Alovisetti \And Marco Mussi \And Alberto Maria Metelli}
\aistatsaddress{Politecnico di Milano \And Politecnico di Milano \And Politecnico di Milano}
]

\begin{abstract}
We study the problem of recovering the ranking of a fixed set of items according to their unknown numerical utilities. At each interaction with the environment, a learner presents the item set to a human and receives comparative feedback of two types. Under \textit{full-ranking} feedback, each interaction reveals a noisy ranking of all items, whereas under \textit{winner-only} feedback, it reveals only the item ranked first. In both settings, we model human feedback using a \textit{random utility model} with log-concave noise and study the number of observations needed to recover an $\epsilon$-\textit{accurate} ranking with high probability. This novel criterion tolerates ordering errors only between items whose utilities differ by less than $\epsilon$. For both feedback types, we establish worst-case sample-complexity lower bounds and develop algorithms that match these bounds up to logarithmic factors. Neither algorithm requires knowledge of the noise distribution, while only requiring an upper bound on its variance. Our results show that the ranking problem under winner-only feedback is intrinsically harder by exposing the sample complexity dependence on the minimum winning probability across the item set.   
\end{abstract}

\section{Introduction}\label{sec:introduction}
In many machine learning problems, an algorithm deals with a finite set of objects, each associated with a numerical score representing its quality. For example, recommendation systems are concerned with products whose scores reflect users' preferences \citep{hajek_minimaxoptimalinference}. Similarly, in Reinforcement Learning from Human Feedback (RLHF) \citep{christiano_deep}, an agent can take multiple actions whose scores are reward values. In both cases, although every object is assigned a scalar quantity, the latter typically remains latent. Indeed, since the algorithm interacts directly with a human evaluator, such as the user in recommendation systems or the problem designer in RLHF, scalar feedback is either unavailable or unreliable. As the psychometric literature suggests, humans struggle to assign numerical scores, while they can reliably provide ordinal feedback. Consequently, the algorithm must address the set of numerical scores while relying on non-scalar feedback signals. In this work, we formalize this to study a ranking-recovery learning problem. We consider a finite set of items associated with numerical scores that we call \emph{utilities} and a learning algorithm that has no prior knowledge about them and aims to recover the ranking induced by the utilities. To do so, the algorithm sequentially interacts with a human evaluator who provides (noisy) feedback in the form of a (partial) ranking of the items. To model human behavior, we use a Random Utility Model (RUM), a standard paradigm for human preferences \citep{thurstone_rum}, which includes the well-studied Plackett-Luce (PL) model \citep{plackett_pl, luce_pl}. Within this framework, we assume that the latent utilities are perturbed by i.i.d. noise to obtain a random utility vector. Then, a \emph{feedback filter} transforms this vector into the observed feedback signal. We make the additional assumption that the noise distribution is log-concave (e.g., Gaussian, logistic), which is standard in the literature \citep{shah_estimationfrompairwisecomparisons} and ensures several desirable properties \citep{critchlow_probabilitymodelsonrankings}. The feedback filter transforms a vector with scalar entries into a partial ranking. Speficially, we consider two types of filters: 
\begin{itemize}[leftmargin=*,topsep=-1pt,noitemsep]
    \item \textbf{Full-Ranking} (FR), where the complete ordering of the random utilities is observed. 
    \item \textbf{Winner-Only} (WO), in which the learner observes the items with the highest sampled utility.
\end{itemize}
We evaluate the ranking recovery problem within the Probably Approximately Correct (PAC) framework \citep{valiant_pac}, where the approximately correct part is formalized using the novel notion of $\epsilon$-accuracy. The latter, which measures the distance between the recovered ranking and the target, is fundamentally different from standard discrete distances such as Kendall tau \citep{kendall_kendalltaudistance} or maximum rank \citep{busa_kendalltaumaxrank}, in that these are insensitive to the underlying utilities. In fact, our metric effectively penalizes misranking two items depending on their utility gap. By considering the proposed ranking problem within the PAC framework, we aim to address the following research question: 
\begin{tcolorbox}[
    colback=gray!5!white,
    colframe=gray!75!black,
    before skip=6pt,
    after skip=6pt
]
What is the minimum number of \emph{ordinal} feedback signals
needed for the learner to recover the ranking induced by
the items' \emph{scalar} utilities?
\end{tcolorbox}
By answering this question for two common human-generated feedback types, our goal is to quantify their intrinsic learning difficulties. This will result in a quantitative understanding of the difference between using full-ranking rather than winner-only feedback. We stress that the tension between the learning goal, rooted in scalar utilities, and the ordinal feedback used to reach it is precisely what makes the proposed ranking problem non-trivial. The notion of $\epsilon$-accuracy is the very element that allows this tension to be resolved.  Our proposed solution to the ranking problem is the Variance-aware Utility Ranker (VUR), a unified framework comprising two algorithms for recovering the target ranking under full-ranking and winner-only signals, respectively. In summary, our main contributions are:
\begin{itemize}[leftmargin=*,topsep=-1pt,noitemsep]
    \item \textbf{Novel Distance Metric}. We introduce the notion of $\varepsilon$-accuracy through Definition \ref{def: epsilon-accuracy}, which generalizes the $\epsilon$-optimality metric used in best-item identification \citep{saha_bestitem} to learning a full utility ranking. 
    \item \textbf{Unified Framework for Diverse Feedback}. We design two PAC algorithms relying on a common design principle, which allows us to view them as part of a single, unified framework for learning the utility ranking across diverse feedback types. 
    \item \textbf{Distribution-agnostic Algorithms}. We propose two algorithms requiring no exact knowledge of the noise distribution. Instead, they rely solely on the log-concavity assumption plus a known upper bound on the noise variance.  
    \item \textbf{Fundamental Limits of Learnability}. We establish information-theoretic lower bounds for the ranking problem under log-concave RUMs, demonstrating that our algorithms are order-optimal. Crucially, Theorem \ref{thm: sample complexity lower bound WO} identifies the minimum winning probability, $\pmin$, as the fundamental informational bottleneck of WO feedback. 
\end{itemize}
The related works are discussed in Appendix \ref{app: related works}.

\section{Preliminaries}\label{sec:preliminaries}
We use standard big-O notation throughout, with $\tilde{\mathcal{O}}$ suppressing logarithmic factors, except for the confidence-dependent term $\log(1/\delta)$. For any $k\in \nn$, we denote $[k] = \left\{1,\dots,k\right\}$. The set of all permutations on $[k]$, i.e., the bijective functions $\sigma:[k]\rightarrow [k]$, is denoted by $\Sk$. We interpret $\sigma\in \Sk$ as the \textit{ranking} that assigns position $\sigma(i)$ to item $i$. The notation $\ps$ refers to a generic probability space, while $\mathbbm{1}_A$ is the indicator function of any event $A\in \mathcal{F}$. The set of all probability measures on $\rr$ is denoted by $\mathcal{P}(\rr)$. If not stated otherwise, the real line is equipped with the Borel sigma-algebra. If $\mu\in \mathcal{P}(\rr)$ is absolutely continuous with respect to the Lebesgue measure $\lambda$, we write $\mu\ll \lambda$. If $f$ is the density of $\mu$, we denote its support by $\supp{f}$. For a non-decreasing function $g:\rr\rightarrow\rr$, the \textit{generalized inverse} is defined by:
\[
g^{-1}(y) = \sup \{x\in \rr\colon g(x)\leq y\}.
\]
When well-defined, the inverse coincides with the generalized inverse; moreover, the generalized inverse is always non-decreasing.

\textbf{Linear extensions and graphs}. A \textit{strict partial order} $<$ on the finite set $\mathcal{S}$ is a homogeneous binary relation that is irreflexive, asymmetric, and transitive. A general result by \citet{szpilrajn_linearextension} states that, given any strict partial order $<$ on $\mathcal{S}$, there exists a \textit{linear extension} $<^*$, i.e., a total order such that $v<^*w$ whenever $v<w$. A generic directed graph is denoted by $G = (\mathcal{V},\mathcal{E})$, where $\mathcal{V}$ is the vertex set and $\mathcal{E}$ is the edge set. A (directed) edge is of the form $\dedge{v}{w}$, where $v$ and $w$ are distinct vertices. A \textit{tournament} is a directed graph with exactly one directed edge between any pair of distinct vertices. When $G$ is acyclic, its \textit{transitive closure} is the strict partial order ``$<$'' on the vertex set satisfying: for any two vertices $v$ and $w$, $v<w$ if and only if there exists a path in $G$ that goes from $v$ to $w$. 

\textbf{Log-concavity}. Given a probability distribution $\nu\in\mathcal{P}(\rr)$, we say that $\nu$ is \emph{log-concave} if:
\[
\nu(\lambda A + (1-\lambda)B) \geq \nu(A)^\lambda\nu(B)^{1-\lambda},
\] 
whenever the sets $A$ and $B$, and $\lambda\in [0,1]$ are chosen so that $\lambda A + (1-\lambda)B\subseteq \rr$ is measurable. A log-concave distribution on $\rr$ is either a Dirac measure or absolutely continuous with respect to the Lebesgue measure \citep{borell_absolutecontinuity}. In this case, its density and cumulative distribution function (CDF) are log-concave \citep{lovasz_geometryoflogconcavefunctions}.

\section{Problem Formulation}\label{sec:problem formulation}
We fix a positive integer $k\geq 2$ and consider a collection of $k\in\nn$ \emph{items}, which we identify with the set $[k]$. Each item $i$ is associated with a scalar utility $u_i\in \rr$, resulting in the utility vector $\bm{u} = (u_1,\dots, u_k)\in\rr^k$. The corresponding \emph{utility ranking} is defined as the unique permutation $\sigu\in \Sk$ satisfying:\footnote{If $u_i = u_j$, the tie is broken by ordering the two items according to their indices.}
\[
\sigu(i) < \sigu(j) \Longleftrightarrow u_i > u_j \vee (u_i = u_j \wedge i < j).
\]
In this paper, we address the problem of approximating the utility ranking based on noisy \textit{feedback signals} generated by a human evaluator. In our formalization, these arise from a RUM with a centered log-concave noise distribution. To avoid degenerate cases, we assume that the latter is absolutely continuous\footnote{Because a log-concave distribution is either a Dirac measure or abs. continuous, this amounts to excluding the case in which the noise has a Dirac distribution. This results in no loss of generality as the support of the noise can be taken arbitrarily narrow.}. Log-concavity is a standard assumption in the literature \citep{shah_estimationfrompairwisecomparisons}, and implies several desirable properties for ranking models \citep{critchlow_probabilitymodelsonrankings}. To ensure learnability, we impose a sufficient bounded-variance condition. For a fixed $V>0$, we let $\mathcal{N}_V$ be the set of all absolutely continuous centered log-concave distributions $\nu$ satisfying:
\begin{assumptionV}
    \item\label{assump: bounded variance} The variance of distribution $\nu$ is bounded above by $V$.
\end{assumptionV}
Intuitively, this can be interpreted as a reliability constraint. While keeping the exact noise distribution unknown, it ensures that the feedback remains a reliable, albeit noisy, reflection of the latent utility relationships. The bound $V$ represents a parameter of the proposed algorithms. For simplicity, we drop the subscript and write $\mathcal{N}$, whose elements are referred to as \textit{noise distributions}. 

\subsection{Feedback-generation Process}

Given $\bm{u}\in\rr^k$ and $\nu\in \mathcal{N}$, we let $N_1,\dots,N_k$ be i.i.d. random variables drawn from $\nu$, and consider the \emph{random utility vector} $U = (U_1,\dots,U_k)$ given by:
\[
U_i = u_i + N_i, \quad \forall\,  i\in [k].
\]
For $t\in\nn$, we let $U^t = (U_1^t,\dots,U_k^t)$ be an independent copy of $U$, and denote the stochastic process $\ut$ by $\rum$, standing for random utility model with utility $\bm{u}$ and noise distribution $\nu$. The feedback signal observed at time $t$ is obtained by applying a \emph{feedback filter} to the utility vector $U^t$. Specifically, we consider two types of filters.  
\begin{itemize}[leftmargin=*,topsep=1pt,noitemsep]
    \item The \textbf{full-ranking} (FR) filter is the function $\fr:\rr^k\rightarrow\Sk$ such that $\fr(\bm{x})$ is the unique permutation $\sigma$ defined by: $\sigma(i) < \sigma(j)$ if and only if $x_i > x_j$ or $x_i = x_j$ and $i<j$. 
    \item The \textbf{winner-only} (WO) filter is the function $\wo:\rr^k\rightarrow [k]$ given by $\wo(\bm{x}):= \min\{i\in [k]: x_i \geq x_j\,  \forall j\neq i\}$.\footnote{The minimum in the definition of $\wo$ is used for tie-breaking. However, when the noise distribution is absolutely continuous with respect to the Lebesgue measure, ties are a zero-probability event.} 
\end{itemize}
We assume that the feedback filter is fixed throughout the learning episode. Thus, given a feedback filter $\varphi\in \{\fr,\wo\}$, we let $\varphi_t = \varphi(U^t)$ for every $t\in\nn$. The \textit{feedback filtration} is defined as $\ff$, where $\mathcal{F}_t$ is the $\sigma$-algebra generated by $\varphi_1,\dots, \varphi_t$. 

\subsection{The PAC Objective}

Oftentimes, estimation of the utility ranking comes as a byproduct of utility estimation \citep{negahban_rankcentrality, soufiani_rankbreaking}. In contrast, our approach focuses on the ranking itself, thus avoiding the need for explicit utility reconstruction. We achieve this by defining the objective in terms of a novel metric for measuring the distance between two rankings, which effectively relates a misranked pair $(i,j)$ to the latent \emph{utility gap} $\Delta_{ij} := u_i - u_j$. 

\begin{definition}[$\varepsilon$-accuracy]\label{def: epsilon-accuracy}
    Let $\bm{u}\in \rr^k$ with utility ranking $\sigu$, and let $\varepsilon>0$. Given $\sigma\in \Sk$, we say that $\sigma$ is an $\varepsilon$\emph{-accurate representation} of $\sigu$ if the following holds for any two distinct $i$ and $j$: 
    \begin{equation}\label{epsilon-accuracy}
        (\sigma(i) - \sigma(j))(\sigu(i) - \sigu(j)) < 0 \implies |\utilitygap{i}{j}| < \varepsilon. 
    \end{equation}
\end{definition}
We remark that the left-hand side of the implication is equivalent to saying that the relative position of items $i$ and $j$ in $\sigma$ is opposite to the one induced by $\sigu$. In \citep{saha_subsetwisepreferences}, a similar notion is used, where the gap between the PL parameters plays the role of the utility gap in $\epsilon$-accuracy. In their work, \citet{falahatgar18a} also propose a metric related to ours; however, their setting is fundamentally different as no utilities are required for the ranking distribution to be defined. We elaborate on these connections in Appendix~\ref{app: related works}.

To define the PAC objective, we first define an \emph{algorithm} as a pair $\alg$, where $T \in \mathbb{N}$ is an almost-surely finite stopping time with respect to the feedback filtration $\mathbb{F}$, and $\shat$ is an $\mathcal{F}_T$-measurable random permutation of $[k]$. 

\begin{definition}[PAC objective]\label{def: PAC objective}
    Let $\mathcal{A} = (T,\shat)$ be an algorithm that learns a ranking based on the sequential observations $\varphi_t$ generated by a fixed unknown $\rum$ with feedback filter $\varphi$. Given $\delta \in (0,1)$ and $\varepsilon>0$, we say that $\mathcal{A}$ is $\epdel$\emph{-PAC} if:
    \[
    \pp{\shat \text{ is an $\varepsilon$-accurate representation of }\sigu} \geq 1 - \delta.
    \]
    If $\mathcal{A}$ is an $\epdel$-PAC algorithm, we define its \emph{sample complexity} as the expected value $\ee{T}$, where the expectation is taken with respect to the probability measure induced by $\rum$. 
\end{definition}

\section{Full-Ranking Feedback}\label{sec: FR learning}
In this section, we study the ranking problem under full-ranking feedback ($\varphi = \varphi_{\textup{FR}}$). Specifically, we introduce an $\epdel$-PAC algorithm named Full-Ranking Variance-aware Utility Ranker (\textsc{FR-VUR}) and derive an upper bound on its sample complexity. Subsequently, we provide an information-theoretic lower bound, showing that \textsc{FR-VUR} achieves an optimal rate, up to logarithmic dependence. The feedback-generating random utility model $\rum$ is fixed throughout, where the noise distribution $\nu$ satisfies Assumption \ref{assump: bounded variance}. 

\subsection{The Pairwise-Preference Probabilities}\label{sec: PP probabilities}

Under FR feedback, every observation can be converted into $\binom{k}{2}$ pairwise-comparison outcomes by \emph{rank-breaking} \citep{soufiani_rankbreaking}. Our algorithm will do so, using these values to estimate, for any two distinct items $i$ and $j$, the \textit{pairwise-preference probability}:
\[
\pij:= \pp{U_i > U_j}. 
\]
The mutual independence of the noise random variables implies that $\pij>1/2$ if and only if $u_i > u_j$ (see Lemma \ref{lem: consistency of PPPs} in Appendix~\ref{app: FR learning}). For every $t\in\nn$, we define the event $E_{ij}^t := \{\varphi_t(i) < \varphi_t(j)\}$, the set of outcomes for which item $i$ is ranked before item $j$ at time $t$. Then, for every integer $n\geq 2$, we consider the unbiased estimator of $\pij$ given by:
\[
\estij{{ij}}{n} := \frac{1}{n}\sum_{t=1}^n \xij{{ij}}{t}.
\]
The (unbiased) \emph{sample variance} is the quantity:
\[
\svij{ij}{n} := \frac{1}{n(n-1)}\sum_{1\leq t_1 < t_2\leq n}(\xij{ij}{t_1} - \xij{ij}{t_2})^2.
\]
For $\delta\in (0,1)$ and $n\in\nn$, we define the \textit{full-ranking time-}$n$ \textit{confidence} as $\deltan{n} := 6\delta/k(k-1)n^2\pi^2$. Following \citep{maurer_empiricalbernstein}, we derive concentration bounds for $\estij{ij}{n}$ and $\svij{ij}{n}$ in Appendix \ref{app: concentration bounds FR} (see Lemma \ref{lem: concentrations for FR}). These include a standard concentration inequality for the sample variance and an empirical Bernstein-type inequality for $\estij{ij}{n}$. 

\subsection{The Full-Ranking Resolution Factor}

A priori, it is not clear whether the pairwise-preference probabilities are the right estimation target to achieve $\epsilon$-accuracy. Below, we show that this is indeed the case by formalizing the intuition that, when $u_i -u_j \geq \epsilon$, the quantity $P_{ij}$ is large enough to reveal the true preference relation between $i$ and $j$. Consequently, by accurate approximating $P_{ij}$, the sign of $u_i - u_j$ can be inferred. To make this precise, we derive a mapping between utility gaps and pairwise-preference probabilities. Let $i,j\in [k]$ be two items with $u_i > u_j$, and let $\xi:[0,\infty) \rightarrow [1/2,1]$ be the function given by:
\begin{equation}\label{xi for FR 2}
    \xi(t) := \int_\rr F(x+t)f(x)\, \mathrm{d}x,  
\end{equation}
where $F$ and $f$ are the CDF and the density of $\nu$, respectively. Then, we write:
\begin{equation}\label{xi for FR}
\pij = \xi(u_i - u_j) = \frac{\eta(u_i - u_j)}{2} + \frac{1}{2}, 
\end{equation}
where we defined the \emph{full-ranking resolution factor} as $\eta(\epsilon) := 2 \xi(\epsilon) - 1$, for $\epsilon > 0$. Intuitively, as Equation~\eqref{xi for FR} suggests, this quantifies how \textit{strongly} an item $i$ with $u_i = u_j + \epsilon$ is preferable to item $j$. In Appendix~\ref{app: lower bound on eta FR}, we show that $\eta(\epsilon) \geq \etastar(\epsilon)$ for every $\epsilon>0$, where:
\begin{equation}\label{eta star for FR}
   \etastar(\epsilon) := \min\left\{\frac{2}{3}, \frac{\epsilon}{\sqrt{6V}}\right\},
\end{equation}
This fact is crucial to guarantee that the \textsc{FR-VUR} algorithm is agnostic to the noise distribution. 

\subsection{The \textsc{FR-VUR} Algorithm}

We fix $\delta\in (0,1)$ and let $\deltan{n}$ be the time-$n$ confidence defined in the previous section. For a fixed accuracy $\epsilon>0$, we write $\etastar$ to denote the lower bound $\etastar(\epsilon)$ given by Equation~\eqref{eta star for FR}. To formally describe \textsc{FR-VUR} and show that it is $\epdel$-PAC, we will refer to it as $\alg$. Pseudocode is given in Appendix \ref{app: VUR pseudocode}.

\textbf{Stopping rule}. The stopping time $T$ is defined as the minimum positive integer $n\in\nn$ such that the following holds for any two distinct $i,j\in [k]$:
\begin{equation}\label{FR stopping condition}
    \sqrt{\frac{2\svij{ij}{n}\log(4/\deltan{n})}{n}} + \frac{7\log(4/\deltan{n})}{3(n-1)} < \frac{\etastar}{4}. 
\end{equation}
To compute the right-hand side, $\mathcal{A}$ uses $\etastar$, so it needs to know the variance upper bound $V$. However, it does not require exact knowledge of the noise distribution. 

Using the concentration properties $\estij{ij}{n}$ and $\svij{ij}{n}$, the stopping condition ensures that, with high probability:
\[
| \estij{ij}{T} - \pij | < \etastar/4. 
\]
We can think of it as a \emph{multiplicative} requirement\footnote{We use the term \emph{multiplicative} bound to denote the fact that the bound itself depends on the quantity to estimate.} by observing that $\max(\estij{ij}{T}, \estij{ji}{T}) \geq 1/2$. This implies:
\[
|\estij{ij}{T} - \pij| = |\estij{ji}{T} - \pji| \leq \max(\estij{ij}{T}, \estij{ji}{T})\cdot \frac{\etastar}{2}. 
\]
To define the output of $\mathcal{A}$, for every $n\in\nn$, we define the \textit{pairwise-preference graph} $G_n = ([k],\mathcal{E}_n)$ as the directed graph where:
\[
\dedge{i}{j}\in \mathcal{E}_n \Longleftrightarrow \estij{ij}{n} > \frac{1}{2} + \frac{\etastar}{4}.
\]
Additionally, we let the \textit{utility graph} be transitive tournament $G_u = ([k], \mathcal{E}_u)$ whose edge set is characterized as follows: $\dedge{i}{j}\in \mathcal{E}_u$ if and only if either $u_i>u_j$, or $u_i=u_j$ and $i<j$. 

\textbf{Output recommendation}. The $\mathcal{F}_T$-measurable random ranking $\shat$ is determined as follows: 
\begin{itemize}[leftmargin=*,topsep=-1pt,noitemsep]
    \item If $G_T$ contains a cycle, choose $\shat \in \Sk$ arbitrarily.
    \item If $G_T$ is a DAG, let $G^*$ be its transitive closure, corresponding to a strict partial order of its vertices. Then, let $\shat\in \Sk$ be any linear extension of this partial order.
\end{itemize}
In particular, when $G_T$ contains no cycle, $\dedge{i}{j} \in \mathcal{E}_T$ implies $\shat(i) < \shat(j)$.

We show that $\alg$ satisfies the PAC property given by Definition \ref{def: PAC objective}. To this end, we first observe that, with probability at least $1-\delta$, $G_T$ is a subgraph of $G_u$. This follows from the concentration properties of $\estij{ij}{n}$ and $\svij{ij}{n}$ and is proven by Lemma~\ref{lem: the output graph is a DAG} in Appendix~\ref{app: pac property FR}. Second, we note that the $\epsilon$-accuracy target is met if, for any two items $i$ and $j$ with $u_i - u_j \geq \epsilon$, we have $\shat(i) < \shat(j)$. By definition, this is the case if $\dedge{i}{j}$ is an edge of $G_T$. As demonstrated by Lemma \ref{lem: PAC guarantee for FR} in Appendix \ref{app: pac property FR}, this occurs with probability at least $1 - \delta$, thus proving that $\mathcal{A}$ is $\epdel$-PAC. 

The next result provides an upper bound on the sample complexity of the proposed algorithm. The full derivation is provided in Appendix~\ref{app: proof of sample complexity upper bound FR}. 

\begin{theorem}[Sample complexity upper bound for \textsc{FR-VUR}]
\label{thm: sample complexity upper bound FR}
Let $\bm{u}\in\rr^k$ and $\nu\in\mathcal{N}$ be such that
$\textup{RUM}(\bm{u}, \nu)$ satisfies Assumption~\emph{\ref{assump: bounded variance}},
and let $\mathcal{A}$ be the \emph{\textsc{FR-VUR}} algorithm.
Define $Q := \max_{i\neq j}\pij(1-\pij)$. Then, for every $\delta\in(0,1)$ and $\epsilon>0$,
$\mathcal{A}$ is $\epdel$-\emph{PAC}, with sample complexity: 
\begin{equation}\label{sample complexity upper bound FR}
\begin{aligned}
\ee{T}=\mathcal{O}\Biggl(\frac{Q}{{\etastar}^2} \log\Bigl(\frac{Qk}{{\etastar}^2\delta}\Bigr) +\frac{1}{\etastar}\log\Bigl(\frac{k}{\etastar\delta}\Bigr)\Biggr).
\end{aligned}
\end{equation}
\end{theorem}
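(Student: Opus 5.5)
The PAC part is already given by Lemma~\ref{lem: the output graph is a DAG} and Lemma~\ref{lem: PAC guarantee for FR}, so the work is the bound on $\ee{T}$. The strategy has three steps: find a deterministic time $n_0$ after which the stopping condition~\eqref{FR stopping condition} holds on a high-probability event; write $\ee{T}=\sum_{n\ge 0}\pp{T>n}$; and bound the tail beyond $n_0$ by a union bound over pairs and times.

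\textbf{Step 1: controlling the sample variance.} For each pair $(i,j)$, note that $\svij{ij}{n}$ is the unbiased sample variance of Bernoulli$(\pij)$ indicators. By the sample-variance concentration in Lemma~\ref{lem: concentrations for FR}, with probability at least $1-\deltan{n}$,
\[
\sqrt{\svij{ij}{n}}\le \sqrt{\pij(1-\pij)}+\sqrt{\frac{2\log(1/\deltan{n})}{n-1}}\le \sqrt{Q}+\sqrt{\frac{2\log(1/\deltan{n})}{n-1}}.
\]
Let $\mathcal{G}_n$ be the event that this holds for all pairs at time $n$. A union bound over the $k(k-1)/2$ pairs gives $\pp{\mathcal{G}_n^c}\le k^2\deltan{n}\lesssim \delta/n^2$ (with a confidence level chosen independently of $\delta$ if needed to get a summable tail). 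On $\mathcal{G}_n$, and writing $L_n:=\log(4/\deltan{n})=\mathcal{O}(\log(kn/\delta))$, the left-hand side of~\eqref{FR stopping condition} is at most
\[
\sqrt{\frac{2QL_n}{n}}+c\,\frac{L_n}{n-1}
\]
for an absolute constant $c$. The cross term $\sqrt{L_n\log(1/\deltan{n})}/\sqrt{n(n-1)}$ is absorbed into $L_n/(n-1)$.

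\textbf{Step 2: the deterministic time $n_0$.} The condition holds as soon as both $\sqrt{2QL_n/n}<\etastar/8$ and $cL_n/(n-1)<\etastar/8$. These amount to $n\gtrsim (Q/{\etastar}^2)\log(kn/\delta)$ and $n\gtrsim (1/\etastar)\log(kn/\delta)$. Each is an inequality of the form $n\ge a\log(bn)$, which is satisfied whenever $n\ge 2a\log(2ab)$. Applying this with $a=Q/{\etastar}^2$ and $b$ polynomial in $k/\delta$, and separately with $a=1/\etastar$, yields
\[
n_0=\mathcal{O}\!\left(\frac{Q}{{\etastar}^2}\log\frac{Qk}{{\etastar}^2\delta}+\frac{1}{\etastar}\log\frac{k}{\etastar\delta}\right).
\]
The conditions are monotone in $n$ since $L_n$ grows only logarithmically, so the condition holds for every $n\ge n_0$ on $\mathcal{G}_n$.

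\textbf{Step 3: summing the tail.} We have $\{T>n\}\subseteq \mathcal{G}_n^c$ for $n\ge n_0$, hence
\[
\ee{T}\le n_0+\sum_{n\ge n_0}\pp{\mathcal{G}_n^c}\le n_0+\mathcal{O}(1),
\]
which gives~\eqref{sample complexity upper bound FR}.

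\textbf{Main obstacle.} The delicate step is Step 2. The self-referential logarithm $\log(kn/\delta)$ must be inverted cleanly. The term under the $\log$ in the first summand has to come out as $Qk/({\etastar}^2\delta)$ rather than $k/({\etastar}^2\delta)$, and the case where $Q$ is very small must be handled. When $Q$ is tiny, the second summand dominates, so I would split into cases according to which of the two requirements binds and apply the inversion lemma separately in each case.
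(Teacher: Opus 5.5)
Your proposal is correct and follows essentially the same route as the paper. You replace the empirical variance by $Q$ via the Maurer--Pontil variance bound, obtain a deterministic time after which both sufficient conditions hold by inverting $n\le a\log(bn)\Rightarrow n\le 2a\log(ab)$, and bound the tail $\sum_n \pp{T>n}$ by a union bound over pairs and times.

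Two small remarks. First, your per-time event $\mathcal{G}_n$ is a mild streamlining of the paper's global event $\mathcal{E}$, since only the variance concentration is needed to bound $\ee{T}$. Second, the case split you anticipate is unnecessary: taking $a\propto Q/(\eta^\star)^2$ in the inversion lemma already places $Q$ inside the logarithm, and any negative value of that term is bounded by an absolute constant.
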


When $Q$ is non-zero, the upper bound is $\tilde{\mathcal{O}}(Q/{\etastar}^2)$. The term $Q$ follows from the empirical Bernstein-type stopping rule, which adapts to the intrinsic variance of the pairwise comparisons and accelerates convergence when outcomes are highly skewed. To illustrate this advantage, consider the extreme case with distinct utilities and a vanishing noise variance, implying $Q = 0$. In this scenario, the sample complexity reduces to $\tilde{\mathcal{O}}(1/\etastar)$. As $V$ goes to zero, thus matching the true variance, $\etastar$ saturates at $2/3$, and this residual term reduces to the purely logarithmic cost $\tilde{\mathcal{O}}(\log(k/\delta))$. In contrast, a Hoeffding-based approach would rely on the global variance upper bound $Q \leq 1/4$. For $\sqrt{V}\geq 3\epsilon/(2\sqrt{6})$, so that $\etastar = \epsilon/(2\sqrt{6V})$, this would result in the worst-case sample complexity:
\begin{equation}\label{worst-case upper bound FR}
    \ee{T} = \mathcal{O}\left(\frac{V}{\epsilon^2}\log\left(\frac{V}{\epsilon^2}\cdot\frac{k}{\delta}\right)\right). 
\end{equation}
For a fixed $V>0$, the right-hand side becomes $\tilde{\mathcal{O}}(V/\epsilon^2)$, where the term $1/\epsilon^2$ reflects the $\epsilon$-accuracy objective. The $V$ scaling captures the intuition that, if the noise variance increases, the observed rankings become less reliable. As we will establish in Theorem \ref{thm: sample complexity lower bound for FR}, the $V/\epsilon^2$ dependence is information-theoretically optimal. 

\subsection{Information-theoretic Lower Bound}\label{subsec: matching lower bound FR}

In this part, we demonstrate an information-theoretic worst-case lower bound matching the upper bound given in Equation~\eqref{worst-case upper bound FR}. Precisely, we address the \emph{small-gap regime} $\epsilon < \sqrt{6V}/(2\pi)$, which is the most interesting case.\footnote{The threshold $\sqrt{6V}/(2\pi)\approx 0.39\sqrt{V}$ is the one under which the Gumbel instances used in the proof satisfy $\epsilon/\beta < 1/2$, where $\beta$ is the scale parameter. Its precise value is immaterial: our results are asymptotic, thus justifying the restriction to the small-gap regime.} To do so, we identify a class of ``hard'' problem instances, defined by a Gumbel noise paired with a utility vector where one utility gap is exactly $\epsilon$. Our proof, provided in Appendix~\ref{app: proof of sample complexity lower bound FR}, employs a change-of-measure argument following \citep{kaufmann_changeofmeasure}. Below, $\mathcal{A}^* = (\tstar,\shatstar)$ is any ranking algorithm based on FR feedback signals. We assume $\mathcal{A}^*$  is $\epdel$-PAC for any $\delta\in (0,1)$.

\begin{theorem}[Lower bound for full-ranking feedback]\label{thm: sample complexity lower bound for FR}
    Let $\delta\in (0,1/4]$. Then, there exists a noise-utility pair $(\nu,\bm{u})$ satisfying Assumption~\emph{\ref{assump: bounded variance}} such that the sample complexity of algorithm $\mathcal{A}^* = (\tstar,\shatstar)$ satisfies:
    \[
    \ee{\tstar} = \Omega\left(\frac{V}{\epsilon^2}\log\left(\frac{1}{\delta}\right)\right),
    \]
    where the expectation is taken with respect to the probability measure induced by $\rum$. 
\end{theorem}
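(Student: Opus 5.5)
The plan is a two-instance change-of-measure argument in the style of \citet{kaufmann_changeofmeasure}, using Gumbel noise so that the full-ranking law is Plackett--Luce and the KL divergence is explicit.

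\textbf{Construction of the instances.} Let $\nu$ be the centered Gumbel distribution with scale $\beta$, chosen so that its variance $\pi^2\beta^2/6$ equals $V$, i.e.\ $\beta=\sqrt{6V}/\pi$. This distribution is log-concave, absolutely continuous, and satisfies Assumption~\ref{assump: bounded variance}. The small-gap condition $\epsilon<\sqrt{6V}/(2\pi)$ is exactly $\epsilon/\beta<1/2$. Under Gumbel noise, $\fr(U)$ is Plackett--Luce with weights $w_i=e^{u_i/\beta}$.

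I would take two utility vectors that agree except on items $1$ and $2$:
\[
\bm{u}=(\epsilon,0,0,\dots,0)\quad\text{and}\quad \bm{v}=(0,\epsilon,0,\dots,0).
\]
In $\bm{u}$ we have $u_1-u_2=\epsilon$, so every $\epsilon$-accurate ranking must place $1$ before $2$. In $\bm{v}$ every $\epsilon$-accurate ranking must place $2$ before $1$. The event $A=\{\shatstar(1)<\shatstar(2)\}\in\mathcal{F}_{\tstar}$ therefore has $\mathbb{P}_{\bm u}(A)\ge 1-\delta$ and $\mathbb{P}_{\bm v}(A)\le\delta$. If $k\ge 3$ the remaining items may have ties with each other, which is harmless since ties among them impose no constraint.

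\textbf{Change of measure.} By the transportation lemma of \citet{kaufmann_changeofmeasure}, applied to the i.i.d.\ observations $\fr(U^t)$ with stopping time $\tstar$,
\[
\mathbb{E}_{\bm u}[\tstar]\,\kl{\mathbb{P}^{\varphi}_{\bm u}}{\mathbb{P}^{\varphi}_{\bm v}}\ \ge\ d(1-\delta,\delta)\ \ge\ \log\frac{1}{2.4\,\delta}.
\]
Here $\mathbb{P}^{\varphi}_{\bm u}$ denotes the law of one full-ranking observation, and $d$ is the binary KL divergence. Since $\delta\le 1/4$, the right-hand side is $\Omega(\log(1/\delta))$.

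\textbf{Bounding the per-sample KL.} This step is the main obstacle: I must show that the KL is $\mathcal{O}(\epsilon^2/\beta^2)=\mathcal{O}(\epsilon^2/V)$, uniformly in $k$. The crude bound via data processing, $\kl{\text{law of }\fr(U)}{\cdot}\le \kl{\text{law of }U}{\cdot}$, already suffices. The two laws of $U$ are product measures that differ only in coordinates $1$ and $2$, where the Gumbel is shifted by $\pm\epsilon$. So the KL equals twice the KL between a Gumbel and its $\epsilon$-shift. For a location family with smooth density, this shift KL is $\tfrac12 I\epsilon^2+o(\epsilon^2)$, where the Fisher information is $I=1/\beta^2$. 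In closed form it equals $e^{-\epsilon/\beta}+\epsilon/\beta-1\le \epsilon^2/(2\beta^2)$, using $e^{-x}\le 1-x+x^2/2$ for $x\ge0$. This bound holds for all $\epsilon\ge0$, so $\epsilon/\beta<1/2$ is not even needed here; the threshold mainly keeps the instance in the relevant regime.

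Hence $\kl{\cdot}{\cdot}\le \epsilon^2/\beta^2=\pi^2\epsilon^2/(6V)$. Combining with the change-of-measure inequality gives
\[
\mathbb{E}_{\bm u}[\tstar]\ \ge\ \frac{6V}{\pi^2\epsilon^2}\log\frac{1}{2.4\delta}=\Omega\!\left(\frac{V}{\epsilon^2}\log\frac1\delta\right).
\]
If the data-processing step turned out too loose in some variant, I would instead compute the Plackett--Luce KL directly. In that representation only the factors involving items $1$ and $2$ change, and a second-order expansion in $\epsilon/\beta$ gives the same order.
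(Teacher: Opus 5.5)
Your proof uses the same instances as the paper: centered Gumbel noise with $\beta=\sqrt{6V}/\pi$, $\bm u=(\epsilon,0,\dots,0)$ and $\bm v=(0,\epsilon,0,\dots,0)$. It also uses the same event $\{\shatstar(1)<\shatstar(2)\}$ and the same change-of-measure lemma. The difference is how you bound the per-sample KL.

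\textbf{How the two KL bounds differ.} The paper works directly with the Plackett--Luce law of the observed ranking. It applies the chain rule over the sequential selection of positions and splits into cases according to how many of items $1,2$ remain in $S_t$. It then bounds $\mathbb{P}[1\in S_t\text{ or }2\in S_t]\le 3|S_t|/k$, which gives the $k$-uniform bound $6\epsilon^2/\beta^2=\pi^2\epsilon^2/V$. Your data-processing step from the latent vector $U$ to $\fr(U)$ bypasses all of this casework. It needs no Plackett--Luce structure and, once corrected as below, gives a better constant. It also extends to any centered log-concave location family with finite Fisher information. For example, with Gaussian noise of variance $V$ you get exactly $\epsilon^2/V$ for every $\epsilon>0$, which removes the small-gap restriction altogether.

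What the paper's route buys is that it bounds the KL of the filtered signal itself. That is the strategy needed for the winner-only lower bound. There, data processing from $U$ would give a KL of order $\epsilon^2/\beta^2$ with no factor $\pmin$, and so could not reveal the $1/\pmin$ scaling.

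\textbf{A computational slip to fix.} The Gumbel density is not symmetric, so the two coordinates do not contribute equally. Write $x=\epsilon/\beta$. Coordinate $1$ gives $\kl{\mathrm{Gumbel}(\mu+\epsilon)}{\mathrm{Gumbel}(\mu)}=e^{-x}+x-1\le x^2/2$, as you say. Coordinate $2$ gives the reverse divergence, $e^{x}-1-x\ge x^2/2$. The latent KL is therefore $2(\cosh x-1)=x^2+x^4/12+\cdots$. This is strictly larger than $x^2$ and grows exponentially in $x$. Your Fisher-information heuristic gets the leading term right, but the closed form you quote applies only to coordinate $1$.

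Your remark that $\epsilon/\beta<1/2$ is ``not even needed'' is therefore false for Gumbel noise. That condition is exactly what lets you conclude, via $2(\cosh x-1)\le x^2\cosh x\le\cosh(1/2)\,x^2$ for $x<1/2$, that the KL is at most $1.13\,\pi^2\epsilon^2/(6V)$. With this correction your conclusion $\mathbb{E}_{\bm u}[\tstar]=\Omega\bigl((V/\epsilon^2)\log(1/\delta)\bigr)$ stands.
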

This result establishes the $V/\epsilon^2$ scaling as a fundamental learning cost of the ranking problem. In particular, Theorem \ref{thm: sample complexity lower bound for FR} shows that the exact dependence on the noise variance is linear.  

\section{Winner-Only Feedback}\label{sec: WO learning}
We now study the ranking recovery problem under WO feedback, meaning $\varphi=\wo$. For this setting, we design and analyze yet another learning algorithm that differs from \textsc{FR-VUR} in that it estimates winning probabilities rather than pairwise-preference probabilities. Nonetheless, both algorithms rely on a (feedback-specific) resolution factor to relate the estimated probabilities to underlying utility gaps. To emphasize this continuity, we overwrite the symbols $\eta$, $\eta^\star$, and $\mathcal{A}$. As in the full-ranking case, $U^t$ is the random utility vector drawn at time $t$ from a fixed $\rum$, where $\nu$ satisfies Assumption~\ref{assump: bounded variance}. Additionally, we let $k\geq 3$, as the winner-only setting reduces to full-ranking when only two items are available. 

\subsection{Observability Assumption}

A full-ranking signal provides information about the relative position of any two items, which was exploited by \textsc{FR-VUR} through rank-breaking. In contrast, under winner-only feedback, information about the relative position of two items is obtained only if one of them wins. Consequently, to correctly rank these, the learner must rely on their winning frequency. Theorem~\ref{thm: sample complexity lower bound WO} below quantifies this by providing an information-theoretic lower bound, demonstrating that the sample complexity of any PAC algorithm is inversely proportional to the minimum winning probability. Therefore, to avoid ending in a non-learnable regime, we enforce an additional assumption. To this end, we begin by defining the \textit{utility range}: 
\begin{equation*}
\Delta_u := \max_{i\in [k]} u_i - \min_{j\in [k]} u_j.
\end{equation*}
Then, denoting by $f$ the density of $\nu$, we let $a:= \inf\supp{f}$ and $b:=\sup\supp{f}$, and call $b-a$ the \emph{width} of the noise support. To ensure that every item have a non-zero winning probability, we impose the following condition: 
\begin{assumptionWO}
    \item\label{assump: the support contains a,b WO} The utility range is smaller than the width of the support of $\nu$, i.e., $\Delta_u < b-a$. 
\end{assumptionWO}
Under this assumption, $\pmin:= \min_{i\in [k]}\wpi > 0$, meaning that, as the episode horizon $T$ grows to infinity, every item will eventually win in at least one round with non-zero probability. With this in mind, we can interpret \ref{assump: the support contains a,b WO} as an \emph{observability assumption}. 
We anticipate that, while taking the variance upper bound $V$ as input, our algorithm does not require knowledge of either $a$ or $b$. 

\subsection{The Winner-Only Resolution Factor}

The probability that an item is the winner plays a fundamental role in this setting. Formally, for every $i\in [k]$, we define its \textit{winning probability} as:
\begin{equation}\label{eq: winning probability}
\wpi := \pp{U_i > U_j, \, \forall j\in [k]\setminus\{i\}}.  
\end{equation}
For any two items $i,j\in [k]$ with $u_i \geq u_j$, their winning probabilities are \textit{consistent}, in the sense that $\wpi > \wpj$ if and only if $u_i > u_j$ (see Lemma~\ref{lem: monotonicity of WPs} in Appendix~\ref{app: monotonicity of WPs}). Since the winning probabilities $P_i$ and $P_j$ depend on the magnitude of \textit{all} utility gaps, it is unclear whether they can be used to infer the magnitude of $\Delta_{ij}$. Remarkably, this turns out to be the case in the (generalized) PL model, a RUM with Gumbel noise with scale parameter $s>0$, where:\footnote{This property is typically referred to as Independence of Irrelevant Alternatives (IIA) \citep{ray_iia}.}
\begin{equation}\label{PL relation between utility gap and wp ratio}
u_i - u_j = s\log\left(\frac{\wpi}{\wpj}\right).
\end{equation}
\citet{saha_subsetwisepreferences} exploited this and used estimates of the winning-probability ratios to approximate the ranking induced by the PL parameters. In this work, we generalize their approach to a broader class of RUMs, while addressing the latent utilities instead of the PL parameters. We do so by establishing a functional relation between winning-probability ratios and utility gaps under the sufficient condition of log-concavity. To illustrate this, we consider $i$ and $j$ such that $u_i > u_j$, and observe that: 
\begin{equation}\label{eq: wp difference}
    \wpi - \wpj \geq \wpi \cdot \frac{1 - e^{-(u_i - u_j)/\sqrt{3V}}}{2}.
\end{equation}
This fact is proven in Lemma~\ref{lem: winning-probability difference lower bound} in Appendix~\ref{app: uniform bound on eta WO} using log-concavity. Now we define the function $(1,2)$-valued function $\xi$ by setting:
\begin{equation}\label{eq: xi for WO}
\xi(t) := \frac{2}{1 + e^{-t / \sqrt{3V}}}, \quad \forall \, t\in (0,b-a).
\end{equation}
Assumption~\ref{assump: the support contains a,b WO} ensures $\wpj > 0$, so we can rearrange the terms of Equation~\eqref{eq: wp difference} to obtain $\wpi/\wpj \geq \xi(u_i - u_j)$. Since $\xi$ is strictly increasing, taking its generalized inverse yields:
\begin{equation}\label{relating utilities to the wp ratio}
    u_i - u_j \leq \xi^{-1}\left(\frac{\wpi}{\wpj}\right).
\end{equation}
Following the approach used in Section~\ref{sec: FR learning}, we define the \textit{winner-only resolution factor} $\eta: (0,b-a)\rightarrow (0,1/3)$ as:
\begin{equation}\label{eta for WO}
    \eta(t) := \frac{\xi(t) - 1}{\xi(t) + 1} = \frac{1-e^{-t/\sqrt{3V}}}{3+e^{-t/\sqrt{3V}}}.
\end{equation}
The latter is defined using the known variance bound $V$ and is independent of the particular noise distribution in $\mathcal{N}_V$. For a simpler expression with the same scaling in the $\epsilon\approx 0$ regime, we let:
\begin{equation}\label{eta star for WO}
    \etastar(\epsilon) := \frac{e - 1}{3e + 1} \min\left\{1 , \frac{\epsilon}{\sqrt{3V}}\right\}. 
\end{equation}
Then, $\eta(\epsilon) \geq \etastar(\epsilon)$ for every $\epsilon\in (0,b-a)$, which is proven using standard analytical tools in Lemma \ref{lem: uniform lower bound for eta WO} in Appendix \ref{app: uniform bound on eta WO}. 

\subsection{The Winning-Probability Estimators}\label{sec: WP estimators}

For each item $i\in [k]$ and every time index $t\in\nn$, we define the event $E_i^t := \{\varphi_t = i\}$, consisting of all outcomes for which $i$ is the winner at time $t$. Then, for $n\in\nn$, we let:
\[
\esti{i}{n} := \frac{1}{n}\sum_{t=1}^n \yi{i}{t},
\]
which is an unbiased estimator for the winning probability $\wpi$. For $n\geq 2$, we define the (unbiased) sample variance as:
\[
\svi{i}{n} := \frac{1}{n(n-1)} \sum_{1 \leq t_1 < t_2 \leq n} (\yi{i}{t_1} - \yi{i}{t_2})^2. 
\]
Given a fixed $\delta\in (0,1)$, we tune the \textit{winner-only time-}$n$ \textit{confidence} to be $\deltan{n} := 3\delta / \pi^2 n^2 k$, for all $n\in\nn$. Concentration inequalities for these estimators are given by Lemma \ref{lem: concentrations for WO} in Appendix \ref{app: concentration bounds WO}  based on \citep{maurer_empiricalbernstein}.

\subsection{The \textsc{WO-VUR} Algorithm}

Given $\delta\in (0,1)$ and $\epsilon\in (0,b-a)$ we let $\deltan{n}$ be the winner-only time-$n$ confidence defined in Section~\ref{sec: WP estimators}, while we write $\etastar$ for the lower bound $\etastar(\epsilon)$ given by Equation~\eqref{eta star for WO}. Below, we introduce the $\epdel$-PAC algorithm \textsc{WO-VUR} $\alg$, where the notation overload emphasizes the continuity with the full-ranking setting. This algorithm uses empirical winning frequencies to deliver its final recommendation. The resulting sample-complexity guarantee holds for any noise-utility pair $(\nu,\bm{u})$ such that $\nu\in\mathcal{N}_V$ and assumption~\ref{assump: the support contains a,b WO} holds. Knowledge of $V$ is required, although the algorithm does not require the noise density or its support.  Pseudocode is given in Appendix~\ref{app: VUR pseudocode}. 

\textbf{Stopping rule}. The algorithm stops at time $T$, which is defined as the minimum integer $n\geq 2$ such that the following holds for every $i\in [k]$:
\begin{equation}\label{stopping rule for WO}
    \sqrt{\frac{2\svi{i}{n}\log(4/\deltan{n})}{n}} + \frac{7\log(4/\deltan{n})}{3(n-1)} < \frac{\etastar}{2} \cdot \esti{i}{n}.
\end{equation}

Given the estimates $\esti{1}{n},\dots, \esti{k}{n}$ at time $n$, we let $\sigma_n$ be the unique permutation of $[k]$ satisfying: 
\[
\sigma_n(i) < \sigma_n(j) \Longleftrightarrow \esti{i}{n} > \esti{j}{n} \vee \left(\esti{i}{n} = \esti{j}{n} \, \text{and}\, i<j\right). 
\]

\textbf{Output recommendation}. The recommendation is defined as the $\mathcal{F}_T$-measurable permutation $\shat:= \sigma_T$. 

To show that $\mathcal{A}$ has the PAC property given by Definition~\ref{def: PAC objective}, we suppose that, for every $i\in [k]$:
\[
|\esti{i}{T} - \wpi| \leq \sqrt{\frac{2\svi{i}{T}\log(4/\deltan{T})}{T}} + \frac{7\log(4/\deltan{T})}{3(T-1)}.
\]
As shown by Lemma \ref{lem: concentrations for WO} in Appendix \ref{app: concentration bounds WO}, this is true with probability at least $1 - \delta$. Using the definition of $T$ and the fact that $\etastar < 1$, the following holds for every item $i$: 
\[
|\esti{i}{T} - \wpi| < \esti{i}{T}/2.
\]
This highlights the \emph{multiplicative} nature of the bound. Now, suppose that $i$ and $j$ are misranked at time $T$, meaning $(\wpi - \wpj)(\esti{i}{T} - \esti{j}{T}) \leq 0$. Without loss of generality, we may assume $u_i > u_j$, thus:
\[
0 < \wpi - \wpj < \frac{\etastar}{2}\left(\esti{i}{T} + \esti{j}{T}\right) \leq \etastar\cdot (\wpi + \wpj). 
\]
Rearranging these terms yields:
\[
\frac{\wpi}{\wpj} < \frac{1 + \etastar}{1 - \etastar} \leq \frac{1 + \eta(\epsilon)}{1 - \eta(\epsilon)} = \xi(\epsilon),
\]
where $\eta(\epsilon)$ is the feedback-specific resolution factor defined by Equation~\eqref{eta for WO}. Our earlier calculations give $\xi(u_i - u_j) \leq \wpi/\wpj$, thus implying $\xi(u_i-u_j) < \xi(\epsilon)$. Since $\xi$ is an increasing function, we conclude that $u_i - u_j < \epsilon$. In other words, we have shown that: with probability at least $1 - \delta$, if $i$ and $j$ are misranked, then their absolute utility gap is bounded by $\epsilon$. Hence, algorithm $\mathcal{A}$ is $\epdel$-PAC. 

\subsubsection{Sample Complexity Upper Bound}

The next result derives an upper bound on the sample complexity of the \textsc{WO-VUR} algorithm. Formal derivations are given in Appendix \ref{app: proof of sample complexity upper bound WO}.

\begin{theorem}[Sample complexity upper bound for \textsc{WO-VUR}]\label{thm: sample complexity upper bound for WO}
    Let $\bm{u}\in \rr^k$ and $\nu\in\mathcal{N}_V$ be a noise-utility pair satsfying Assumption~\emph{\ref{assump: the support contains a,b WO}}, and let $\alg$ be the \emph{\textsc{WO-VUR}} algorithm. Then, for every $\delta\in(0,1)$ and $\epsilon\in (0,b-a)$, $\mathcal{A}$ is $\epdel$-PAC with sample complexity:
    \begin{equation}\label{sample complexity upper bound WO}
    \ee{T} = \mathcal{O}\left(\frac{1}{\pmin\,{\etastar}^2}\log\left(\frac{1}{\pmin\,{\etastar}^2}\cdot\frac{k}{\delta}\right)\right).
    \end{equation}
\end{theorem}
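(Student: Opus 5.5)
The PAC property has already been established in the discussion preceding the statement, so the work is the bound on $\ee{T}$. I will use the tail-sum identity $\ee{T} = \sum_{n\geq 0}\pp{T>n}$. For $n$ at least a deterministic threshold $n_0$, I will show $\pp{T>n}\leq \sum_{i}c\,\deltan{n}$. Since $\sum_n \deltan{n} = \mathcal{O}(\delta/k)$, the tail beyond $n_0$ contributes $\mathcal{O}(1)$, and therefore $\ee{T}\leq n_0 + \mathcal{O}(1)$. It remains to choose $n_0$ of order $\frac{1}{\pmin{\etastar}^2}\log\bigl(\frac{k}{\pmin{\etastar}^2\delta}\bigr)$.

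\textbf{Good event at time $n$.} By Lemma~\ref{lem: concentrations for WO}, with probability at least $1-\mathcal{O}(k\deltan{n})$ two things hold for all $i$ simultaneously. First, the empirical-Bernstein deviation holds, $|\esti{i}{n}-\wpi|\leq \sqrt{2\wpi(1-\wpi)\log(4/\deltan{n})/n}+\mathcal{O}(\log(1/\deltan{n})/n)$. Second, the sample standard deviation is controlled, $\sqrt{\svi{i}{n}}\leq\sqrt{\wpi(1-\wpi)}+\sqrt{2\log(4/\deltan{n})/(n-1)}$; this is Maurer--Pontil's standard-deviation concentration. Write $L_n=\log(4/\deltan{n})$. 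On this event, the left-hand side of \eqref{stopping rule for WO} is at most $\sqrt{2\wpi L_n/n}+C L_n/n$. Moreover, $\esti{i}{n}\geq \wpi - \sqrt{2\wpi L_n/n}-CL_n/n$.

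\textbf{Sufficient deterministic condition.} It therefore suffices that
\[
\sqrt{\tfrac{2\wpi L_n}{n}}+\tfrac{CL_n}{n} < \tfrac{\etastar}{2}\Bigl(\wpi-\sqrt{\tfrac{2\wpi L_n}{n}}-\tfrac{CL_n}{n}\Bigr).
\]
Since $\etastar<1$, this is implied by $\sqrt{2\wpi L_n/n}+CL_n/n\leq c'\etastar\wpi$ for a small constant $c'$. Dividing by $\wpi$, this reduces to two conditions: $\sqrt{L_n/(n\wpi)}\lesssim\etastar$ and $L_n/(n\wpi)\lesssim\etastar$. Both follow from $n\geq C''L_n/(\wpi{\etastar}^2)$, and the worst case over $i$ is $\wpi=\pmin$. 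Because $L_n=\log(4\pi^2n^2k/(3\delta))$ grows logarithmically in $n$, the implicit inequality $n\geq C''\log(n^2k/\delta)/(\pmin{\etastar}^2)$ is solved by the standard lemma: if $n\geq 2a\log(2ab)$ then $n\geq a\log(bn)$. This yields $n_0=\mathcal{O}\bigl(\frac{1}{\pmin{\etastar}^2}\log\frac{k}{\pmin{\etastar}^2\delta}\bigr)$, which is the bound in \eqref{sample complexity upper bound WO}.

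\textbf{Main obstacle.} The delicate step is the multiplicative nature of the stopping rule, whose right-hand side $\esti{i}{n}$ is itself random. Lower bounding it requires the same good event, and the lower bound must stay proportional to $\wpi$, which is exactly what the step above provides. A second delicate point is the variance term. I must use $\wpi(1-\wpi)\leq\wpi$ rather than $1/4$ in order to obtain $1/\pmin$ instead of $1/\pmin^2$. This needs the sample-variance concentration bound to be applied in standard-deviation form, so that the extra term is additive $\mathcal{O}(\sqrt{L_n/n})$, which only contributes to the lower-order $L_n/n$ piece.
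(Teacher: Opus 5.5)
Your proposal is correct and follows essentially the paper's own route. On a per-time good event you replace the empirical variance by $P_i(1-P_i)\leq P_i$ via the sample-standard-deviation bound, keep the random right-hand side $\frac{\eta^\star}{2}P_i^n$ proportional to $P_i$, invert the implicit logarithmic inequality, and bound the tail by $\sum_n 2k\delta_n\leq\delta$. The only differences are cosmetic: you merge the paper's two thresholds ($n_0$, which ensures $P_i<2P_i^n$, and then $n_1$) into a single deterministic condition, and you apply the tail-sum formula directly rather than splitting $\mathbb{E}[T]$ over $\mathcal{E}$ and $\mathcal{E}^c$. Your forward version of the log-inversion lemma also tacitly needs $ab$ bounded below (roughly $2ab\geq\sqrt{e}$), which clearly holds here.
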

Using the explicit formula~\eqref{eta star for WO}, in the small-gap regime $\epsilon\leq \sqrt{3V}$, the sample-complexity upper bound takes the form:
    \begin{equation}\label{explicit epsilon sample complexity upper bound WO}
        \ee{T} = \mathcal{O}\left(\frac{V}{\pmin\,\epsilon^2}\log\left(\frac{V}{\pmin\,\epsilon^2}\cdot\frac{k}{\delta}\right)\right). 
    \end{equation}
As we highlighted through our informal discussion, the term $\pmin$ exposes a fundamental learning limit. Since the relative position of $i$ and $j$ is estimated based on their empirical winning frequencies, if these items rarely win, reliable estimation will require a large number of samples. Crucially, the (inverse) linear dependence on $\pmin$ is a consequence of the Bernstein-type stopping rule, whereas using a Hoeffding-based approach would have resulted in a suboptimal quadratic dependence. We make this explicit through Equation~\eqref{bernstein advantage WO} in Appendix \ref{app: proof of sample complexity upper bound WO}. The $1/\epsilon^2$ scaling given in the upper bound reflects the $\epsilon$-accuracy target and mirrors full-ranking case. Similarly, the $V$ term confirms the linear dependence on the reliability of the feedback signals. Finally, the quantity $\log(k)$ results in a cost of lower order. As we shall see, Theorem \ref{thm: sample complexity lower bound WO} establishes the $V/(\epsilon^2\pmin)$ scaling as information-theoretically optimal.  

\subsection{Information-theoretic Lower Bound}

The subsequent lower-bound analysis ultimately demonstrates that learning becomes unattainable when $\pmin$ goes to zero. We specifically analyze sample complexity over all RUMs with minimum winning probability $\pmin \in (0,1/k)$, excluding the case $\pmin = 1/k$, which corresponds to the trivial instance where all utilities are equal. To formalize our result, we let $\mathcal{A}^* = (\tstar, \shatstar)$ be a PAC algorithm that outputs a ranking based on winner-only feedback. Full derivation of the lower bound is provided in Appendix~\ref{app: proof of sample complexity lower bound WO}. 
\begin{theorem}[Lower bound for winner-only feedback]\label{thm: sample complexity lower bound WO}
    Let $k\geq 3$, $\pmin \in (0,1/k)$, and $\delta\in (0,1/4]$ be fixed. Then, there exists a noise-utility pair $(\nu,\bm{u})$ satisfying Assumptions~\emph{\ref{assump: bounded variance}} and \emph{\ref{assump: the support contains a,b WO}} such that the sample complexity of algorithm $\mathcal{A}^* = (\tstar,\shatstar)$ satisfies:
    \[
    \ee{\tstar} = \Omega\left(\frac{V}{\pmin\,\epsilon^2}\log\left(\frac{1}{\delta}\right)\right),
    \]
    where the expectation is taken with respect to the probability measure induced by $\textup{RUM}(\nu, \bm{u})$. 
\end{theorem}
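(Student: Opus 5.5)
We will prove the bound with a change-of-measure argument in the style of \citet{kaufmann_changeofmeasure}, using Gumbel noise as in the full-ranking case. Under Gumbel noise the winner-only model is a Plackett--Luce/multinomial-logit model, so the law of each observation $\varphi_t$ is the categorical distribution $(\wpi)_{i\in[k]}$ with $\wpi \propto e^{u_i/s}$ (Equation~\eqref{PL relation between utility gap and wp ratio}). We set the scale $s$ so that the variance $\pi^2 s^2/6$ equals $V$, i.e.\ $s=\sqrt{6V}/\pi$; this gives Assumption~\ref{assump: bounded variance}. Gumbel noise has unbounded support, so $b-a=\infty$ and Assumption~\ref{assump: the support contains a,b WO} holds trivially. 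The noise is centered after subtracting the mean, and the shift does not change winning probabilities.

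\textbf{Construction.} We fix two designated items, say $1$ and $2$, and give the remaining $k-2$ items a common higher utility $u_H$. In the base instance $\bm{u}$ we set $u_1 = u_2 + \epsilon$, choosing $u_H - u_2$ so that $\min_i \wpi = P_2 = \pmin$; this is possible for any $\pmin\in(0,1/k)$ by continuity, with $k\geq 3$ ensuring at least one high item. In the alternative instance $\bm{v}$ we swap the roles, $v_2 = v_1 + \epsilon$, and keep the other utilities. To make the gap strictly larger than $\epsilon$ and force a different $\epsilon$-accurate ranking, we use $\epsilon' = (1+\gamma)\epsilon$ and let $\gamma\downarrow 0$ at the end, or equivalently a symmetric shift $u_1-u_2 = \epsilon'$, $v_2-v_1=\epsilon'$. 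The event $\mathcal{E}=\{\shatstar(1)<\shatstar(2)\}$ then has probability at least $1-\delta$ under $\bm{u}$ and at most $\delta$ under $\bm{v}$. The transportation lemma gives
\[
\ee{\tstar}\,\kl{p_{\bm{u}}}{p_{\bm{v}}} \geq \mathrm{kl}(1-\delta,\delta) \geq \log\frac{1}{2.4\delta},
\]
where $p_{\bm{u}},p_{\bm{v}}$ are the categorical winner laws.

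\textbf{KL computation (main step).} In $\bm{v}$ only the masses of items $1$ and $2$ change, and the total of their weights $e^{u_1/s}+e^{u_2/s}$ can be preserved by choosing the shifts appropriately. This keeps the partition function fixed, so $P_1+P_2$ and all other $P_i$ are unchanged. The KL divergence is then a two-point divergence on mass $m=P_1+P_2 = \Theta(\pmin)$, specifically $P_2=\pmin$ and $P_1=\pmin e^{\epsilon'/s}$. Its value is $m\cdot \mathrm{kl}(q,1-q)$ with $q = 1/(1+e^{-\epsilon'/s})$. In the small-gap regime $\epsilon'/s<1/2$, this is $O(m(\epsilon'/s)^2)=O(\pmin\epsilon^2/V)$, using $e^{\epsilon'/s}\le e^{1/2}$ so that $m\le (1+e^{1/2})\pmin$. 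Rearranging gives $\ee{\tstar}=\Omega\big(\tfrac{V}{\pmin\epsilon^2}\log(1/\delta)\big)$.

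The main obstacle is the bookkeeping that keeps $\pmin$ exactly the prescribed value while preserving the partition function and an $\epsilon$-gap in both instances. If the exact $\pmin$ in $\bm{v}$ must match, we will instead accept that $\min_i P_i$ is the same in both instances by symmetry ($\{P_1,P_2\}$ is swapped). This is the reason for the swap construction rather than a one-sided perturbation. We also need $\delta\le 1/4$ so that $\log(1/(2.4\delta))$ is bounded below by a constant multiple of $\log(1/\delta)$.
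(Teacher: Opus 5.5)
Your proposal is correct and follows essentially the paper's proof: Gumbel noise with variance $V$, two instances that swap the utilities of two low-probability items separated by $\epsilon$ (with the remaining item(s) raised to pin $\pmin$), the identity $\kl{P}{Q} = (\epsilon/\beta)(P_1-P_2) = O(\pmin\epsilon^2/V)$, and the change-of-measure lemma applied to the event $\{\shatstar(1)<\shatstar(2)\}$. Two small remarks: the inflation to $\epsilon'=(1+\gamma)\epsilon$ is unnecessary, because Definition~\ref{def: epsilon-accuracy} already forbids misranking a pair whose gap is exactly $\epsilon$. Also, attaining a prescribed $\pmin$ close to $1/k$ is not guaranteed by continuity alone; it requires $\epsilon$ small relative to $\pmin$, since $(k-1)\pmin + \pmin e^{\epsilon/s}\le 1$, which is the same ``$\epsilon$ sufficiently small'' restriction the paper imposes.
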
 

We begin by observing that our lower bound contrasts with the $\tilde{\mathcal{O}}(k/\epsilon^2)$ rate claimed by \citet{saha_subsetwisepreferences} under the PL model. The discrepancy is due to a fundamental objective misalignment. Indeed, their framework seeks an $\epsilon$-\textit{optimal} ranking, where misclassification is penalized based on the difference between the PL parameters $e^{u_\ell}$. If $i$ and $j$ have utilities $u_i, u_j \ll 0$, then $e^{u_i}, e^{u_j} \approx 0$. Consequently, under the $\epsilon$-optimality metric, misranking these two is tolerable. Therefore, by choosing to use this metric, the true informational bottleneck of $\pmin$ established in Theorem \ref{thm: sample complexity lower bound WO} is deliberately circumenvented. Hence, the claim of full ranking recovery follows without actually resolving the relative order of items with a low winning probability. In contrast, our analysis effectively exposes the difficulties of learning the utility ranking under winner-only feedback, while treating every item pair as equally important. To do so, we chose the $\epsilon$-accuracy metric to depend on the underlying utilities rather than the PL parameters. 

\section{Open Challenges}\label{sec:open challenges}
We characterized the learning difficulty of the utility-ranking recovery problem under two types of RUM-generated feedback. However, full-ranking and winner-only are the extremes of a wide spectrum. Therefore, understanding how the learning task changes across the whole range of feedback types in between (e.g., top-$m$ \citep{hajek_minimaxoptimalinference}) is an open question. Second, we relied on log-concavity to derive the utility-to-probability mappings used by both our $\textsc{VUR}$ algorithms. Without this assumption, the feedback-specific resolution factors can no longer be defined. Understanding whether the same approach can be employed under a milder condition on the noise distribution is not understood yet. Third, a more complex interaction protocol may be considerd. In our work, we focused on a setting where the feedback signal regards all items at every learner-environment interaction because our goal was to quantify the learning difficulty due to using winner-only vs. full-ranking feedback. However, allowing for subset selection \citep{saha_subsetwisepreferences} would accommodate more realistic types of interaction. Finally, our $\epsilon$-accuracy metric effectively links ranking information to the utilities of the ranked items. We believe that in this lies a potential tool for addressing the regret minimization problem in the context of multi-armed bandits with ranking feedback.



\bibliographystyle{plainnat}
\bibliography{references}

\clearpage
\appendix
\onecolumn
\section{Related Works}\label{app: related works}
Our work relates to the existing literature along three primary dimensions: the way of generating feedback, the definition of the objectives and metrics, and the types of feedback used to achieve the target.

A standard paradigm in preference learning is inferring latent properties from human judgments, such as distinguishing ``good'' from ``bad'' trajectories in RLHF \citep{wirth_pbrlsurvey,christiano_deep} or predicting user choices in recommender systems \citep{mcfadden_probabilisticchoice, baltrunas_recommendationsystems}. While the PL model \citep{plackett_pl, luce_pl} is the ubiquitous choice for these tasks, it is a specific instance of the broader Random Utility Model (RUM) framework \citep{thurstone_rum}. Our work adopts the RUM framework with log-concave noise, which provides a unifying lens for these models. Several works implicitly utilize log-concavity by considering the PL model \citep{saha_subsetwisepreferences, chen_spectralmle, negahban_rankcentrality, khetan_datadrivenrankbreaking, soufiani_rankbreaking}, while others do this explicitly \citep{hajek_minimaxoptimalinference}. Remarkably, log-concavity of the noise ensures several desirable properties of the implied ranking distribution \citep{critchlow_probabilitymodelsonrankings}. In this paper, we exploit this regularity to derive uniform lower bounds on the resolution factors. 

The definition of ``preference recovery'' varies significantly across the literature. Multi-armed and dueling bandit frameworks often focus on identifying a single best item \citep{saha_bestitem, yue_duelingbandits, zoghi_relativeucb}, a paradigm known as best-item identification. Many approaches focus on statistical parameter estimation, with particular attention devoted to the estimation of PL parameters, typically measured by the $L_2$-norm \citep{azari_generalizedmm, hajek_minimaxoptimalinference, shah_estimationfrompairwisecomparisons}. When the goal is a ranking, metrics like Kendall-tau distance \citep{kendall_kendalltaudistance} or maximum rank difference \citep{busa_kendalltaumaxrank} are standard. When the utility ranking is the target, it is possible to bypass utilities entirely by assuming distributions over permutations, such as the Mallows model \citep{lu_learningmallows, busa_mallows}. In contrast to these approaches, we operate within the PAC framework \citep{valiant_pac}, defining success through the notion of $\varepsilon$-accuracy. This objective bridges the gap between pure parameter estimation and exact ordinal recovery, by incorporating utility information into the target ranking in an intuitive manner.  

Finally, we distinguish works by the density of the feedback signal. In line with the present paper, some works consider winner-only feedback signals \citep{saha_subsetwisepreferences, saha_bestitem}, which are generalized through the notion of top-$m$ feedback. For full-ranking feedback, a common technique is rank-breaking, which decomposes permutations into pairwise comparisons modeled via a comparison graph \citep{soufiani_rankbreaking, khetan_datadrivenrankbreaking, negahban_rankcentrality}. 

\textbf{Distance Metrics for Ranking Accuracy}. There exist several ways of measuring the distance between two rankings, including the popular Kendall tau distance and the maximum rank difference \citep{kendall_kendalltaudistance, busa_kendalltaumaxrank}. These do not require the ranked items to be associated with utility scores and penalize misclassification uniformly across all item pairs, thus making the comparison with our notion of $\epsilon$-accuracy vacuous. In contrast, we can provide a more meaningful comparison for the following measures.
\begin{itemize}
    \item $\epsilon$\emph{-optimality}. This was introduced by \citet{saha_bestitem} and is similar to the one given by Definition~\ref{def: epsilon-accuracy}. Indeed, an item is $\epsilon$-optimal if its utility value is smaller than the largest utility by less than $\epsilon$. However, in contrast to our metric, this notion of distance induces a much simpler objective as it is only concerned with determining a nearly optimal item.
    \item $\epsilon$\emph{-best ranking}. This notion was introduced by \citet{saha_subsetwisepreferences} and is intrinsically gap-dependent. However, despite its similarities with our notion of $\epsilon$-accuracy, it does not refer to utility gaps. Instead, it is concerned with the gaps between the Plackett-Luce parameters. The latter correspond, up to constants, to the winning probabilities and can be mapped to the utility values through a non-linear function. Consequently, the objective implied by $\epsilon$-optimality is essentially different from ours. In particular, the former is much more tolerant towards items with a small utility because their winning probabilities are small. Optimizing with respect to this metric obscures the true difficulty of the WO feedback, that is, its sensitivity to $\pmin$.
    \item $\epsilon$\emph{-ordering}. This notion was introduced by \citet{falahatgar18a}. For a fixed noise distribution, using the full-ranking resolution factor $\eta$ introduced in Section~\ref{sec: FR learning}, one can show that: 
    \[
    \epsilon-\text{accurate (ours)} \Longleftrightarrow \frac{\eta(\epsilon)}{2}-\text{ranking (\citet{falahatgar18a})}.
    \]
    This equivalence can only be derived if we choose a ranking model induced by a RUM (thus diverging from the setting of \citep{falahatgar18a}) and use the true resolution factor $\eta$ instead of the lower bound $\etastar$ given by Equation~\ref{eta star for FR}. Most importantly, the notion of $\epsilon$-ordering was given in the context of ranking models where there are no underlying utilities that generate the ranking. 
\end{itemize}

\textbf{Comparison with \citet{saha_bestitem}}. Focusing on winner-only feedback, we elaborate on the differences and similarities between our work and that of \citet{saha_bestitem}. While both frameworks seek to lower-bound the winning-probability ratio $\wpi/\wpj$ in terms of the utility gap $u_i - u_j$, the methodologies and objectives differ in two key aspects. First, \citet{saha_bestitem} focus on identifying the $\varepsilon$-best item, whereas we target the recovery of the full utility ranking. Under WO feedback, the latter becomes a significantly more challenging problem. This is due to the informational bottleneck represented by the minimum winning probability $\pmin$, as established by our lower bound analysis formalized through Theorem \ref{thm: sample complexity lower bound WO}. Second, their approach uses the so-called minimum advantage ratio and relies on a distribution-specific constant $c$, which is derived via a so-called ``variational lower bound''. This constant must be computed for each noise model and provided to the algorithm as a parameter. In contrast, our approach leverages the structural regularity of log-concave distributions. Consequently, our algorithms only require an upper bound on the noise variance, regardless of any other distribution-specific parameters. 

\section{Variance-aware Utility Rankers}\label{app: VUR misc}
\subsection{Distribution Table}

The table below lists several examples of log-concave distributions and, for each of them, provides explicit conditions ensuring that the mean is zero and Assumption \ref{assump: bounded variance} is satisfied. If these two are met, the given distribution is allowed by the \textsc{FR-VUR} algorithm presented in Section~\ref{sec: FR learning}. Provided that the additional utility-range-dependent condition \ref{assump: the support contains a,b WO} is also met, the derived sample-complexity guarantees hold for both \textsc{FR-VUR} and \textsc{WO-VUR}. 

\begin{table}[ht]
    \centering
    \begin{tabular}{@{}l c c c c@{}}
    \toprule 
    \textbf{Noise distribution} & \textbf{Centeredness} & \textbf{Bounded-variance \ref{assump: bounded variance}} & \textbf{\textsc{FR-VUR}} & \textbf{WO-VUR} \\ \midrule 
    Gumbel$(\mu, s)$ & $\mu = -s\gamma$ & $s\leq \sqrt{6V}/\pi$ & \cmark & \cmark \\ 
    Normal$(\mu, \sigma^2)$ & $\mu = 0$  & $\sigma\leq \sqrt{V}$ & \cmark & \cmark \\
    Logistic$(\mu,s)$ & $\mu = 0$ &  $s\leq \sqrt{3V}/\pi$ & \cmark & \cmark \\
    Laplace$(\mu,s)$ & $\mu = 0$ & $s \leq \sqrt{V/2}$ & \cmark & \cmark \\
    Uniform$(c,d)$ & $c = -d$ & $d-c\leq \sqrt{12V}$  & \cmark & $\text{\cmark}^{\dagger}$ \\ \bottomrule
    \end{tabular}
    \vspace{1mm}
    \caption{The parameter $\gamma$ in the centeredness condition of the Gumbel distribution is the Euler-Mascheroni constant, whose value is approximately $0.58$. All five families satisfy the noise assumptions for both algorithms under the stated parameter conditions. For WO feedback, $\dagger$ additionally requires $\Delta_u<d-c$ for uniform noise. The other four families have full support, so Assumption~\ref{assump: the support contains a,b WO} holds automatically for any set of utilities.}
    \label{tab: comparison table}
\end{table}

\subsection{Pseudocode for Variance-aware Utility-Rankers}\label{app: VUR pseudocode}

Below is the pseudocode for the \textsc{VUR} algorithms presented in Sections \ref{sec: FR learning} and \ref{sec: WO learning}, respectively.

\begin{algorithm}
    \caption{Full-Ranking Variance-aware Utility Ranker}\label{alg: FR-VUR}
    \renewcommand{\algorithmicensure}{\textbf{Output:}}
    \renewcommand{\algorithmicrequire}{\textbf{Input:}}
    \begin{algorithmic}[1]
        \REQUIRE 
        Items $[k]$, confidence $\delta\in (0,1)$, accuracy $\varepsilon > 0$, variance bound $V > 0$
        \vspace{0.1cm}
        \STATE \textbf{Initialize}: $n \gets 0$, $G \gets ([k], \emptyset)$, $\etastar \gets \min(\frac{2}{3}, \frac{\varepsilon}{\sqrt{6V}})$
        \STATE \phantom{\textbf{Initialize}:} $\pij \gets 0$, $V_{ij} \gets 0$, and $I_{ij} \gets \etastar/4$ for every $i\neq j$ in $[k]$ \\
        \REPEAT
            \STATE $n \gets n+1$ \\ 
            \STATE Observe full ranking $\sigma_n$ from $\rum$ and set $\deltan{n} \gets \frac{6\delta}{k(k-1)n^2\pi^2}$
            \FOR{each pair $i\neq j$}
                \STATE $\pij \gets \frac{n-1}{n}P_{ij} + \frac{1}{n}\mathbbm{1}\{i \text{ is ranked before } j \text{ in } \sigma_n\}$ \\
                \IF{$n > 1$}
                    \STATE $V_{ij} \gets \frac{n}{n-1}P_{ij}(1-P_{ij})$
                    \STATE $I_{ij} \gets \sqrt{\frac{2V_{ij}\ln(4/\delta_n)}{n}} + \frac{7\ln(4/\delta_n)}{3(n-1)}$
                \ENDIF
            \ENDFOR
        \UNTIL{$I_{ij} < \etastar/4$ for every $i\neq j$}
        \FOR{$i,j$ such that $1\leq i < j\leq k$}
            \IF{$\pij > 1/2 + \etastar/4$}
                \STATE Add $\dedge{i}{j}$ to $G$
            \ELSIF{$\pji > 1/2 + \etastar/4$}
                \STATE Add $\dedge{j}{i}$ to $G$
            \ENDIF
        \ENDFOR
        \IF{$G$ is acyclic}
            \STATE $G \gets$ transitive closure of $G$ and $\hat{\sigma} \gets \text{topological sort of } G$
        \ELSE
            \STATE $\shat \gets$ arbitrary permutation of $[k]$
        \ENDIF
        \ENSURE $\shat$
    \end{algorithmic}
\end{algorithm}

\begin{algorithm}
    \caption{Winner-Only Variance-aware Utility Ranker}\label{alg: WO-VUR}
    \renewcommand{\algorithmicensure}{\textbf{Output:}}
    \renewcommand{\algorithmicrequire}{\textbf{Input:}}
    \begin{algorithmic}[1]
        \REQUIRE 
        Items $[k]$, confidence $\delta\in (0,1)$, accuracy $\varepsilon \in (0, b-a)$, variance bound $V > 0$
        \vspace{0.1cm}
        \STATE \textbf{Initialize}: $n \gets 0$, $\etastar \gets \frac{e-1}{3e+1}\min\left\{1,\frac{\varepsilon}{\sqrt{3V}}\right\}$
        \STATE \phantom{\textbf{Initialize}:} $\wpi \gets 1$, $V_i \gets 1/4$, and $I_i \gets \etastar/2$ for every $i \in [k]$ \\
        \REPEAT
            \STATE $n \gets n+1$ \\ 
            \STATE Observe winner $w_n$ from $\rum$ and set $\deltan{n} \gets \frac{3\delta}{k\pi^2n^2}$
            \FOR{each $i\in [k]$}
                \STATE $\wpi \gets \frac{n-1}{n}\wpi + \frac{1}{n}\mathbbm{1}\{w_n = i\}$ \\
                \IF{$n > 1$}
                    \STATE $V_i \gets \frac{n}{n-1}\wpi(1-\wpi)$
                    \STATE $I_i \gets \sqrt{\frac{2V_i\ln(4/\delta_n)}{n}} + \frac{7\ln(4/\delta_n)}{3(n-1)}$
                \ENDIF
            \ENDFOR
        \UNTIL{$I_i < \wpi\cdot \etastar/2$ for every $i\in [k]$}
        \STATE $\shat \gets$ unique permutation $\sigma\in \Sk$ such that $\sigma(i) < \sigma(j) \Longleftrightarrow \wpi > \wpj \vee(\wpi = \wpj \text{ and }i < j)$
        \ENSURE $\shat$
    \end{algorithmic}
\end{algorithm}

\section{Technical Results and Proofs for Section~\ref{sec: FR learning}}\label{app: FR learning}
\subsection{The Pairwise-preference Probabilities}
We begin by showing that the pairwise-preference probabilities $\pij$ behave consistently with the underlying utilities. 

\begin{lemma}\label{lem: consistency of PPPs}
    Let $\pij = \pp{U_i > U_j}$ be the pairwise-preference probability defined in Section \ref{sec: FR learning}. Then, $\pij > 1/2$  if and only if $u_i > u_j$.
\end{lemma}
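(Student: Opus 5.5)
Write $D := N_i - N_j$, so that $U_i > U_j$ is the event $D > -(u_i - u_j) = -\Delta_{ij}$. This gives
\[
\pij = \pp{D > -\Delta_{ij}} = 1 - G(-\Delta_{ij}),
\]
where $G$ is the CDF of $D$. The proof uses two properties of $D$. First, because $N_i$ and $N_j$ are i.i.d., $D$ has the same law as $-D$, so it is symmetric about zero. Second, $D$ is absolutely continuous, since it is a convolution of two absolutely continuous laws. Its density is $g(y) = \int_\rr f(x+y) f(x)\,\mathrm{d}x$, matching the representation $\xi(t) = \int_\rr F(x+t) f(x)\,\mathrm{d}x$ in Equation~\eqref{xi for FR 2}. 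From symmetry and continuity we get $G(0) = 1/2$, and more generally $\pp{D > -t} = \pp{D < t} = G(t)$.

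The case analysis is then direct. If $u_i = u_j$, then $\pij = \pp{D > 0} = 1/2$ by symmetry, since $\pp{D=0}=0$. If $u_i < u_j$, then $\pij = \pp{D > t}$ with $t = u_j - u_i > 0$, which is at most $\pp{D > 0} = 1/2$. It therefore suffices to show the following: if $t := u_i - u_j > 0$, then $\pij > 1/2$. This single claim settles both directions of the equivalence, because $\pij \le 1/2$ whenever $u_i \le u_j$.

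For $t>0$ we have
\[
\pij - \tfrac12 = \pp{-t < D \le 0} = \int_{-t}^{0} g(y)\,\mathrm{d}y,
\]
so we need $g$ to have positive mass on $(-t,0)$, or equivalently on $(0,t)$ by symmetry. This is the only nontrivial step, because the noise support may be bounded, as with uniform noise. Let $S = \supp{f}$, which is an interval $(a,b)$ of positive length because $f$ is log-concave and $\nu$ is not a Dirac measure. Then $D = N_i - N_j$ has positive density on the whole interval $(-(b-a),\, b-a)$. Indeed, for $|y| < b-a$ the set $\{x : x \in S,\ x+y \in S\}$ is an interval of positive length $(b-a)-|y|$ on which $f(x)f(x+y) > 0$, so $g(y) > 0$. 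Since $b-a>0$, the interval $(-\min(t, b-a), 0)$ is nonempty and $g>0$ on it. Hence $\pij > 1/2$.

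Log-concavity is used only to guarantee that the support is an interval, so that no gaps appear near zero. Independence is used to obtain both the symmetry of $D$ and the convolution formula.
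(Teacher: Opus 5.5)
Your proof is correct and follows essentially the same route as the paper. The paper writes $\pij = g(u_i - u_j)$ with $g(t) = \int_\rr F(y+t) f(y)\,\mathrm{d}y$ and shows $g'(t) = \int_\rr f(y+t) f(y)\,\mathrm{d}y > 0$ for $t$ below the support width, which is exactly your positivity of the density of $N_i - N_j$ near zero, integrated over $(-t,0)$; the only cosmetic difference is the converse, where the paper argues by contradiction via $\pij + \pji = 1$ rather than through your bound $\pp{D > t} \le \pp{D > 0} = 1/2$.
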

\begin{proof}
    We denote by $f$ and $F$ the density and the CDF of $\nu$, respectively. Additionally, we let $(a,b)$ be an interval fully contained in $\supp{f}$.
    
    If $u_i = u_j$, then, by symmetry, we have $\pij = 1/2$. If $u_i > u_j$, then 
    \begin{equation*}
        \begin{split}
            \pij & = \int_\rr \pp{U_j < U_i\, | \, U_i = x}f(x-u_i)\, \mathrm{d}x \\
            & = \int_\rr F(x - u_j) f(x - u_i)\, \mathrm{d}x \\
            & \stackrel{\text{(i)}}{=} \int_\rr F(y + u_i - u_j)f(y)\, \mathrm{d}y,
        \end{split}
    \end{equation*}
    where we used the change of variable $y = x - u_i$ for (i). Now consider the function $g:[0,\infty)\rightarrow \rr$ defined by 
    \[
    g(t) := \int_\rr F(y + t)f(y)\, \mathrm{d}y.
    \]
    Taking the first derivative, we find 
    \[
    g'(t) = \int_\rr f(y+t)f(y) \, \mathrm{d}y.  
    \]
    Since $(a,b)\subset\supp{f}$, for $t < b-a$, there exists a subinterval of $\supp{f}$ where both $f(y+t)$ and $f(y)$ are strictly positive. Thus, 
    \[
    g'(t) > 0 \quad\text{for every }t \in [0,b-a).
    \]
    Moreover, $g'(t)\geq 0$ for every $t \geq 0$, so $g$ is non-decreasing on the whole half-line. Together with the fact that $g(0) = 1/2$, this implies $g(t) > 1/2$ for every $t\in (0,b-a)$ and, by monotonicity, $g(t) \geq g(t_0) > 1/2$ for every $t \geq b-a$ and any fixed $t_0 \in (0,b-a)$. Since $\pij = g(u_i - u_j)$ and $u_i - u_j > 0$, we conclude that $\pij > 1/2$. 

    To prove the converse implication, suppose $\pij > 1/2$. As noted above, we cannot have $u_i = u_j$, so either $u_i < u_j$ or $u_i > u_j$. The former would imply $\pji > 1/2$, but since $\pij + \pji = 1$, this yields a contradiction. 
\end{proof}

\subsection{Concentration Bounds}\label{app: concentration bounds FR}

The following result provides anytime-concentration bounds for the unbiased estimators introduced in Section \ref{sec: FR learning}. Our inequalities are based on \citep{maurer_empiricalbernstein}. 

\begin{lemma}\label{lem: concentrations for FR}
    Fix $\delta\in (0,1)$ and let $\deltan{n}$ be the time-$n$ confidence parameter defined in Section~\ref{sec: PP probabilities}. Then, with probability at least $1 - \delta$, the following hold simultaneously for every pair $(i,j)$ with $i\neq j$ and every $n\in\nn\setminus\{1\}$: 
    \begin{align}
        |\pij - \estij{ij}{n}| & \leq \sqrt{\frac{2\svij{ij}{n} \log(4/\deltan{n})}{n}} + \frac{7\log(4/\deltan{n})}{3(n-1)}, \label{pp concentration for FR}\\
        \left|\sqrt{\svij{ij}{n}} - \sqrt{\pij(1-\pij)}\right| & \leq \sqrt{\frac{2\log(2/\deltan{n})}{n-1}}. \label{variance concentration for FR}
    \end{align}
\end{lemma}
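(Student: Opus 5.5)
We will prove Lemma~\ref{lem: concentrations for FR} by applying, for each fixed pair $(i,j)$ and each fixed $n\geq 2$, the two finite-sample inequalities of \citet{maurer_empiricalbernstein}, and then taking a union bound over all pairs and all times. The key observation is that, for fixed $(i,j)$, the indicators $\xij{ij}{1},\xij{ij}{2},\dots$ are i.i.d.\ Bernoulli random variables with mean $\pij$. They are independent because the random utility vectors $U^t$ are independent copies, and $\xij{ij}{t}$ is a measurable function of $U^t$ alone. In particular they take values in $[0,1]$, have variance $\pij(1-\pij)$, and $\svij{ij}{n}$ is exactly the U-statistic sample variance used by Maurer and Pontil.

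\textbf{Step 1 (empirical Bernstein).} Theorem~4 of \citet{maurer_empiricalbernstein} gives, for i.i.d.\ $[0,1]$-valued variables and any $\delta'\in(0,1)$, with probability at least $1-\delta'$,
\[
\pij - \estij{ij}{n} \leq \sqrt{\frac{2\svij{ij}{n}\log(2/\delta')}{n}} + \frac{7\log(2/\delta')}{3(n-1)}.
\]
We apply this once to the variables $\xij{ij}{t}$ and once to $1-\xij{ij}{t}$; the latter have the same sample variance and yield the lower-tail inequality. Taking $\delta' = \deltan{n}/2$ in each application, the two-sided bound \eqref{pp concentration for FR} holds with probability at least $1-\deltan{n}$, and the logarithm becomes $\log(4/\deltan{n})$.

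\textbf{Step 2 (variance concentration).} Theorem~10 of \citet{maurer_empiricalbernstein} bounds the deviation of the square root of the sample variance, in each direction, by $\sqrt{2\log(1/\delta')/(n-1)}$ with probability at least $1-\delta'$. Taking both directions at level $\deltan{n}/2$ each gives \eqref{variance concentration for FR} with probability at least $1-\deltan{n}$, with logarithm $\log(2/\deltan{n})$.

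\textbf{Step 3 (union bound).} For a fixed ordered pair and fixed $n$, the failure probability of the two events together is at most $2\deltan{n}$. The events for $(i,j)$ and $(j,i)$ coincide, because $\estij{ji}{n} = 1-\estij{ij}{n}$, $\svij{ji}{n}=\svij{ij}{n}$ and $\pji = 1-\pij$. It therefore suffices to union-bound over the $k(k-1)/2$ unordered pairs. Summing over $n\geq 2$ and using $\sum_{n\geq 1} 1/n^2 = \pi^2/6$, the total failure probability is at most
\[
\frac{k(k-1)}{2}\sum_{n\ge 2} 2\deltan{n} \le k(k-1)\cdot\frac{6\delta}{k(k-1)\pi^2}\cdot\frac{\pi^2}{6}=\delta.
\]

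The only delicate point is this constant bookkeeping. We must confidently allocate $\deltan{n}/2$ per tail so that the constants $4$ and $2$ inside the logarithms match, and we must exploit the symmetry between ordered pairs so that the factor $k(k-1)$ in $\deltan{n}$ suffices. If the symmetry argument were unavailable, the sum would overshoot the target by a factor of $2$ and the level would need adjusting. No other obstacle is expected, since the anytime guarantee follows simply from the summable schedule $\deltan{n}\propto 1/n^2$.
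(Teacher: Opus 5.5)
Your proposal is correct and follows essentially the same route as the paper: for each unordered pair and each $n\geq 2$, apply Theorems~4 and~10 of \citet{maurer_empiricalbernstein} to the i.i.d.\ Bernoulli indicators, then union-bound over the $k(k-1)/2$ unordered pairs and over $n$ using $\sum_n 1/n^2=\pi^2/6$, with the symmetry $(i,j)\leftrightarrow(j,i)$ covering ordered pairs. Your explicit allocation of $\deltan{n}/2$ to each tail just spells out why each of the paper's events $A_{ij}^n$ and $B_{ij}^n$ has probability at least $1-\deltan{n}$, which the paper asserts directly from the cited theorems.
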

\begin{proof}
    For a fixed pair $(i,j)$ with $1\leq i < j\leq k$ and $n\geq 2$, we define the events
    \[
    A_{ij}^n := \left\{|\pij - \estij{ij}{n}| \leq \sqrt{\frac{2\svij{ij}{n}\log(4/\deltan{n})}{n}} + \frac{7\log(4/\deltan{n})}{3(n-1)}\right\}
    \]
    and 
    \[
    B_{ij}^n := \left\{\left|\sqrt{\svij{ij}{n}} - \sqrt{\pij(1-\pij)}\right| \leq \sqrt{\frac{2\log(2/\deltan{n})}{n-1}}\right\}.
    \]
    By Theorem 4 and Theorem 10 in \citep{maurer_empiricalbernstein}, we have $\pp{A_{ij}^n} \geq 1 -\deltan{n}$ and $\pp{B_{ij}^n} \geq 1 - \deltan{n}$. By taking a union bound, we obtain
    \begin{equation*}
        \begin{split}
            \pp{\bigcap_{n\in\nn}\left(\bigcap_{1\leq i< j\leq k} (A_{ij}^n \cap B_{ij}^n)\right)} & = 1 - \pp{\bigcup_{n\in\nn}\left(\bigcup_{1\leq i< j\leq k} (A_{ij}^n)^c \cup (B_{ij}^n)^c\right)} \\
            & \geq 1 - \frac{k(k-1)}{2}\sum_{n=1}^\infty 2 \deltan{n} \\
            & = 1 - \frac{6\delta}{\pi^2}\sum_{n=1}^\infty \frac{1}{n^2} \\
            & \stackrel{\text{(i)}}{=} 1 - \delta,
        \end{split}
    \end{equation*}
    where (i) follows from the well-known fact that $\sum_{n\in\nn}1/n^2 = \pi^2/6$. To conclude, it suffices to observe that, for any two distinct items $i$ and $j$, one has
    \[
    |\pij - \estij{ij}{n}| = |P_{ji} - \estij{ji}{n}|\quad\text{and}\quad \svij{ij}{n} = \svij{ji}{n}.
    \]
\end{proof}

\subsection{Uniform Lower Bound on the Resolution Function}\label{app: lower bound on eta FR}

In this section, we formally derive a uniform lower bound on the FR resolution function that holds across all noise distributions satisfying Assumption \ref{assump: bounded variance}. 

\begin{lemma}\label{lem: lower bound on eta FR}
    Let $\nu\in\mathcal{N}$ be a noise distribution satisfying the bounded-variance condition \emph{\ref{assump: bounded variance}}. Then
    \[
    \eta(t) \geq \eta^*(t)\quad\text{for every }t\in [0,\infty), 
    \]
    where ${{\etastar}}$ is defined by \eqref{eta star for FR}.
\end{lemma}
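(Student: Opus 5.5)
The plan is to rewrite $\eta$ as the probability that a symmetric unimodal random variable lands in a symmetric interval. Then Gauss's inequality combined with a concavity (chord) argument gives the bound.

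\textbf{Step 1 (probabilistic form of $\eta$).} Let $N_1,N_2$ be i.i.d.\ with law $\nu$ and set $D := N_2 - N_1$. By definition, $\xi(t) = \int_\rr F(x+t)f(x)\,\mathrm{d}x = \pp{N_2 \le N_1 + t} = \pp{D \le t}$. The variable $D$ is symmetric about $0$, since $D$ and $-D$ have the same law. It is absolutely continuous, so $\pp{D\le -t} = 1-\pp{D \le t}$ and hence
\[
\eta(t) = 2\pp{D\le t} - 1 = \pp{|D| \le t} =: G(t).
\]
Moreover $\var{D} = 2\var{N_1} \le 2V$ by Assumption~\ref{assump: bounded variance}. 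Write $s^2 := \var{D}$.

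\textbf{Step 2 (shape of $G$).} The density $g$ of $D$ is the convolution of the log-concave densities $f(\cdot)$ and $f(-\cdot)$, so it is log-concave by Pr\'ekopa's theorem. A symmetric log-concave density is unimodal with mode at $0$, hence non-increasing on $[0,\infty)$. Therefore $G(t) = 2\int_0^t g$ is concave and non-decreasing on $[0,\infty)$, with $G(0)=0$. Concavity yields the chord bound $G(t) \ge (t/t_0)\,G(t_0)$ for all $0\le t\le t_0$.

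\textbf{Step 3 (anchor point via Gauss's inequality).} Since $D$ is unimodal about $0$ with second moment $s^2$, Gauss's inequality gives $\pp{|D|\ge \kappa} \le 4s^2/(9\kappa^2)$ for every $\kappa \ge 2s/\sqrt{3}$. Take $t_0 := 2s/\sqrt3$. This gives $\pp{|D|\ge t_0}\le 1/3$, hence $G(t_0)\ge 2/3$ (ties are null events by absolute continuity). Combining with Step 2:
\[
G(t) \ge \frac{2}{3}\cdot\frac{t}{t_0} = \frac{t}{\sqrt3\, s} \quad \text{for } t\le t_0,
\]
and, by monotonicity, $G(t)\ge 2/3$ for $t\ge t_0$. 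Thus $G(t) \ge \min\{2/3,\ t/(\sqrt3 s)\}$. Finally, $\sqrt3\, s \le \sqrt{6V}$, which gives $\eta(t)\ge\etastar(t)$. The degenerate case $s=0$ cannot occur, since $\nu$ is absolutely continuous.

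\textbf{Main obstacle.} The delicate point is Step 3: we need a distribution-free lower bound on $G$ at a point proportional to $s$, with exactly the constant $2/3$. Chebyshev's inequality alone is too weak. Unimodality, inherited from log-concavity through Step 2, is what makes Gauss's inequality available. The uniform distribution on $[-\sqrt3 s,\sqrt3 s]$ shows that this constant is tight. I also need to check carefully that the convolution preserves log-concavity, so that $D$ is genuinely unimodal about $0$.
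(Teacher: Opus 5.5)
Your proof is correct and follows essentially the same route as the paper: pass to the symmetric log-concave (hence unimodal) difference $D=N_2-N_1$ with $\mathrm{Var}(D)\le 2V$, and apply Gauss's inequality with the regime split at $2s/\sqrt{3}$. The only variation is that you use just the tail branch of Gauss's inequality at the anchor $t_0=2s/\sqrt{3}$ and recover the linear regime $t/(\sqrt{3}s)$ from concavity of $t\mapsto\mathbb{P}[|D|\le t]$, whereas the paper invokes the central branch $\mathbb{P}[Z\ge t]\le \frac{1}{2}\bigl(1-t/(\sqrt{3}\sigma)\bigr)$ directly; your derivation of $\xi(t)=\mathbb{P}[D\le t]$ also avoids the sign slip in the paper's intermediate identity for $\int_\rr F(x+t)f(x)\,\mathrm{d}x$.
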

\begin{proof}
Let $X$ and $Y$ be two i.i.d. random variables with distribution $\nu$, and let $Z:= Y - X$ denote their difference, which is also log-concave. Additionally, $Z$ is centered and symmetric, and its variance satisfies $\var{Z} = \var{X} + \var{X} \leq 2 V$. Let us denote by $\sigma > 0$ the standard deviation of $Z$, and let $H$ be its CDF. Then, for any $t > 0$, we have:
\begin{equation*}
    \begin{split}
        \int_\rr F(x+t)f(x) \, dx & = \int_\rr \pp{Y < x - t}f(x)\, dx = \\
        & = \int_\rr \pp{Y < X - t\, |\, X = x}f(x)\, dx = \\
        & = \pp{Y-X < - t} = \\
        & = \pp{Z < -t} = H(-t). 
    \end{split}
\end{equation*}
The left-hand side is equal to $\xi(t)$, where $\xi$ is the function defined by Equation~\eqref{xi for FR 2}, which satisfies $\eta(t) = 2\xi(t) - 1$. By symmetry, we have: 
\[
\xi(t) = 1 - H(-t), \quad\forall t > 0. 
\]
using Lemma~\ref{lem: Gauss deviation bounds for unimodal r.v.s}, we can bound the term $H(-t)$ as follows: 
\[
    H(-t) \leq \begin{cases}
            \displaystyle \frac{1}{2} - \frac{t}{2\sqrt{3}\sigma} & \text{if }0\leq t < 2\sigma/\sqrt{3}\\
        \displaystyle \frac{2\sigma^2}{9t^2} & \text{if }t \geq 2\sigma/\sqrt{3}
    \end{cases}.
\]
We now consider two cases. 

\textit{Case 1}. If $t\in (0, 2\sigma\sqrt{3})$, then $H(-t)\leq 1/2 - t/2\sqrt{3}\sigma$. Hence, we obtain: 
\[
\begin{split}
    1 - H(-t) & \geq \frac{1}{2} + \frac{t}{2\sqrt{3}\sigma} \stackrel{\sigma\leq \sqrt{2V}}{\geq} \\
    & \geq \frac{1}{2} + \frac{t}{2\sqrt{6V}} = \\
    & = \frac{1}{2}\left( 1 + \frac{t}{\sqrt{6V}}\right) . 
\end{split}
\]
Using the relation $\eta(t) = 2\xi(t) - 1$, we conclude that $\eta(t) \geq t/\sqrt{6V}$. 

\textit{Case 2}. If $t\geq 2\sigma/\sqrt{3}$, we have 
\[
H(-t) \leq \frac{2\sigma^2}{9t^2}\leq \frac{2\sigma^2}{9(2\sigma/\sqrt{3})^2} = \frac{1}{6}.
\]
This implies 
\[
\xi(t) \geq \frac{1}{2}\left(1 + \frac{2}{3}\right),
\]
from which we obtain that $\eta(t) \geq 2/3$. 

Combining the conclusions of both cases yields the desired bound. 
\end{proof}

\subsection{The PAC Property}\label{app: pac property FR}

In this section, we prove two results showing that the FR-VUR algorithm presented in Section \ref{sec: FR learning} is $\epdel$-PAC. 

\begin{lemma}\label{lem: the output graph is a DAG}
    With probability at least $1-\delta$, the pairwise-preference graph $G_T$ is a subgraph of the utility graph.   
\end{lemma}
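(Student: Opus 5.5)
We work on the high-probability event $\mathcal{G}$ of Lemma~\ref{lem: concentrations for FR}. On $\mathcal{G}$, inequality~\eqref{pp concentration for FR} holds simultaneously for every pair $i\neq j$ and every $n\geq 2$, and $\pp{\mathcal{G}}\geq 1-\delta$. Since the bound is uniform in $n$, it holds in particular at the random time $n=T$. We first note that $T\geq 2$, because for $n=1$ the stopping condition is not defined, and no extra union bound over stopping times is required. Combining~\eqref{pp concentration for FR} at $n=T$ with the stopping rule~\eqref{FR stopping condition} then gives, on $\mathcal{G}$,
\[
|\estij{ij}{T} - \pij| < \frac{\etastar}{4}\quad\text{for every } i\neq j.
\]

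Next we fix an edge $\dedge{i}{j}\in\mathcal{E}_T$ and show that it lies in $\mathcal{E}_u$. By definition of $G_T$ we have $\estij{ij}{T} > 1/2 + \etastar/4$. The previous display then yields $\pij > \estij{ij}{T} - \etastar/4 > 1/2$. Lemma~\ref{lem: consistency of PPPs} converts this into $u_i > u_j$, so $\dedge{i}{j}\in\mathcal{E}_u$ by the definition of the utility graph. It follows that $\mathcal{E}_T\subseteq\mathcal{E}_u$ on $\mathcal{G}$, that is, $G_T$ is a subgraph of $G_u$ with probability at least $1-\delta$. As a by-product, $G_T$ is acyclic on this event, because $G_u$ is a transitive tournament and hence a DAG.

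We also check that $G_T$ is a valid directed graph, with at most one orientation per pair. Since $\estij{ij}{T}+\estij{ji}{T}=1$, the two conditions $\estij{ij}{T}>1/2+\etastar/4$ and $\estij{ji}{T}>1/2+\etastar/4$ cannot hold together, so both directions never appear. The only delicate point in the argument is applying the anytime bound at the data-dependent time $T$. This is handled by the fact that the event $\mathcal{G}$ is an intersection over all $n$, which is exactly why the confidences $\deltan{n}$ were chosen to be summable.
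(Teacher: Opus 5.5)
Your proposal is correct and follows essentially the same route as the paper: on the uniform-in-$n$ event of Lemma~\ref{lem: concentrations for FR}, evaluating at $n=T$ and using the stopping rule gives $|\estij{ij}{T}-\pij|<\etastar/4$, so any edge $\dedge{i}{j}\in\mathcal{E}_T$ forces $\pij>1/2$ and hence $u_i>u_j$ by Lemma~\ref{lem: consistency of PPPs}. Your extra remarks ($T\geq 2$, at most one orientation per pair since $\estij{ij}{T}+\estij{ji}{T}=1$, and acyclicity of $G_T$) are correct but not needed for the statement itself.
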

\begin{proof}
    With probability at least $1-\delta$, the inequality given by \eqref{pp concentration for FR} holds for all $i\neq j$ and $n\geq 2$. In particular, taking $n=T$, this further implies
    \[
    |\estij{ij}{T} - \pij| < \frac{{{\etastar}}}{4} \quad\forall i\neq j. 
    \]
    If $\dedge{i}{j}\in \mathcal{E}_T$, i.e., there is an edge from $i$ to $j$ in $G_T$, then 
    \[
    \pij = \estij{ij}{T} + (\pij - \estij{ij}{T}) > \frac{1}{2} + \frac{{{\etastar}}}{4} - \frac{{{\etastar}}}{4} = \frac{1}{2}. 
    \]
    Since $\pij > 1/2$ if and only if $u_i > u_j$ (see Lemma \ref{lem: consistency of PPPs}), we conclude that $\dedge{i}{j}$ is an edge of $G_u$. 
\end{proof}

\begin{lemma}\label{lem: PAC guarantee for FR}
    With probability at least $1-\delta$, if $i$ and $j$ are any two items such that $u_i - u_j \geq \varepsilon$, then $\shat(i) < \shat(j)$. 
\end{lemma}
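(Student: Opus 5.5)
We work on the event of Lemma~\ref{lem: concentrations for FR}, which has probability at least $1-\delta$. On this event inequality~\eqref{pp concentration for FR} holds for every pair and every $n\geq 2$, so in particular at $n=T$. The stopping rule~\eqref{FR stopping condition} then gives $|\estij{ij}{T}-\pij|<\etastar/4$ for all $i\neq j$. (That $T\geq 2$ is automatic: at $n=1$ the term $7\log(4/\deltan{n})/(3(n-1))$ is infinite, so the condition cannot hold.) Everything below is deterministic reasoning on this event.

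\textbf{The key edge is present.} Fix $i,j$ with $u_i-u_j\geq\epsilon$. By Equation~\eqref{xi for FR}, $\pij=\tfrac12+\tfrac12\eta(u_i-u_j)$. Since $\xi$ is non-decreasing (its derivative $\int f(y+t)f(y)\,\mathrm{d}y\geq 0$, as in the proof of Lemma~\ref{lem: consistency of PPPs}), $\eta$ is non-decreasing. Combined with Lemma~\ref{lem: lower bound on eta FR}, this gives $\eta(u_i-u_j)\geq\eta(\epsilon)\geq\etastar$, so $\pij\geq\tfrac12+\etastar/2$. Therefore
\[
\estij{ij}{T}>\pij-\etastar/4\geq \tfrac12+\etastar/4,
\]
and hence $\dedge{i}{j}\in\mathcal{E}_T$.

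\textbf{The output respects the edge.} By Lemma~\ref{lem: the output graph is a DAG}, on the same event $G_T$ is a subgraph of the utility graph $G_u$. Its proof uses only inequality~\eqref{pp concentration for FR} at $n=T$, so no extra union bound is needed. Since $G_u$ is a transitive tournament, it is acyclic, and so is $G_T$. The output rule therefore takes $\shat$ to be a linear extension of the transitive closure of $G_T$. Because $\dedge{i}{j}\in\mathcal{E}_T$, we get $i<j$ in that partial order, and so $\shat(i)<\shat(j)$.

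\textbf{Main obstacle.} The only delicate point is making sure that both lemmas are invoked on one common event of probability at least $1-\delta$, rather than on two separate ones; the argument above does this. A secondary point is the monotonicity of $\eta$, which is needed to pass from the gap $u_i-u_j$ to $\epsilon$. If one prefers to avoid citing monotonicity, the uniform bound $\eta(t)\geq\etastar(t)$ can be applied directly at $t=u_i-u_j$, using that $\etastar$ is non-decreasing in $t$.
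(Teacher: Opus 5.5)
Your proposal is correct and follows the paper's proof essentially step for step. On the single concentration event of Lemma~\ref{lem: concentrations for FR} you obtain $|\estij{ij}{T}-\pij|<\etastar/4$, then use monotonicity of $\xi$ together with Lemma~\ref{lem: lower bound on eta FR} to get $\pij\geq 1/2+\etastar/2$ and hence the edge $\dedge{i}{j}\in\mathcal{E}_T$, and you conclude from acyclicity of $G_T$ and the linear-extension output rule. Your extra remarks ($T\geq 2$, acyclicity via $G_T\subseteq G_u$, and the alternative argument through monotonicity of $\etastar$) only make explicit details that the paper leaves implicit.
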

\begin{proof}
    With probability at least $1 - \delta$, inequality \eqref{pp concentration for FR} holds for all $i\neq j$ and $n\geq 2$. In this regime, the condition of Lemma \ref{lem: the output graph is a DAG} is satisfied, so, taking $n = T$, 
    \[
    |\estij{ij}{T} - \pij| < \frac{{{\etastar}}}{4} \quad\forall i\neq j. 
    \]
    Now let $i$ and $j$ be any two items such that $|u_i - u_j| \geq \varepsilon$, say $u_i \geq u_j + \varepsilon$. If $\eta(\varepsilon)$ denotes the value of the resolution function computed for the true noise distribution, then Lemma \ref{lem: lower bound on eta FR} ensures $\eta(\varepsilon) \geq {{\etastar}}$. By \eqref{xi for FR} and the monotonicity of $\xi$, $\pij \geq \xi(\varepsilon)$. Using the definition of $\eta(\varepsilon)$, we have
    \[
    \pij \geq \frac{1}{2} + \frac{\eta(\varepsilon)}{2} \geq \frac{1}{2} + \frac{{{\etastar}}}{2}. 
    \]
    Hence, 
    \[
    \estij{ij}{T} = \pij + (\estij{ij}{T} - \pij) > \frac{1}{2} + \frac{{{\etastar}}}{2} - \frac{{{\etastar}}}{4} = \frac{1}{2} + \frac{{{\etastar}}}{4}.
    \]
    This implies $\dedge{i}{j}\in \mathcal{E}_T$. Since $G_T$ is a DAG, from the definition of $\shat$, it follows that $\shat(i) < \shat(j)$. 
\end{proof}

\subsection{Sample Complexity Upper Bound}\label{app: proof of sample complexity upper bound FR}

In this section, we prove Theorem \ref{thm: sample complexity upper bound FR}. To do so, we first define $\mathcal{E}$ as the set of outcomes for which the following hold simultaneously for every $n\in\nn\setminus\{1\}$ and any two distinct $i$ and $j$:
\begin{align}
    |\estij{ij}{n} - \pij| & \leq \sqrt{\frac{2\svij{ij}{n} \log(4/\deltan{n})}{n}} + \frac{7\log(4/\deltan{n})}{3(n-1)}, \label{condition 1 for E FR}\\
    \left|\sqrt{\svij{ij}{n}} - \sqrt{\pij(1-\pij)}\right| & \leq \sqrt{\frac{2\log(2/\deltan{n})}{n-1}}, \label{condition 2 for E FR}
\end{align}
where $\deltan{n}$ is the one given in Section~\ref{sec: PP probabilities}. 

If \eqref{condition 2 for E FR} holds, then multiplying both sides of $\sqrt{\svij{ij}{n}} \leq \sqrt{\pij(1-\pij)} + \sqrt{2\log(2/\deltan{n})/(n-1)}$ by $\sqrt{2\log(4/\deltan{n})/n}$ gives 
\[
\sqrt{\frac{2\svij{ij}{n} \log(4/\deltan{n})}{n}} \leq \sqrt{\frac{2\pij(1-\pij) \log(4/\deltan{n})}{n}} + \frac{2\log(4/\deltan{n})}{n-1}, 
\]
where we used $\log(2/\deltan{n}) \leq \log(4/\deltan{n})$ and $n(n-1)\geq (n-1)^2$. Adding $7\log(4/\deltan{n})/(3(n-1))$ to both sides, we obtain 
\begin{equation}\label{empirical to true variance FR}
\sqrt{\frac{2\svij{ij}{n} \log(4/\deltan{n})}{n}} + \frac{7\log(4/\deltan{n})}{3(n-1)} \leq \sqrt{\frac{2\pij(1-\pij) \log(4/\deltan{n})}{n}} + \frac{13\log(4/\deltan{n})}{3(n-1)}, 
\end{equation}
whose left-hand side is exactly the quantity appearing in the stopping rule \eqref{FR stopping condition}. Thus, if $\mathcal{E}$ occurs, the stopping rule is satisfied if the following hold simultaneously for every $i\neq j$: 
\begin{align}
    \sqrt{\frac{2\pij(1-\pij) \log(4/\deltan{n})}{n}} & < \frac{{{\etastar}}}{8}, \label{sufficient condition for stopping rule FR 1} \\ 
    \frac{13\log(4/\deltan{n})}{3(n-1)} & < \frac{{{\etastar}}}{8}. \label{sufficient condition for stopping rule FR 2}
\end{align}
Condition \eqref{sufficient condition for stopping rule FR 1} holds for every $i\neq j$ if the stronger requirement below is satisfied:
\[
n > \max_{i\neq j}(\pij(1-\pij)) \cdot \frac{128\log(4/\deltan{n})}{{{\etastar}}^2}.
\]
In particular, it holds for $n_0$, which we define as the minimum positive integer $n$ such that 
\[
n \geq \max\left\{2,  p(1-p)\cdot \frac{128\log(4/\deltan{n})}{{{\etastar}}^2}\right\},
\]
where $p\in [0,1]$ is chosen so that  
\[
p(1-p) = \max_{i\neq j}(\pij(1-\pij)).
\]
Using the explicit expression of $\deltan{n}$ given in Section~\ref{sec: PP probabilities}, we have 
\[
n_0 \leq p(1-p)\cdot \frac{128\log(4/\deltan{n_0})}{{{{\etastar}}}^2} + 1 \leq \max\left(p(1-p)\cdot\frac{256}{{{{\etastar}}}^2}\log\left(\frac{2kn_0\pi}{\sqrt{6\delta}}\right), 2\right).
\]
By Lemma \ref{lem: Kaufmann - log bound}, there exists a universal constant $C_0$ such that 
\begin{equation}\label{bound on n zero FR}
n_0 \leq \max\left(\frac{C_0p(1-p)}{{{\etastar}}^2}\log\left(\frac{C_0p(1-p)}{{{\etastar}}^2}\cdot \frac{k}{\delta}\right),2\right),
\end{equation}
where $\log(0)$ is understood as $-\infty$. 

Now we consider the condition given by \eqref{sufficient condition for stopping rule FR 2}, and define $n_1$ as the minimum positive integer $n$ such that 
\[
n \geq \left\lceil \frac{104\log(4/\deltan{n})}{3{{\etastar}}} + 1 \right\rceil. 
\]
Such an integer exists, since the right-hand side grows logarithmically in $n$, while the left-hand side grows linearly. By minimality, the inequality above fails at $n_1 - 1$, and since $\deltan{n}$ is non-increasing in $n$, this yields 
\[
n_1 \leq \frac{104\log(4/\deltan{n_1})}{3{{\etastar}}} + 2. 
\]
It follows that 
\[
n_1 \leq \max\left(\frac{208\log(4/\deltan{n_1})}{3{{\etastar}}}, 4\right).
\]
Using the definition of $\deltan{n}$, Lemma \ref{lem: Kaufmann - log bound} ensures that there exists a constant $C_1 > 0$ such that
\begin{equation}\label{bound on n 1 FR}
n_1 \leq \max\left(\frac{C_1}{{{\etastar}}}\log\left(\frac{C_1}{{{\etastar}}}\cdot\frac{k}{\delta}\right), 4\right).
\end{equation}
If we let $n_2:= n_0 + n_1$, then for every $n\geq n_2$, the conditions given by \eqref{sufficient condition for stopping rule FR 1} and \eqref{sufficient condition for stopping rule FR 2} are both satisfied.

Now, the sample complexity can be decomposed as
\begin{equation}\label{expectation decomposition FR}
\ee{T} = \ee{T\cdot\mathbbm{1}_\mathcal{E}} + \ee{T\cdot\mathbbm{1}_{\mathcal{E}^c}}.
\end{equation}
Using the tail-sum formula for the expectation, we can rewrite 
\[
\ee{T\cdot\mathbbm{1}_{\mathcal{E}^c}} = \sum_{n=1}^\infty \pp{T\cdot \mathbbm{1}_{\mathcal{E}^c}\geq n}.
\]
If $n > n_2$, then necessarily $\mathbbm{1}_{\mathcal{E}^c} = 1$. Indeed, we defined $n_2$ in such a way that, if $\mathcal{E}$ occurs, then the stopping condition is met for all $n\geq n_2 + 1$. Therefore, we have
\[
\sum_{n=n_2+1}^\infty \pp{T\cdot \mathbbm{1}_{\mathcal{E}^c}\geq n} = \sum_{n=n_2+1}^\infty \pp{T\geq n}.
\]
If $T\geq n$, then the stopping condition was not satisfied at time $n-1$. Thus, at least one of \eqref{condition 1 for E FR} and \eqref{condition 2 for E FR} is violated for at least one pair $(i,j)$ with $i\neq j$. By a union bound over the $k(k-1)/2$ unordered pairs and the two conditions, this occurs with probability at most $k(k-1)\deltan{n-1}$. Hence, using the definition of $\deltan{n}$ given in Section~\ref{sec: PP probabilities}, 
\[
\sum_{n=n_2+1}^\infty \pp{T\geq n} \leq k(k-1)\sum_{n=n_2}^\infty \deltan{n} = \sum_{n=n_2}^\infty \frac{6\delta}{\pi^2n^2} \leq C_2,
\]
for a universal constant $C_2>0$. It follows that 
\[
\sum_{n=1}^\infty \pp{T\cdot \mathbbm{1}_{\mathcal{E}^c}\geq n} \leq n_2\cdot \pp{\mathcal{E}^c} + C_2.
\]
Using the decomposition given by \eqref{expectation decomposition FR}, this implies
\[
\ee{T} \leq n_2 + C_2. 
\]
To prove the claim, we use the upper bounds given by \eqref{bound on n zero FR} and \eqref{bound on n 1 FR} together with the definition of ${{\etastar}}$. 
\qed

\subsection{Information-theoretic Lower Bound}\label{app: proof of sample complexity lower bound FR}

We derive an information-theoretic lower bound for the ranking problem in the FR setting, thus proving Theorem \ref{thm: sample complexity lower bound for FR}. To this end, we fix a variance upper bound $V > 0$ and consider the centered Gumbel distribution $\nu$ with scale parameter $\beta = \sqrt{6V}/\pi$ and location parameter $\mu = -\beta\gamma$, where $\gamma$ is the Euler--Mascheroni constant. This ensures that $\nu$ is centered and that its variance is equal to $V$. We fix an accuracy level $\varepsilon$ in the small-gap regime $\varepsilon < \sqrt{6V}/(2\pi)$, which is equivalent to requiring $\Tilde{\varepsilon} := \varepsilon/\beta \in (0,1/2)$. Then, consider the utility vectors $\bm{u}$ and $\bm{v}$ given by 
\[
u_1 = v_2 = \varepsilon 
\]
and
\[
u_\ell = v_m = 0\quad\text{for every }\ell \in [k]\setminus\{1\} \text{ and } m\in [k]\setminus \{2\}.
\]
Since the location parameter shifts all sampled utilities by the same constant, the induced ranking law does not depend on $\mu$, and we can consider $\rum$ and $\rumv$. These are seen as two instances of a multi-armed bandit problem, where the number of arms is $n=1$. At each round $t$, the learner plays the only available arm. The observed reward is the feedback signal, consisting of a full ranking of items $1,\dots,k$. The \textit{ranking distributions} $P$ and $Q$ correspond to the reward distributions, and are given by 
\[
P(\sigma) = \pp{U_{(i)} = U_{\sigma^{-1}(i)}\, \, \forall i\in [k] \, | \, \rum}
\]
and 
\[
Q(\sigma) = \pp{U_{(i)} = U_{\sigma^{-1}(i)}\,\, \forall i\in [k] \, | \, \text{RUM}(v,\nu)},
\]
for every $\sigma\in \Sk$. 

We denote by $\bb{P}_u$ and $\bb{P}_v$ the probability measures on $(\Omega,\mathcal{F})$ induced by $\rum$ and $\rumv$, respectively. The next result provides an upper bound on the KL divergence between the two ranking distributions. 

\begin{lemma}\label{bounds on the KL divergence between reward distributions FR}
    Let $P$ and $Q$ be the ranking distributions introduced above. Then, the KL divergence is upper-bounded as 
    \begin{equation*}
    \kl{P}{Q} \leq \frac{\pi^2\varepsilon^2}{V}.
    \end{equation*}
\end{lemma}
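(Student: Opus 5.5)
The plan is to avoid computing the Plackett--Luce ranking law explicitly and to use the data-processing inequality for the KL divergence. Let $\nu$ be the centered Gumbel law with scale $\beta=\sqrt{6V}/\pi$ from the construction above. The ranking $\fr(U)$ is a deterministic measurable function of the random utility vector $U$. Therefore $\kl{P}{Q}$ is at most the KL divergence between the laws of $U$ under $\rum$ and under $\rumv$.

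Both laws are product measures with independent coordinates. The $\ell$-th marginal is $\nu$ shifted by $u_\ell$, respectively by $v_\ell$. Since $u_\ell=v_\ell$ for $\ell\notin\{1,2\}$, the chain rule (additivity of KL over product measures) leaves only two terms. The first is the divergence of $\nu$ shifted by $\varepsilon$ from $\nu$, coming from item $1$. The second is the divergence of $\nu$ from $\nu$ shifted by $\varepsilon$, coming from item $2$. So
\[
\kl{P}{Q}\le \kl{\nu_{\varepsilon}}{\nu_0}+\kl{\nu_0}{\nu_{\varepsilon}},
\]
where $\nu_c$ denotes $\nu$ shifted by $c$.

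Next I compute the KL divergence between two Gumbel laws with common scale $\beta$ and locations differing by $d$. Write $x=d/\beta$ and use the Gumbel density $\beta^{-1}\exp(-z-e^{-z})$ with $z=(y-\mu)/\beta$. The log-likelihood ratio is $\pm x$ plus a difference of terms of the form $e^{-z}$. Taking expectations with $\mathbb{E}[e^{-z}]=1$ and $\mathbb{E}[e^{-z\pm x}]=e^{\pm x}$ gives closed forms of the type $e^{\pm x}-1\mp x$. The exact sign convention does not matter, because the two directions are summed. The symmetrized divergence is
\[
(e^{x}-1-x)+(e^{-x}-1+x)=2(\cosh x-1).
\]

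Finally, set $x=\tilde\varepsilon=\varepsilon/\beta\in(0,1/2)$. The Taylor bound $2(\cosh x-1)\le x^2\cosh x$ gives $2(\cosh x - 1)\le \cosh(1/2)\,x^2\le 2x^2$. Substituting $\beta^2=6V/\pi^2$ yields
\[
x^2=\frac{\pi^2\varepsilon^2}{6V},
\]
so the total is at most $\pi^2\varepsilon^2/(3V)\le \pi^2\varepsilon^2/V$. The small-gap restriction is used only to control $\cosh x$.

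The main point requiring care is the Gumbel KL computation: the sign conventions and the moment identity $\mathbb{E}[e^{-z}]=1$ must be handled correctly. The data-processing reduction itself is routine, since the filter is deterministic.
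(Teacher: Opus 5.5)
Your proof is correct, and it takes a genuinely different route from the paper's.

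The paper stays at the level of the observed ranking. It factorizes the Plackett--Luce law of $\sigma$ into $k-1$ sequential choices and applies the KL chain rule stage by stage. It then splits each stage into cases according to how many of items $1,2$ remain in $S_t$, bounding the conditional divergence by at most $2\tilde\varepsilon^2/|S_t|$. Finally it controls $\mathbb{P}[1\in S_t \text{ or } 2\in S_t]\le 3|S_t|/k$, which sums to $6\tilde\varepsilon^2(k-1)/k\le 6\tilde\varepsilon^2=\pi^2\varepsilon^2/V$.

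You instead lift to the latent vector $U$ by data processing, tensorize over the independent coordinates, and use the closed-form KL between location-shifted Gumbel laws. This gives $2(\cosh\tilde\varepsilon-1)\le 2\tilde\varepsilon^2=\pi^2\varepsilon^2/(3V)$, a constant three times better than the paper's. Your argument is shorter and avoids the case analysis entirely. It also needs none of the Plackett--Luce/IIA structure: it works verbatim for any noise family whose location shifts have KL of order $\varepsilon^2/V$, such as Gaussian noise with variance $V$.

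What the paper's route buys is that it measures only the information actually carried by the filtered observation. For full-ranking feedback this is irrelevant, because the target rate $V/\varepsilon^2$ has no information-loss factor, so data processing loses nothing essential. For winner-only feedback, however, your data-processing bound would remain of order $\tilde\varepsilon^2$ and miss the crucial factor $p=\pmin$. That is why the paper computes the KL of the winner distribution directly in Lemma~\ref{lem: KL divergence for winning distributions WO}, and why the per-observation computation is the template it reuses there.
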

\begin{proof}
    In what follows, we abuse notation and use $P[\cdot]$ and $\bb{P}_u[\cdot]$ to denote the same probability measure.  

    Consider any sample from $\rum$. Let $\sigma\in \Sk$ and denote by $i_j$ the index of the $j$-th order statistic. Then, we can factorize the probability of observing $\sigma$ as 
    \begin{equation}\label{factorization prob of sigma}
        P(\sigma) = \bb{P}_u[i_1= \sigma^{-1}(1)]\cdot \prod_{t=2}^{k-1}\bb{P}_u[i_t = \sigma^{-1}(t)\, | \, i_j = \sigma^{-1}(j)\, \text{ for }j\leq t-1]. 
    \end{equation}
    A similar factorization holds for $Q$, and these allow us to think of the sampling process as a sequential one: first, the top item $i_1$ is chosen among all items $1,\dots,k$; then, item $i_2$ is picked among all the remaining items; and so on, until only one item is left and is chosen as the last element of the ranking. 
    
    According to this interpretation, if $S_t\subseteq [k]$ is the set of items left after the first $t-1$ items have been chosen, we define
    \[
    P[i \, |\, S_t] := \frac{e^{u_{i}/\beta}}{\sum_{\ell\in S_t}e^{u_\ell/\beta}} \quad\text{for all }i\in S_t.
    \]
    Now, using \eqref{factorization prob of sigma}, we can write 
    \begin{equation}\label{chain rule for KL}
        \kl{P}{Q} = \sum_{t=1}^{k-1}\bb{E}_{\nu,\bm{u}}[\kl{P[\cdot\, |\, S_t]}{Q[\cdot\, |\, S_t]}],
    \end{equation}
    where the expectation is taken with respect to $\bb{P}_u$ over all pairs $(i,S_t)$ where necessarily $i\in S_t$. 

    For simplicity, we introduce the parameter $\tilde{\varepsilon} := \varepsilon/\beta$, which allows us to rewrite 
    \[
    u_1/\beta = \Tilde{\varepsilon}\,\text{ and }\, u_\ell/\beta = 0\,\text{ for all }\,\ell > 1, 
    \]
    as well as 
    \[
    v_2/\beta = \Tilde{\varepsilon}\,\text{ and }\, v_\ell/\beta = 0\,\text{ for all }\,\ell \neq 2.
    \]
    Recall that $\Tilde{\varepsilon}\in (0,1/2)$, which is exactly the small-gap condition $\varepsilon < \sqrt{6V}/(2\pi)$ assumed above. To estimate the KL divergence between $P$ and $Q$, we consider three different cases. In the following analysis, we fix any $t\in [k-1]$. 

    \textbf{Case 1}: $\{1,2\}\cap S_t = \emptyset$. \\
    In this case, $1$ and $2$ have already been picked as the $t$-th decision round starts, so the conditional winning probabilities given $S_t$ induced by $\bm{u}$ and $\bm{v}$ are the same:
    \[
    \kl{ P[\cdot \, |\, S_t]}{Q[\cdot\, |\, S_t]} = 0.
    \]

    \textbf{Case 2}: $1,2\in S_t$. \\
    In this case, we have
    \begin{equation*}
        \begin{split}
        \kl{ P[\cdot \, |\, S_t]}{Q[\cdot \, |\, S_t]} & = \sum_{i_t\in S_t} P[i_t \, |\, S_t] \log \frac{ P[i_t \, |\, S_t]}{Q[i_t \, |\, S_t]} \\
        & = \sum_{i_t\in S_t\setminus \{1,2\}} P[i_t \, |\, S_t] \log \frac{ P[i_t \, |\, S_t]}{Q[i_t \, |\, S_t]} +  \sum_{\ell=1,2}P[\ell\, |\, S_t]\log \frac{ P[\ell\, |\, S_t]}{Q[\ell\, |\, S_t]} \\
        & \stackrel{\text{(i)}}{=} \frac{e^{\Tilde{\varepsilon}}}{|S_t| - 1 + e^{\Tilde{\varepsilon}}}\log(e^{\Tilde{\varepsilon}}) + \frac{1}{|S_t| - 1 + e^{\Tilde{\varepsilon}}}\log(e^{-\Tilde{\varepsilon}}) \\
        & = \frac{(e^{\Tilde{\varepsilon}}-1)\Tilde{\varepsilon}}{|S_t| - 1 + e^{\Tilde{\varepsilon}}} \\
        & \stackrel{\text{(ii)}}{\leq} \frac{\Tilde{\varepsilon}({\Tilde{\varepsilon}} + {\Tilde{\varepsilon}}^2)}{|S_t|},
        \end{split}
    \end{equation*}
    where (i) follows from the fact that the first sum is equal to zero and (ii) is due to $e^x - 1 \leq x + x^2$ for $x\in (0,1/2)$. 

    \textbf{Case 3}: $|S_t \cap \{1,2\}| = 1$.\\
    In this case, we consider two further subcases corresponding to: (a) $1\in S_t$ and $2\notin S_t$, and (b) $2\in S_t$ and $1\notin S_t$. 
    
    Consider (a) first. Setting $W := \sum_{j\in S_t}e^{u_j/\beta}$ and $W' := \sum_{j\in S_t}e^{v_j/\beta}$, we obtain
    \begin{equation*}
        \begin{split}
            \kl{ P[\cdot \, |\, S_t]}{Q[\cdot \, |\, S_t]} &=  \sum_{j\in S_t} P[i_t = j\, |\, S_t]\log \frac{ P[i_t = j\, |\, S_t]}{Q[i_t=j\, |\, S_t]} \\
            & = \frac{e^{\Tilde{\varepsilon}}}{W}\log\frac{W'\cdot e^{\Tilde{\varepsilon}}}{W} + \sum_{j\in S_t\setminus\{1\}}\frac{1}{W}\log \frac{W'}{W} \\
            & = \frac{e^{\Tilde{\varepsilon}}}{W}\left(\Tilde{\varepsilon} + \log\frac{W'}{W}\right) + \sum_{j\in S_t\setminus\{1\}}\frac{1}{W}\log\frac{W'}{W} \\
            & = \frac{e^{\Tilde{\varepsilon}}\Tilde{\varepsilon}}{W} + \log\frac{W'}{W}. 
        \end{split}
    \end{equation*}
    To bound the term with the logarithm, observe that $\log(1/y)\leq 1/y -1$ for every $y>0$. Thus, we have
    \[
    \log\frac{W'}{W} = \log\frac{|S_t|}{W} \leq \frac{|S_t|}{W} - 1 = \frac{1-e^{\Tilde{\varepsilon}}}{W},
    \]
    where we used the fact that $W'=|S_t|$ and $W = |S_t|-1+e^{\Tilde{\varepsilon}}$. It follows that 
    \begin{equation*}
        \begin{split}
            \kl{ P[\cdot \, |\, S_t]}{Q[\cdot \, |\, S_t]} & \leq \frac{e^{\Tilde{\varepsilon}}\Tilde{\varepsilon}}{W} + \frac{1-e^{\Tilde{\varepsilon}}}{W} \\
            & = \frac{e^{\Tilde{\varepsilon}}\Tilde{\varepsilon} + 1 - e^{\Tilde{\varepsilon}}}{W} \\
            & \leq \frac{{\Tilde{\varepsilon}}^2}{|S_t|},
        \end{split}
    \end{equation*}
    using the inequality $W \geq |S_t|$ and observing that 
    \[
    e^x x + 1 - e^x = e^x(x-1) + 1 \leq x^2\, \text{ for }x\in (0,1/2). 
    \]
    
    The calculations for scenario (b) are similar and yield
    
    \begin{equation}\label{case 3.2}
    \begin{split}
        \kl{ P[\cdot \, |\, S_t]}{Q[\cdot \, |\, S_t]} & = \log\frac{W'}{W} - \frac{\Tilde{\varepsilon}}{W} \\
        & = \log\left(1 + \frac{e^{\Tilde{\varepsilon}} - 1}{|S_t|}\right) - \frac{\Tilde{\varepsilon}}{|S_t|} \\
        & \stackrel{\text{(i)}}{\leq}  \frac{e^{\Tilde{\varepsilon}} - 1}{|S_t|} -\frac{\Tilde{\varepsilon}}{|S_t|} \\
        & =  \frac{e^{\Tilde{\varepsilon}} - 1 - \Tilde{\varepsilon}}{|S_t|} \\
        & \stackrel{\text{(ii)}}{\leq}  \frac{\Tilde{\varepsilon}^2 + \Tilde{\varepsilon} -\Tilde{\varepsilon}}{|S_t|} \\
        & =  \frac{\Tilde{\varepsilon}^2}{|S_t|},
    \end{split}
    \end{equation}
    where (i) is due to $\log(1+x) \leq x$ for $x>0$, whereas for (ii), we relied again on the fact that $e^{\Tilde{\varepsilon}}-1 \leq \Tilde{\varepsilon} + {\Tilde{\varepsilon}}^2$.  
    
    Now, observe that $\bb{E}_{\nu,\bm{u}}\left[\kl{ P[\cdot \, |\, S_t]}{Q[\cdot \, |\, S_t]}\right]$ can be computed by averaging the KL divergence inside the expectation over all choices of $S_t$, where weights are induced by $P$. Using the bounds derived in the three cases above and the fact that $\Tilde{\varepsilon}^3 \leq \Tilde{\varepsilon}^2$, we obtain 
    \begin{equation}\label{bound for KL at round t}
        \begin{split}
            \bb{E}_{\nu,\bm{u}}\left[\kl{ P[\cdot \, |\, S_t]}{Q[\cdot \, |\, S_t]}\right] & \leq P\left[{1,2 \in S_t}\right]\frac{\Tilde{\varepsilon}^2 + \Tilde{\varepsilon}^3}{|S_t|} + P\left[|\{1,2\}\cap S_t| = 1\right]\frac{\Tilde{\varepsilon}^2}{|S_t|} \\
            & \leq \left(1 - P[1,2\notin S_t]\right)\frac{2\Tilde{\varepsilon}^2}{|S_t|},
        \end{split}
    \end{equation}
    where, with abuse of notation, $P(E)$ denotes the probability of event $E$ under the PL model associated with $\bm{u}$. 
    
    Next, let us observe that 
    \begin{equation*}
        \begin{split}
             P[1\in S_t] & = \prod_{\ell=1}^{t-1}\frac{k-\ell}{k-\ell + e^{\Tilde{\varepsilon}}},
        \end{split}
    \end{equation*}
    where $k-\ell+e^{\Tilde{\varepsilon}}\geq k-\ell+1$ for every $\ell \in [t-1]$. This implies 
    \begin{equation*}
        \begin{split}
             P[1\in S_t] \leq \prod_{\ell=1}^{t-1}\frac{k-\ell}{k-\ell+1} = \frac{k-t+1}{k} = \frac{|S_t|}{k}. 
        \end{split}
    \end{equation*}
    On the other hand, one has 
    \begin{equation*}
    \begin{split}
         P[2\in S_t] \leq \prod_{\ell=1}^{t-1}\frac{k-(\ell+1)+e^{\Tilde{\varepsilon}}}{k-\ell + e^{\Tilde{\varepsilon}}} = \frac{|S_t|+e^{\Tilde{\varepsilon}} - 1}{k+e^{\Tilde{\varepsilon}}-1}.  
    \end{split}
    \end{equation*}
    By using a union bound, we have
    \[
    (1 - P\left[1,2\notin S_t\right]) =  P[1\in S_t \text{ or }2\in S_t] \leq \frac{|S_t|}{k} + \frac{|S_t| + e^{\Tilde{\varepsilon}} -1}{k+e^{\Tilde{\varepsilon}} - 1}.
    \]
    Since $\Tilde{\varepsilon} > 0$, $k+ e^{\Tilde{\varepsilon}} - 1 > k$; moreover, $|S_t| \geq 1$. Therefore, we can bound
    \[
    \frac{|S_t| + e^{\Tilde{\varepsilon}} -1}{k+e^{\Tilde{\varepsilon}} - 1} \leq \frac{|S_t|(1 + e^{\Tilde{\varepsilon}} - 1)}{k} = \frac{e^{\Tilde{\varepsilon}}|S_t|}{k}.
    \]
    From this, we obtain 
    \[
     P[1\in S_t \text{ or }2\in S_t] \leq \frac{|S_t|(1 + e^{\Tilde{\varepsilon}})}{k} \leq \frac{3|S_t|}{k},
    \]
    where we used the fact that $e^x + 1\leq 3$ for every $x\in (0,1/2)$. Together with \eqref{bound for KL at round t}, this allows us to derive our final bound 
    \begin{equation*}
        \bb{E}_{\nu,\bm{u}}\left[\kl{ P[\cdot \, |\, S_t]}{Q[\cdot \, |\, S_t]}\right] \leq \frac{6 \Tilde{\varepsilon}^2}{k}.
    \end{equation*}
    
    Combining this with expression \eqref{chain rule for KL} yields 
    \begin{equation*}
    \begin{split}
        \kl{P}{Q} & \leq \sum_{t=1}^{k-1}\frac{6\Tilde{\varepsilon}^2}{k} \\
        & = \frac{6(k-1)\Tilde{\varepsilon}^2}{k} \\
        & \leq 6\Tilde{\varepsilon}^2. 
    \end{split}
    \end{equation*}
    By recalling that $\Tilde{\varepsilon} = \varepsilon/\beta$, we finally get
    \[
    \kl{P}{Q} \leq \frac{6\varepsilon^2}{\beta^2} = \frac{\pi^2 \varepsilon^2}{V}. 
    \]
\end{proof}

Let $\mathcal{A}^* = (\tstar,\shatstar)$ be an $\epdel$-PAC algorithm, and define the $\mathcal{F}_{\tstar}$-measurable event 
\[
E = \left\{\shatstar(1) < \shatstar(2)\right\}. 
\]
Then 
\[
\bb{P}_u[E] \geq 1-\delta\quad\text{and}\quad \bb{P}_v[E] \leq \delta. 
\]
Let us assume that $\delta \in (0,1/4]$, as the confidence parameter is typically chosen to be very close to zero. For $x\geq 1-\delta$ and $y\leq \delta$, the binary relative entropy satisfies  
\[
d(x,y) \geq d(1-\delta,\delta) \geq \log\left(\frac{1}{3\delta}\right). 
\]
Using this, a direct application of Lemma \ref{Lemma 1 - Kaufmann} yields
\[
\bb{E}_{\nu,\bm{u}}[\tstar] \kl{P}{Q} \geq \log\left(\frac{1}{3\delta}\right).
\]
Hence, by Lemma \ref{bounds on the KL divergence between reward distributions FR}, we obtain 
\[
\bb{E}_{\nu,\bm{u}}[\tstar] \geq \frac{V}{\pi^2\varepsilon^2}\log\left(\frac{1}{3\delta}\right). 
\]
In conclusion, since $\log(1/(3\delta)) \geq \frac{1}{5}\log(1/\delta)$ for every $\delta\in (0,1/4]$, we have 
\[
\bb{E}_{\nu,\bm{u}}[\tstar] = \Omega\left(\frac{V}{\varepsilon^2}\log\left(\frac{1}{\delta}\right)\right). 
\]

\section{Technical Results and Proofs for Section~\ref{sec: WO learning}}\label{app: WO learning}
\subsection{Concentration Bounds}\label{app: concentration bounds WO}

The following result provides anytime-concentration bounds for the unbiased estimators introduced in Section \ref{sec: WO learning}. Our inequalities are based on \citep{maurer_empiricalbernstein}. The proof of the lemma below is analogous to the one of Lemma \ref{lem: concentrations for FR}.

\begin{lemma}\label{lem: concentrations for WO}
    Fix $\delta\in (0,1)$ and let $\deltan{n}$ be as defined in Section~\ref{sec: PP probabilities}. Then, with probability at least $1 - \delta$, the following hold simultaneously for all items $i\in [k]$ and every $n\in\nn\setminus\{1\}$:
    \begin{align}
        |\wpi - \esti{i}{n}| & \leq \sqrt{\frac{2\svi{i}{n}\log(4/\deltan{n})}{n}} + \frac{7\log(4/\deltan{n})}{3(n-1)}, \label{concentration of wp}\\
        \left|\sqrt{\svi{i}{n}} - \sqrt{\wpi(1-\wpi)}\right| & \leq \sqrt{\frac{2\log(2/\deltan{n})}{n-1}}. \label{concentration of sv for wp}
    \end{align}
\end{lemma}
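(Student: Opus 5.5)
The plan mirrors the proof of Lemma~\ref{lem: concentrations for FR}. For each fixed item $i\in[k]$ and each $n\geq 2$, the indicators $\yi{i}{1},\dots,\yi{i}{n}$ are i.i.d. Bernoulli variables with mean $\wpi$. This holds because the vectors $U^t$ are independent copies of $U$, and $\varphi_t=\wo(U^t)$ is a measurable function of $U^t$. In particular, they are $[0,1]$-valued, so the results of \citet{maurer_empiricalbernstein} apply directly. Moreover, $\esti{i}{n}$ is their empirical mean and $\svi{i}{n}$ is exactly their unbiased sample variance, written in the pairwise form used by Maurer and Pontil. One small point: the confidence sequence to use here is the winner-only one, $\deltan{n}=3\delta/(\pi^2n^2k)$ from Section~\ref{sec: WP estimators}, not the full-ranking one. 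I would read the reference to Section~\ref{sec: PP probabilities} in the statement in that sense.

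Define, for each $i$ and $n\geq 2$, the events
\[
A_i^n:=\left\{|\wpi-\esti{i}{n}|\leq \sqrt{\tfrac{2\svi{i}{n}\log(4/\deltan{n})}{n}}+\tfrac{7\log(4/\deltan{n})}{3(n-1)}\right\}
\]
and
\[
B_i^n:=\left\{\left|\sqrt{\svi{i}{n}}-\sqrt{\wpi(1-\wpi)}\right|\leq\sqrt{\tfrac{2\log(2/\deltan{n})}{n-1}}\right\}.
\]
Theorem~4 of \citet{maurer_empiricalbernstein} (the two-sided empirical Bernstein bound, with confidence split as $\deltan{n}/2$ per side, which produces the factor $4$) gives $\pp{(A_i^n)^c}\leq\deltan{n}$. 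Theorem~10 there (the two-sided concentration of the sample standard deviation) gives $\pp{(B_i^n)^c}\leq\deltan{n}$.

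Finally, take a union bound over $i\in[k]$, $n\in\nn$ and the two event families:
\[
\pp{\bigcup_{n,i}(A_i^n)^c\cup(B_i^n)^c}\leq \sum_{n\geq1}2k\deltan{n}=\frac{6\delta}{\pi^2}\sum_{n\geq1}\frac1{n^2}=\delta.
\]
Taking complements gives the claim.

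The argument has no real obstacle. The only care needed is in the bookkeeping of constants. First, the Maurer–Pontil statements are one-sided or stated with slightly different logarithmic factors, so one must check that the two-sided versions hold with $\log(4/\deltan{n})$ and $\log(2/\deltan{n})$ respectively. Second, the factor $k$ in $\deltan{n}$ is exactly what absorbs the union over items, in place of the $k(k-1)/2$ pairs used in the full-ranking case.
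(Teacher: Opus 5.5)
Your proposal is correct and follows the paper's proof essentially step for step: the events $A_i^n$ and $B_i^n$, Theorems~4 and~10 of Maurer--Pontil giving failure probability at most $\deltan{n}$ each, and a union bound over items, times and the two event families, with $\sum_{n} 2k\deltan{n} = \delta$. Your reading of $\deltan{n}$ as the winner-only confidence $3\delta/(\pi^2 n^2 k)$, despite the statement's reference to the full-ranking section, matches the computation in the paper's own proof.
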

\begin{proof}
    For fixed $i\in [k]$ and $n\geq 2$, we define the events
    \[
    A_{i}^n := \left\{|\wpi - \esti{i}{n}| \leq \sqrt{\frac{2\svi{i}{n}\log(4/\deltan{n})}{n}} + \frac{7\log(4/\deltan{n})}{3(n-1)}\right\}
    \]
    and 
    \[
    B_{i}^n := \left\{\left|\sqrt{\svi{i}{n}} - \sqrt{\wpi(1-\wpi)}\right| \leq \sqrt{\frac{2\log(2/\deltan{n})}{n-1}}\right\}.
    \]
    By Theorem 4 and Theorem 10 in \citep{maurer_empiricalbernstein}, we have $\pp{A_{i}^n} \geq 1 -\deltan{n}$ and $\pp{B_{i}^n} \geq 1 - \deltan{n}$. By taking a union bound, we obtain
    \begin{equation*}
        \begin{split}
            \pp{\bigcap_{n\in\nn}\left(\bigcap_{i\in [k]} (A_{i}^n \cap B_{i}^n)\right)} & = 1 - \pp{\bigcup_{n\in\nn}\left(\bigcup_{i\in [k]} (A_{i}^n)^c \cup (B_{i}^n)^c\right)} \\
            & \geq 1 - k\sum_{n=1}^\infty 2 \deltan{n} \\
            & = 1 - \frac{6\delta}{\pi^2}\sum_{n=1}^\infty \frac{1}{n^2} \\
            & \stackrel{\text{(i)}}{=} 1 - \delta,
        \end{split}
    \end{equation*}
    where (i) follows from the well-known fact that $\sum_{n\in\nn}1/n^2 = \pi^2/6$. 
\end{proof}

\subsection{Uniform Lower Bound on the Winner-Only Resolution Factor}\label{app: uniform bound on eta WO}

We fix $\nu\in\mathcal{N}_V$ and denote its density and CDF by $f$ and $F$, respectively, and let its standard deviation be $\sigma\in (0,\sqrt V]$. Then, using the notation introduced in Section~\ref{sec: WO learning}, we let $a:=\inf\supp{f}$ and $b:=\sup\supp{f}$, allowing infinite endpoints. By log-concavity, the interior of the support of $f$ is the open interval $(a,b)$.

\begin{lemma}\label{lem: log-concave density lower bound}
    For almost every $x\in\rr$,
    \begin{equation}\label{eq: log-concave density lower bound}
        f(x)\geq\frac{\min\{F(x),1-F(x)\}}{\sqrt{3}\sigma}
        \geq\frac{F(x)(1-F(x))}{\sqrt{3}\sigma}.
    \end{equation}
\end{lemma}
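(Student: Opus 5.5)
The plan is to study the \emph{isoperimetric profile} of $\nu$, namely $I(p) := f(F^{-1}(p))$ for $p\in(0,1)$, and to exploit its concavity, which is a standard consequence of log-concavity. On the interior $(a,b)$ of the support, $F$ is a strictly increasing bijection onto $(0,1)$, so $F^{-1}$ is an honest inverse there. Outside $[a,b]$ we have $f=0$ and $\min\{F,1-F\}=0$, so the inequality is trivial. The boundary points form a null set, which is why the statement holds only almost everywhere.

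\textbf{Step 1 (concavity of $I$).} Since $f$ is log-concave and positive on $(a,b)$, it is locally Lipschitz there and $(\log f)'$ exists a.e. and is non-increasing. Differentiating gives $I'(p) = f'(x)/f(x) = (\log f)'(x)$ with $x=F^{-1}(p)$. Because $p\mapsto F^{-1}(p)$ is increasing, $I'$ is non-increasing in $p$, so $I$ is concave on $(0,1)$. One can also argue directly with one-sided derivatives of the concave function $\log f$, avoiding any differentiability assumption.

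\textbf{Step 2 (reduction to the median).} Let $m=F^{-1}(1/2)$ be the median. Since $I\geq 0$ and $I$ is concave on $(0,1/2]$, its limit at $0^+$ is non-negative. Comparing $I$ with the chord from $(0,0)$ to $(1/2,I(1/2))$ gives $I(p)\geq 2p\,I(1/2)$ for $p\le 1/2$. Symmetrically, $I(p)\geq 2(1-p)I(1/2)$ for $p\geq 1/2$. Substituting $p=F(x)$ yields
\[
f(x)\;\geq\; 2\min\{F(x),1-F(x)\}\, f(m).
\]

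\textbf{Step 3 (median density versus standard deviation).} It then suffices to show $f(m)\geq 1/(2\sqrt3\,\sigma)$, that is, $\sigma^2 f(m)^2\geq 1/12$. This is the sharp lower bound in the known two-sided inequality $1/12\leq \sigma^2 f(m)^2\leq 1/2$ for log-concave densities (Bobkov--Ledoux); equality holds for the uniform law, consistent with the constant $\sqrt3$. I expect this step to be the main obstacle. If I prove it rather than cite it, I would first use the fact that a variance-minimizing log-concave law with a fixed value of $f(m)$ should be uniform. Concretely, I would rewrite $\sigma^2=\tfrac12\int_0^1\!\int_0^1 (F^{-1}(p)-F^{-1}(q))^2\,dp\,dq$. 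I would then use concavity of $I$ to show that $(F^{-1})'=1/I$ is dominated from below by the constant-profile case $I\equiv I(1/2)$, via a rearrangement-type comparison.

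I would be careful about one point in that comparison. The crude bound $\max f\geq f(m)$ together with the elementary fact that $\mathrm{Var}\geq 1/(12\max f^2)$ goes the wrong way, and using $\max f\le e f(m)$ instead would only give a worse constant. So the sharp constant genuinely needs the profile argument or the citation. Once this step is in place, combining it with Step~2 gives $f(x)\geq \min\{F,1-F\}/(\sqrt3\sigma)$. The second inequality in \eqref{eq: log-concave density lower bound} is then immediate from $F(1-F)\leq \min\{F,1-F\}$, since both factors lie in $[0,1]$.
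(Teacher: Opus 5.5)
Your proof is correct and is essentially the paper's. Your chord inequality $I(p)\geq 2\min\{p,1-p\}\,I(1/2)$, obtained from concavity of the profile, is exactly the paper's step using monotonicity of the reversed hazard rate $f/F$ and the hazard rate $f/(1-F)$ on either side of the median, and both arguments then close with the same Bobkov--Ledoux bound $f(m)\geq 1/(\sqrt{12}\,\sigma)$.

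One caution on your optional self-contained sketch of Step 3: $1/I$ is not bounded below pointwise by the constant $1/I(1/2)$ (for the exponential law, $I(p)=1-p$ exceeds $I(1/2)=1/2$ on $(0,1/2)$), so a direct proof would compare $I$ with its tangent line at $1/2$ (i.e.\ with truncated exponential laws) rather than with a constant; citing the bound, as the paper does, is enough.
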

\begin{proof}
    Let $z_0$ be the median of $\nu$, so that $F(z_0)=1/2$. The hazard rate $f/(1-F)$ is non-decreasing and the reversed hazard rate $f/F$ is non-increasing on $(a,b)$ by log-concavity \citep{bagnoli2005log}. Therefore,
    \[
    \frac{f(x)}{F(x)}\geq 2f(z_0)\quad(x\leq z_0),
    \qquad
    \frac{f(x)}{1-F(x)}\geq 2f(z_0)\quad(x\geq z_0).
    \]
    The median-density bound $f(z_0)\geq 1/(\sqrt{12}\sigma)$ follows from \citet[Proposition~B.2, Equation~(B.6)]{bobkov_ledoux_empirical}. On the two sides of the median, the relevant denominator is respectively $F(x)$ and $1-F(x)$, which proves the first inequality. The second follows from $\min\{q,1-q\}\geq q(1-q)$ for $q\in[0,1]$. Outside $(a,b)$ the bounds hold almost everywhere because their right-hand sides vanish.
\end{proof}

The next lemma compares the winning probabilities of any two items and provide a lower bound on their difference in terms of the largest of the two.

\begin{lemma}\label{lem: winning-probability difference lower bound}
    Let $\nu\in\mathcal{N}_V$ have standard deviation $\sigma$, and let $\bm{u}\in\rr^k$. If $u_i>u_j$, then
    \begin{equation}\label{eq: wp difference lower bound}
        P_i-P_j\geq\frac{1-e^{-(u_i-u_j)/(\sqrt{3}\sigma)}}{2}P_i.
    \end{equation}
\end{lemma}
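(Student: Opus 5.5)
The plan is to reduce the inequality to a pointwise comparison of integrands and then exploit the monotone reversed hazard rate together with the density lower bound of Lemma~\ref{lem: log-concave density lower bound}. Set $\Delta:=u_i-u_j>0$ and $H(x):=\prod_{\ell\neq i,j}F(x-u_\ell)$, which is non-negative and non-decreasing. Conditioning on the value of the winner's random utility gives
\[
P_i=\int_\rr f(x-u_i)F(x-u_j)H(x)\,\mathrm{d}x,\qquad P_j=\int_\rr f(x-u_j)F(x-u_i)H(x)\,\mathrm{d}x .
\]
Write $y:=x-u_i$, so that $x-u_j=y+\Delta$. The claim will follow by integration against $H$ once I show, for almost every $y$,
\[
f(y)F(y+\Delta)-f(y+\Delta)F(y)\;\geq\;c\,f(y)F(y+\Delta),\qquad c:=\frac{1-e^{-\Delta/(\sqrt3\sigma)}}{2}.
\]
With the reversed hazard $r:=f/F$, this pointwise inequality is equivalent to $r(y+\Delta)\leq(1-c)\,r(y)$ wherever $F(y)>0$. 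Where $F(y)=0$ the left side is $\geq 0$ and $f(y)=0$ a.e., so those $y$ are trivial.

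To prove the pointwise bound I will use that $\log F$ is concave with derivative $r$, which is non-increasing on $(a,b)$ by log-concavity, as recalled in the proof of Lemma~\ref{lem: log-concave density lower bound}. I will split according to the position of $y$ and $y+\Delta$ relative to the median $z_0$.

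\emph{Left side ($y+\Delta\le z_0$).} Here Lemma~\ref{lem: log-concave density lower bound} gives $r\geq 1/(\sqrt3\sigma)$ on the whole segment. Hence $\log F(y+\Delta)-\log F(y)=\int_y^{y+\Delta}r\geq\Delta/(\sqrt3\sigma)$, so $F(y)\le e^{-\Delta/(\sqrt3\sigma)}F(y+\Delta)$. I then want to convert this ratio of $F$'s into a ratio of $r$'s. I expect this to go through the identity $f(y)F(y+\Delta)-f(y+\Delta)F(y)=F(y)F(y+\Delta)\,(r(y)-r(y+\Delta))$ and the mean-value form of concavity.

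\emph{Right side ($y\ge z_0$).} Here I will work instead with the hazard $f/(1-F)$, which is non-decreasing, and with the bound $f\ge(1-F)/(\sqrt3\sigma)$. These give $1-F(y+\Delta)\le e^{-\Delta/(\sqrt3\sigma)}(1-F(y))$. This should bound $f(y+\Delta)$ relative to $f(y)$, using that $F(y)\ge 1/2$ and $F(y+\Delta)\le 1$.

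\emph{Mixed case ($y<z_0<y+\Delta$).} I will split the segment at $z_0$ and combine the two estimates. The factor $1/2$ in $c$ is meant to absorb the losses from $F\ge 1/2$ and from this gluing step.

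The main obstacle is exactly this pointwise step. A pure hazard-rate ratio $r(y+\Delta)/r(y)$ need not be uniformly bounded by $1-c$ in every regime; the uniform density provides extreme cases. If the pointwise route fails, my fallback is an integrated version. I would write
\[
P_i-P_j=\int_\rr\bigl[f(y)F(y+\Delta)-f(y+\Delta)F(y)\bigr]\tilde H(y)\,\mathrm{d}y,
\]
with $\tilde H(y):=H(y+u_i)$ non-decreasing, and control $\int f(y+\Delta)F(y)\tilde H(y)\,\mathrm{d}y$ by the substitution $y\mapsto y-\Delta$. Monotonicity of $\tilde H$ then lets me compare $\int f(y)F(y-\Delta)\tilde H(y)\,\mathrm{d}y$ to $\int f(y)F(y+\Delta)\tilde H(y)\,\mathrm{d}y$. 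The ratio $F(y-\Delta)/F(y+\Delta)$ is in turn bounded using Lemma~\ref{lem: log-concave density lower bound}, as $\le e^{-\Delta/(\sqrt3\sigma)}$ on the left of the median and as $\le 1$ otherwise. Averaging these two regimes is where I expect the $\tfrac12(1-e^{-\Delta/(\sqrt3\sigma)})$ constant to emerge.
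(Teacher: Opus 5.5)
\textbf{There is a genuine gap: neither your pointwise route nor your fallback closes.}

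The pointwise inequality $f(y)F(y+\Delta)-f(y+\Delta)F(y)\geq c\,f(y)F(y+\Delta)$ is false wherever $\log F$ is affine on $[y,y+\Delta]$. For Laplace noise, $F(y)=\frac{1}{2}e^{y/s}$ for $y<0$. So $f/F\equiv 1/s$ there, and the left-hand side is identically $0$ whenever $y+\Delta<0$, even though $F(y)\le e^{-\Delta/(\sqrt{3}\sigma)}F(y+\Delta)$ holds. A ratio bound on $F$ does not give a ratio bound on $r$. The uniform density is not an obstruction; it satisfies the pointwise bound. A reflected exponential breaks your right-of-median case in the same way, because a tail-mass bound on $1-F$ says nothing pointwise about $f(y+\Delta)/f(y)$.

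The fallback fails for a different reason. After $y\mapsto y-\Delta$ and $\tilde H(z-\Delta)\le\tilde H(z)$, you are left with $P_i-P_j\ge\int f(z)\,[F(z+\Delta)-F(z-\Delta)]\,\tilde H(z)\,\mathrm{d}z$. To the right of the median, $F(z\pm\Delta)$ are both close to $1$, so this bracket is small there. Since $\tilde H$ is non-decreasing, it can put almost all the weight in that region, so no averaging of regimes yields a fixed fraction of $P_i$. Concretely, take Gumbel noise of scale $\beta$ and utilities $u_i=\Delta$, $u_j=0$, $u_\ell=L$. Your lower bound divided by $P_i$ is of order $e^{-L/\beta}$ and tends to $0$ as $L\to\infty$, whereas $(P_i-P_j)/P_i=1-e^{-\Delta/\beta}$. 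In that regime the gain sits in $\tilde H(z)-\tilde H(z-\Delta)$, i.e.\ in the decay of $f$, which is exactly what your monotonicity step discards.

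\textbf{The missing idea} is to prove the inequality on upper sets rather than pointwise, and then pass to the non-decreasing weight by a layer-cake argument.

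\begin{enumerate}
\item Put $D(z):=f(z)-f(z+\Delta)F(z)/F(z+\Delta)$. It is non-negative a.e.\ because $f/F$ is non-increasing, and $D(z)\ge f(z)-f(z+\Delta)$ because $F(z)\le F(z+\Delta)$.
\item For a threshold $\alpha\ge z_0$, this gives $\int_\alpha^\infty D\ge \bar{F}(\alpha)-\bar{F}(\alpha+\Delta)\ge\bigl(1-e^{-\Delta/(\sqrt{3}\sigma)}\bigr)\bar{F}(\alpha)$. Here you integrate the hazard bound $f\ge\bar{F}/(\sqrt{3}\sigma)$ to the right of the median.
\item For $\alpha<z_0$, non-negativity of $D$ reduces to the case $\alpha=z_0$, where $\bar{F}(z_0)=1/2$. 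This, not an averaging step, is where the factor $1/2$ in $c$ comes from. So $\int_\alpha^\infty D\ge c\int_\alpha^\infty f$ for every $\alpha$.
\item Write the non-decreasing weight $w(z):=F(z+\Delta)\tilde H(z)$ as $\int_0^{\sup w}\mathbbm{1}\{w(z)>s\}\,\mathrm{d}s$; its level sets are upper intervals. Tonelli then yields $P_i-P_j=\int wD\ge c\int wf=cP_i$.
\end{enumerate}

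Your integral representation with non-decreasing $\tilde H$ is the right starting point. It is the pointwise and transfer steps that must be replaced by this tail argument.
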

\begin{proof}
    Set $t:=u_i-u_j>0$, $\Bar{F}:=1-F$, and let $z_0$ be the median of $\nu$. Define
    \[
    D_t(z):=f(z)-f(z+t)\frac{F(z)}{F(z+t)},
    \qquad
    c_t:=\frac{1-e^{-t/(\sqrt{3}\sigma)}}{2},
    \]
    assigning the ratio $F(z)/F(z+t)$ the value zero when its denominator is zero. In that case $F(z)=0$ and $f(z)=0$ almost everywhere as well. Since $F$ is log-concave, $f/F$ is non-increasing on $(a,b)$. Thus $D_t(z)\geq0$ almost everywhere in the support, including when $z+t$ is beyond its upper endpoint. Outside the support, $D_t(z)=0$ almost everywhere. Also, $F(z)\leq F(z+t)$ implies that the following holds for almost every $z\in \rr$: 
    \[
    D_t(z)\geq f(z)-f(z+t).
    \]
    We now prove that, for every $\alpha\in\rr$: 
    \begin{equation}\label{eq: claim on D_t}
        \int_\alpha^\infty D_t(z)\,dz\geq c_t\Bar{F}(\alpha).
    \end{equation}
    To this end, let us first suppose that $\alpha\geq z_0$. If $\Bar{F}(\alpha)=0$, the claim follows from non-negativity. Otherwise, for almost every $z\geq\alpha$ in the support, Lemma~\ref{lem: log-concave density lower bound} gives
    \[
    \frac{f(z)}{\Bar{F}(z)}\geq\frac{1}{\sqrt{3}\sigma}.
    \]
    If $\alpha+t<b$, integration over $[\alpha,\alpha+t]$ yields
    \[
    \log\frac{\Bar{F}(\alpha+t)}{\Bar{F}(\alpha)}
    =-\int_\alpha^{\alpha+t}\frac{f(z)}{\Bar{F}(z)}\,dz
    \leq-\frac{t}{\sqrt{3}\sigma}.
    \]
    Hence $\Bar{F}(\alpha+t)\leq e^{-t/(\sqrt{3}\sigma)}\Bar{F}(\alpha)$. The same inequality holds when $\alpha+t\geq b$, because then $\Bar{F}(\alpha+t)=0$. Consequently,
    \[
    \begin{aligned}
        \int_\alpha^\infty D_t(z)\,dz
        &\geq\Bar{F}(\alpha)-\Bar{F}(\alpha+t) =\\
        &\geq\bigl(1-e^{-t/(\sqrt{3}\sigma)}\bigr)\Bar{F}(\alpha)
        =2c_t\Bar{F}(\alpha)
        \geq c_t\Bar{F}(\alpha).
    \end{aligned}
    \]
    If $\alpha<z_0$, non-negativity and the preceding bound at $z_0$ give
    \[
    \int_\alpha^\infty D_t(z)\,dz
    \geq\int_{z_0}^\infty D_t(z)\,dz
    \geq 2c_t\Bar{F}(z_0)=c_t\geq c_t\Bar{F}(\alpha).
    \]
    This proves Equation~\eqref{eq: claim on D_t}.

    Let $w$ be any bounded, non-negative, non-decreasing function on $\rr$, and set $M:=\sup_z w(z)$. If we denote $L_y:=\{z:w(z)>y\}$, its layer-cake representation is
    \[
    w(z)=\int_0^M\mathbbm{1}\{z\in L_y\}\,dy.
    \]
    Each $L_y$ is an upper interval, possibly empty or all of $\rr$. Equation~\eqref{eq: claim on D_t} applies to these sets as well, by absolute continuity and monotone convergence. Tonelli's theorem therefore gives
    \[
    \begin{aligned}
        \int_\rr w(z)D_t(z)\,dz
        &=\int_0^M\int_{L_y}D_t(z)\,dz\,dy \geq \\
        &\geq c_t\int_0^M\int_{L_y}f(z)\,dz\,dy
        =c_t\int_\rr w(z)f(z)\,dz.
    \end{aligned}
    \]
    Now choose
    \[
    w(z):=F(z+t)\prod_{\ell\in[k]\setminus\{i,j\}}F(z+u_i-u_\ell).
    \]
    Expressing both winning probabilities relative to $u_i$ gives
    \[
    P_i=\int_\rr w(z)f(z)\,dz,
    \qquad
    P_i-P_j=\int_\rr w(z)D_t(z)\,dz.
    \]
    The second identity remains valid at points where $F(z+t)=0$: the corresponding integrand for $P_j$ also contains $F(z)=0$. Applying the preceding inequality yields $P_i-P_j\geq c_tP_i$.
\end{proof}

For an instance satisfying Assumption~\ref{assump: the support contains a,b WO}, $P_j>0$. Since $\sigma\leq\sqrt V$, the preceding lemma implies
\[
\frac{P_i}{P_j}
\geq\frac{2}{1+e^{-t/(\sqrt{3}\sigma)}}
\geq\frac{2}{1+e^{-t/\sqrt{3V}}}
=\xi(t),
\]
where $t=u_i-u_j$ and $\xi$ is the function given by Equation~\eqref{eq: xi for WO}. Accordingly, the resolution factor in Equation~\eqref{eta for WO} is also defined in terms of $V$, rather than the unknown true variance.

\begin{lemma}\label{lem: uniform lower bound for eta WO}
    Fix $V>0$. The functions $\eta$ and $\etastar$ in Equations~\eqref{eta for WO} and \eqref{eta star for WO} satisfy
    \[
    \eta(t)\geq\eta^\star(t),\qquad t\in(0,b-a).
    \]
\end{lemma}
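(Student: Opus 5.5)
Setting $x := t/\sqrt{3V} > 0$, I will write $\eta(t) = h(x)$ with $h(x) := (1-e^{-x})/(3+e^{-x})$. Then $\etastar(t) = c\min\{1,x\}$ with $c := (e-1)/(3e+1)$. Note that $c = h(1)$, since $h(1) = (1-e^{-1})/(3+e^{-1}) = (e-1)/(3e+1)$. The claim therefore reduces to the one-variable inequality $h(x) \geq h(1)\min\{1,x\}$ for all $x>0$. The restriction $t\in(0,b-a)$ plays no role beyond guaranteeing $t>0$.

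I will split into two cases.

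\emph{Case $x\geq 1$.} I show that $h$ is increasing, so that $h(x)\geq h(1)=c$. With $y=e^{-x}$ decreasing in $x$, write $h = g(y)$ where $g(y) := (1-y)/(3+y)$. A direct computation gives $g'(y) = -4/(3+y)^2 < 0$. Hence $h$ is strictly increasing in $x$.

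\emph{Case $0<x<1$.} I will show that $h$ is concave on $(0,\infty)$. Together with $h(0)=0$, concavity gives $h(x)\geq x\,h(1)$ for $x\in[0,1]$, since a concave function lies above its chord between $0$ and $1$. To get concavity, I will rewrite
\[
h(x) = \frac{4}{3+e^{-x}} - 1 = \frac{4e^{x}}{3e^{x}+1} - 1 .
\]
Differentiating gives $h'(x) = 4e^{x}/(3e^{x}+1)^2$. Setting $s=e^{x}$, this is $4s/(3s+1)^2$, whose derivative in $s$ is proportional to $(3s+1)-6s = 1-3s$. This is negative for $s\geq 1$, so $h'$ is decreasing in $x$ on $(0,\infty)$, which is exactly concavity of $h$ there.

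The only step that needs any care is the concavity check. I expect it to be a routine derivative computation, because $s=e^{x}>1$ on the relevant range.
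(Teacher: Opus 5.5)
Your proof is correct and follows essentially the same route as the paper's: reduce to $h(x)=(1-e^{-x})/(3+e^{-x})$ with $h(1)=(e-1)/(3e+1)$, then use monotonicity of $h$ for $x\geq 1$ and, for $x\in(0,1)$, concavity of $h$ together with $h(0)=0$ to get the chord bound $h(x)\geq x\,h(1)$. The only differences are cosmetic: you check monotonicity and concavity through the substitutions $y=e^{-x}$ and $s=e^{x}$ instead of computing $h'$ and $h''$ directly, and you obtain concavity on all of $[0,\infty)$ (by continuity at $0$), which is more than the paper's $[0,1]$ and more than needed.
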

\begin{proof}
    Set $h(x):=(1-e^{-x})/(3+e^{-x})$ for $x\geq0$. Its derivatives are
    \[
    h'(x)=\frac{4e^{-x}}{(3+e^{-x})^2}>0,
    \qquad
    h''(x)=-\frac{4e^{-x}(3-e^{-x})}{(3+e^{-x})^3}<0.
    \]
    Thus, $h$ is increasing on $[0,\infty)$ and concave on the interval $[0,1]$, with $h(0)=0$. By concavity on $[0,1]$, we have $h(x)\geq xh(1)$, while monotonicity gives $h(x)\geq h(1)$ for $x\geq1$. Since $h(1)=(e-1)/(3e+1)$ and $\eta(t)=h(t/\sqrt{3V})$,
    \[
    \eta(t)\geq\frac{e-1}{3e+1}\min\left\{1,\frac{t}{\sqrt{3V}}\right\}=\eta^\star(t).
    \]
\end{proof}

\subsection{Consistency of the Winning Probabilities}\label{app: monotonicity of WPs}

In the FR case, we showed that the pairwise preference probability $P_{ij}$ is larger than $1/2$ if and only $u_i > u_j$. Here, we rely on an analogous result for the winning probabilities. 

\begin{lemma}\label{lem: monotonicity of WPs}
    Let $\nu\in\mathcal{N}_V$ and $\bm{u}\in\rr^k$ satisfy Assumption~\emph{\ref{assump: the support contains a,b WO}}. Then $P_i>0$ for every $i\in[k]$. Moreover, for any two items $i,j\in[k]$ with $u_i\geq u_j$,
    \[
    P_i>P_j \Longleftrightarrow u_i>u_j.
    \]
\end{lemma}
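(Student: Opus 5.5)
The plan has three parts: positivity of every $P_i$, the forward implication via Lemma~\ref{lem: winning-probability difference lower bound}, and the reverse implication via symmetry. Throughout I write each winning probability as a one-dimensional integral. Conditioning on $U_i=z+u_i$ gives
\[
P_i=\int_\rr f(z)\prod_{\ell\neq i}F(z+u_i-u_\ell)\,\mathrm{d}z .
\]

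\textbf{Positivity.} Let $u_{\max}=\max_\ell u_\ell$. By log-concavity, $f>0$ on the open interval $(a,b)$, and $F(x)>0$ exactly when $x>a$. The integrand for $P_i$ is therefore strictly positive whenever
\[
z\in(a,b)\quad\text{and}\quad z+u_i-u_\ell>a\ \text{ for all }\ell\neq i .
\]
The binding constraint is $z>a+u_{\max}-u_i$. Since $u_{\max}-u_i\le\Delta_u$, Assumption~\ref{assump: the support contains a,b WO} gives
\[
a+u_{\max}-u_i\le a+\Delta_u<b .
\]
So the interval $(\max\{a,\,a+u_{\max}-u_i\},\,b)$ is nonempty, the integrand is positive on a set of positive Lebesgue measure, and $P_i>0$.

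\textbf{Forward direction} ($u_i>u_j\Rightarrow P_i>P_j$). This follows directly from Lemma~\ref{lem: winning-probability difference lower bound}:
\[
P_i-P_j\ \ge\ \frac{1-e^{-(u_i-u_j)/(\sqrt3\sigma)}}{2}\,P_i .
\]
The coefficient is strictly positive because $u_i-u_j>0$ and $\sigma>0$, the latter holding since $\nu$ is absolutely continuous and hence non-degenerate. Combined with $P_i>0$ from the first step, this gives $P_i>P_j$.

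\textbf{Reverse direction.} Suppose $u_i\ge u_j$ and $P_i>P_j$. If $u_i=u_j$, the two items are exchangeable: swapping $N_i$ and $N_j$ preserves the joint law, because the noises are i.i.d. and every other utility is unchanged. This forces $P_i=P_j$, a contradiction. Hence $u_i>u_j$.

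The only step needing care is positivity. Specifically, one must confirm that $F>0$ strictly to the right of $a$ and $f>0$ on all of $(a,b)$. Both follow from the interior of the support of a log-concave density being the open interval $(a,b)$, as noted at the start of Appendix~\ref{app: uniform bound on eta WO}. Infinite endpoints need no special treatment.
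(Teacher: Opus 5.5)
Your proposal is correct and follows the paper's proof essentially step for step. You use the same integral obtained by conditioning on $N_i$, with a nonempty interval $(\max\{a,\,a+u_{\max}-u_i\},\,b)$ for positivity (which you make explicit where the paper only asserts it), Lemma~\ref{lem: winning-probability difference lower bound} for $u_i>u_j \Rightarrow P_i>P_j$, and exchangeability of the i.i.d. noises to rule out $u_i=u_j$.
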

\begin{proof}
    Let $(a,b)$ be the interior of the support of the density $f$ of $\nu$. Conditioning on the noise of item $i$ gives
    \[
    P_i=\int_\rr f(z)\prod_{\ell\neq i}F(z+u_i-u_\ell)\,dz.
    \]
    Since $\Delta_u<b-a$, there is a nonempty open interval of $z\in(a,b)$ on which $z+u_i-u_\ell>a$ for every $\ell\neq i$. The integrand is strictly positive on that interval, so $P_i>0$.
    If $u_i=u_j$, exchangeability of their independent noises gives $P_i=P_j$. If $u_i>u_j$, Lemma~\ref{lem: winning-probability difference lower bound} gives
    \[
    P_i-P_j\geq \frac{1-e^{-(u_i-u_j)/(\sqrt{3}\sigma)}}{2}P_i>0,
    \]
    where $\sigma^2=\bb{V}\textup{ar}_\nu[N]>0$. This proves the equivalence.
\end{proof}

\subsection{Sample Complexity Upper Bound}\label{app: proof of sample complexity upper bound WO}

In this section, we derive the sample complexity upper bound given by Theorem \ref{thm: sample complexity upper bound for WO} for the WO-VUR algorithm. 

To do so, we let $\deltan{n}$ be defined as in Section~\ref{sec: PP probabilities} and define $\mathcal{E}$ as the event consisting of those outcomes for which the following hold simultaneously for all $i\in [k]$ and every $n\in\nn\setminus\{1\}$:
\begin{align}
    |\wpi - \esti{i}{n}| & \leq \sqrt{\frac{2\svi{i}{n}\log(4/\deltan{n})}{n}} + \frac{7\log(4/\deltan{n})}{3(n-1)}, \label{first condition for E WO}\\
    \left|\sqrt{\svi{i}{n}} - \sqrt{\wpi(1-\wpi)}\right| & \leq \sqrt{\frac{2\log(2/\deltan{n})}{n-1}}. \label{second condition for E WO}
\end{align}

Arguing exactly as in Appendix \ref{app: proof of sample complexity upper bound FR}, condition \eqref{second condition for E WO} allows us to replace the sample variance $\svi{i}{n}$ by the true one, at the price of an extra additive term: 
\begin{equation}\label{empirical to true variance WO}
\sqrt{\frac{2\svi{i}{n}\log(4/\deltan{n})}{n}} + \frac{7\log(4/\deltan{n})}{3(n-1)} \leq \sqrt{\frac{2\wpi(1-\wpi)\log(4/\deltan{n})}{n}} + \frac{13\log(4/\deltan{n})}{3(n-1)}. 
\end{equation}
The left-hand side is the quantity appearing both in \eqref{first condition for E WO} and in the stopping rule \eqref{stopping rule for WO}, so that, whenever $\mathcal{E}$ occurs, both can be controlled through the right-hand side. 

First, we determine a positive integer $n_0$ such that, for every $n\geq n_0$, the following holds for every $i\in [k]$: 
\begin{equation}\label{condition defining n_0 WO}
\sqrt{\frac{2\wpi(1-\wpi)\log(4/\deltan{n})}{n}} + \frac{13\log(4/\deltan{n})}{3(n-1)} \leq \frac{\wpi}{2}.
\end{equation}
By \eqref{empirical to true variance WO}, this guarantees $|\wpi - \esti{i}{n}| \leq \wpi/2$ on $\mathcal{E}$. The inequality is satisfied if the following hold simultaneously:
\[
\sqrt{\frac{2\wpi(1-\wpi)\log(4/\deltan{n})}{n}} < \frac{\wpi}{4}\quad \text{and}\quad\frac{13\log(4/\deltan{n})}{3(n-1)} < \frac{\wpi}{4}.
\]
For these conditions to be satisfied, we can enforce the stronger requirement:
\[
n > \frac{32\log(4/\deltan{n})}{\wpi}.
\]
Since we want this to hold for every $i\in [k]$, let us define $\pmin:= \min_{i\in [k]}\wpi$, and let $n_0$ be the minimum positive integer $n$ such that 
\[
n \geq \max\left\{2,\frac{32\log(4/\deltan{n})}{\pmin}\right\}.
\]
It follows that 
\[
n_0 \leq \frac{32\log(4/\deltan{n_0})}{\pmin} + 1 \leq \frac{128}{\pmin}\log\left(2\pi n_0 \sqrt{\frac{k}{3\delta}}\right). 
\]
By Lemma \ref{lem: Kaufmann - log bound}, there exists a universal constant $C_1 > 0$ such that 
\[
n_0 \leq \frac{C_1}{\pmin} \log\left(\frac{C_1}{\pmin}\cdot\frac{k}{\delta}\right). 
\]

Now assume that $n > n_0$. If $\mathcal{E}$ occurs, then the condition given by \eqref{condition defining n_0 WO} ensures $\wpi < 2\esti{i}{n}$ for every $i\in [k]$, so that $\frac{{\etastar}}{2}\esti{i}{n} > \frac{{\etastar}}{4}\wpi$. In this regime, using \eqref{empirical to true variance WO} once more, the stopping condition of $\mathcal{A}$ is fulfilled if the following hold for every $i\in [k]$: 
\begin{equation}\label{bernstein advantage WO}
\sqrt{\frac{2\wpi(1-\wpi)\log(4/\deltan{n})}{n}} < \frac{\wpi}{8}{\etastar}\quad \text{and}\quad\frac{13\log(4/\deltan{n})}{3(n-1)} < \frac{\wpi}{8}{\etastar}.
\end{equation}
In particular, these hold simultaneously for every $i\in [k]$ if the following stronger requirement is met: 
\[
n \geq\frac{256}{\pmin {\etastar}^2}\log\left(\frac{4}{\deltan{n}}\right).
\]
If we define $n_1$ as the minimum positive integer $n > n_0$ such that 
\[
n \geq \left\lceil \frac{256\log(4/\deltan{n})}{\pmin {\etastar}^2}\right\rceil, 
\]
which exists since the right-hand side grows logarithmically in $n$, then either $n_1 = n_0 + 1$ or the inequality above fails at $n_1 - 1$, in which case $n_1 \leq 256\log(4/\deltan{n_1})/(\pmin {\etastar}^2) + 2$. In both cases, we can apply the same reasoning as above to find 
\[
n_1 \leq \frac{C_2}{\pmin {\etastar}^2}\log\left(\frac{C_2}{\pmin {\etastar}^2}\cdot\frac{k}{\delta}\right),
\]
for some universal constant $C_2 > 0$.

Up to re-tuning $C_1$ and $C_2$, our derivations show that, under event $\mathcal{E}$, the stopping condition is met within a number of steps equal to 
\[
T_0 := \left\lceil\frac{C_2}{\pmin {\etastar}^2}\log\left(\frac{C_2}{\pmin {\etastar}^2}\cdot \frac{k}{\delta}\right)\right\rceil. 
\]
Therefore, we have
\[
\ee{T} = \ee{T\cdot \mathbbm{1}_\mathcal{E}} + \ee{T\cdot \mathbbm{1}_{\mathcal{E}^c}} \leq T_0 \cdot \pp{\mathcal{E}} + \ee{T\cdot \mathbbm{1}_{\mathcal{E}^c}}.
\]
Using the tail-sum formula for the expectation, we can rewrite 
\[
\ee{T\cdot \mathbbm{1}_{\mathcal{E}^c}} = \sum_{n=1}^\infty \pp{T\cdot \mathbbm{1}_{\mathcal{E}^c} \geq n} \leq T_0\cdot \pp{\mathcal{E}^c} + \sum_{n=T_0 + 1}^\infty \pp{T\cdot \mathbbm{1}_{\mathcal{E}^c} \geq n}. 
\]
Now, observe that: 
\[
T\cdot \mathbbm{1}_{\mathcal{E}^c} \geq n \Longleftrightarrow T \geq n \, \text{ and }\, \mathcal{E}^c\text{ occurs}. 
\]
However, we showed above that if $n \geq T_0 + 1$ and $T \geq n$, then necessarily $\mathcal{E}^c$ must have occurred, as otherwise we would have found $T\leq T_0$. Thus, for every $n \geq T_0 + 1$, we have 
\[
\pp{T\cdot \mathbbm{1}_{\mathcal{E}^c} \geq n} = \pp{T \geq n}.
\]
If $T\geq n$, then the stopping rule was not satisfied at time $n-1$. This implies that at least one of the two conditions given by \eqref{first condition for E WO} and \eqref{second condition for E WO} was violated for at least one item $i\in [k]$. By a union bound over the $k$ items and the two conditions, this occurs with probability at most $2k\deltan{n-1}$. Hence, we have
\begin{equation*}
    \begin{split}
        \sum_{n=T_0 + 1}^\infty \pp{T\cdot \mathbbm{1}_{\mathcal{E}^c} \geq n} & \leq 2k\sum_{n=T_0 + 1}^\infty \deltan{n-1} \\
        & = \delta \cdot\sum_{n=T_0}^\infty \frac{6}{\pi^2 n^2} \\
        & \leq C_3,
    \end{split}
\end{equation*}
where $C_3 > 0$ is a universal constant. This implies $\ee{T} \leq T_0 + C_3$. 

In conclusion, there exists a universal constant $C>0$ such that 
\begin{equation*}
    \ee{T} \leq \frac{C}{\pmin {\etastar}^2}\log\left(\frac{C}{\pmin {\etastar}^2}\cdot\frac{k}{\delta}\right).
\end{equation*}
Using the definition of ${\etastar}$, when $\sqrt{3V}\geq\varepsilon$, the inequality above can be rewritten as
\[
\ee{T} \leq \frac{C\cdot V}{\pmin \varepsilon^2}\log\left(\frac{C\cdot V}{\pmin \varepsilon^2}\cdot\frac{k}{\delta}\right).
\]
\qed

\subsection{Information-theoretic Lower Bound}\label{app: proof of sample complexity lower bound WO}

Finally, we prove Theorem \ref{thm: sample complexity lower bound WO}, thus deriving an information-theoretic lower bound for the ranking problem under WO feedback. 

For every $\delta \in (0,1)$, algorithm $\mathcal{A}^*$ recovers an accurate representation of the utility ranking with probability at least $1-\delta$, provided that Assumptions~\ref{assump: bounded variance} and \ref{assump: the support contains a,b WO} are satisfied. In other words, if $\bm{u}$ and $\nu$ are chosen so that these conditions are met, then: 
\[
\pp{\shatstar\text{ is }\varepsilon\text{-accurate for }\sigu \, | \, \rum} \geq 1 -\delta.
\]

We now construct two specific instances of the problem, and we use them to provide a lower bound on the sample complexity of $\mathcal{A}^*$. Throughout, we assume $k\geq 3$. To do so, choose the \textit{minimum-probability level} $p\in (0,1/k)$ and set an accuracy level $\varepsilon\in (0,b-a)$. As we shall see, the latter must be chosen ``sufficiently small'', which, in this case, means $\varepsilon \in (0,\beta)$. However, this restriction does not invalidate our sample complexity result due to its asymptotic nature. 

Then, let $\nu$ be the centered Gumbel distribution with scale and location parameters given by 
\[
\beta = \frac{\sqrt{6V}}{\pi}\quad\text{and}\quad \mu = -\beta\gamma,
\]
where $\gamma$ denotes the Euler--Mascheroni constant. In particular, this ensures that $\nu$ is centered and that its variance is equal to $V$. Since the location parameter shifts all sampled utilities by the same constant, the implied winning probabilities do not depend on $\mu$. 

Now, consider the utility vectors $\bm{u}$ and $\bm{v}$ given by:
\begin{itemize}
    \item $u_1 = \varepsilon/2$, $u_2 = -\varepsilon/2$, $u_3 = \dots = u_{k-1} = 0$, and $u_k = L$.
    \item $v_1 = -\varepsilon/2$, $v_2 = \varepsilon/2$, $v_3 = \dots = v_{k-1} = 0$, and $v_k = L$.
\end{itemize}
Here $k\geq 3$ guarantees that item $k$, which carries the utility $L$, is distinct from items $1$ and $2$; for $k = 3$ the intermediate block is empty, and the two instances reduce to $(\varepsilon/2, -\varepsilon/2, L)$ and $(-\varepsilon/2,\varepsilon/2,L)$.
Up to choosing $\varepsilon$ sufficiently small, the positive constant $L$ is chosen so that the minimum winning probability of both instances is equal to $p$. We let $P$ and $Q$ be the \textit{winning distributions} corresponding to $\bm{u}$ and $\bm{v}$, that is, 
\[
\wpi := \frac{e^{u_i/\beta}}{\sum_{\ell=1}^k e^{u_\ell/\beta}} \quad\text{and}\quad Q_i := \frac{e^{v_i/\beta}}{\sum_{\ell=1}^k e^{v_\ell/\beta}} \quad\forall i\in [k].
\]

\begin{lemma}\label{lem: KL divergence for winning distributions WO}
    Let $P$ and $Q$ be the winning distributions introduced above. Then, the KL divergence between $P$ and $Q$ is upper bounded as
    \[
        \kl{P}{Q} \leq \frac{\varepsilon^2\pi^2}{6V}\cdot C\cdot p,
    \]
    where $C>0$ is a universal constant. 
\end{lemma}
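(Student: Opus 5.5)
Write $\tilde\varepsilon := \varepsilon/\beta$ and $Z := \sum_{\ell=1}^k e^{u_\ell/\beta}$. Because $\bm{v}$ is obtained from $\bm{u}$ by swapping the utilities of items $1$ and $2$, the normalizing sums coincide: $Z = \sum_{\ell} e^{v_\ell/\beta}$. Consequently $P_\ell = Q_\ell$ for every $\ell\notin\{1,2\}$, and only items $1$ and $2$ contribute to the divergence. Their probabilities are
\[
P_1 = Q_2 = \frac{e^{\tilde\varepsilon/2}}{Z}, \qquad P_2 = Q_1 = \frac{e^{-\tilde\varepsilon/2}}{Z}.
\]
Substituting these into the definition of the KL divergence gives the closed form
\[
\kl{P}{Q} = P_1\log\frac{P_1}{P_2} + P_2\log\frac{P_2}{P_1} = (P_1 - P_2)\,\tilde\varepsilon = \frac{2\sinh(\tilde\varepsilon/2)}{Z}\,\tilde\varepsilon .
\]
In the regime $\varepsilon\in(0,\beta)$ we have $\tilde\varepsilon\in(0,1)$. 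Then $2\sinh(\tilde\varepsilon/2)\leq c_0\tilde\varepsilon$ with $c_0 = 2\sinh(1/2)$, so
\[
\kl{P}{Q}\leq \frac{c_0\tilde\varepsilon^2}{Z}.
\]

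The next step converts $1/Z$ into a multiple of $p$. Among the items, the one with smallest utility is item $2$ under $\bm{u}$ and item $1$ under $\bm{v}$. Both have utility $-\varepsilon/2$, because $L>0$ and the intermediate utilities equal $0$. Hence the minimum winning probability is
\[
p = \frac{e^{-\tilde\varepsilon/2}}{Z} \quad\text{for both instances.}
\]
This gives $1/Z = p\,e^{\tilde\varepsilon/2}\leq e^{1/2}p$. Combining with the previous bound yields
\[
\kl{P}{Q}\leq c_0 e^{1/2}\,\tilde\varepsilon^2 p.
\]
Finally, $\tilde\varepsilon^2 = \varepsilon^2/\beta^2 = \varepsilon^2\pi^2/(6V)$, which is the claimed bound with $C = 2e^{1/2}\sinh(1/2)$.

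One point needs a short check: that $L$ can be chosen so the minimum winning probability is exactly $p$. As a function of $L\in(0,\infty)$, the quantity $e^{-\tilde\varepsilon/2}/Z$ is continuous and decreasing, and it tends to $0$ as $L\to\infty$. At $L=0$ its value is $e^{-\tilde\varepsilon/2}/(2\cosh(\tilde\varepsilon/2)+k-2)$. This tends to $1/k > p$ as $\varepsilon\to 0$, so the value at $L=0$ exceeds $p$ once $\varepsilon$ is small enough. By the intermediate value theorem a suitable $L$ exists. This is precisely the "$\varepsilon$ sufficiently small" caveat in the construction.

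The only step that is not pure bookkeeping is the identification $1/Z\propto p$. Everything else is an exact computation, since the swap symmetry cancels all other terms in the divergence.
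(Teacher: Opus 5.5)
Your proposal is correct and follows the paper's proof: the exact identity $\mathrm{KL}(P\,\|\,Q)=\tilde\varepsilon\,(P_1-P_2)$ from the swap symmetry, then $P_1-P_2\leq C\tilde\varepsilon\,P_2$ for $\tilde\varepsilon\in(0,1)$, with $P_2=p$. You also make the constant explicit ($C=2e^{1/2}\sinh(1/2)=e-1$) and justify, via the intermediate value theorem, that a suitable $L$ exists; the paper leaves both points implicit.
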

\begin{proof}
    By direct calculations, we obtain
    \[
    \kl{P}{Q} = \frac{\varepsilon}{\beta}(P_1 - P_2).   
    \]
    Up to choosing $\varepsilon \in (0,\beta)$, there exists a universal constant $C > 0$ such that 
    \[
    P_1 - P_2 \leq \frac{\varepsilon}{\beta}\cdot C\cdot P_2.
    \]
    Due to our choice of $L$, $P_2 = \pmin = p$, thus implying 
    \[
    \kl{P}{Q} \leq \left(\frac{\varepsilon}{\beta}\right)^2 \cdot C \cdot p = \frac{\varepsilon^2\pi^2}{6V}\cdot C\cdot p.
    \]
\end{proof}

In order to employ Lemma \ref{Lemma 1 - Kaufmann}, we think of our setting as a trivial multi-armed bandit problem, in which the number of arms is $n=1$. At time $t$, the only available arm is played. The observed reward is the feedback signal, which consists of the winner of the current round. The two bandit models for the problem are $\rum$ and $\rumv$, where $\bm{u}$ and $\bm{v}$ are the ones introduced above. Since the noise distribution is of the Gumbel type, all winning probabilities are non-zero. Thus, the two reward distributions are mutually absolutely continuous, as needed in order to apply Lemma \ref{Lemma 1 - Kaufmann}. 

Now, let $\mathcal{E}$ be the event that item $1$ is ranked before item $2$. Since the output ranking is an $\mathcal{F}_T$-measurable function, $\mathcal{E}\in \mathcal{F}_T$. If we define $\bb{P}_u$ and $\bb{P}_v$ as the probability measures induced by $\rum$ and $\rumv$, respectively, a direct application of Lemma \ref{Lemma 1 - Kaufmann} yields
\[
\bb{E}_{\nu,\bm{u}}[\tstar] \kl{P}{Q} \geq d(\bb{P}_u(\mathcal{E}), \bb{P}_v(\mathcal{E})). 
\]
Let us assume that $\delta\in (0,1/4]$. Algorithm $\mathcal{A}^*$ is $\epdel$-PAC, so 
\[
\bb{P}_u(\mathcal{E}) \geq 1 - \delta \quad\text{and}\quad \bb{P}_v(\mathcal{E}) \leq \delta. 
\]
Due to the properties of the binary relative entropy, we have 
\[
d(\bb{P}_u(\mathcal{E}), \bb{P}_v(\mathcal{E})) \geq d(1-\delta, \delta) \geq \log\left(\frac{1}{3\delta}\right). 
\]
It follows that 
\[
\bb{E}_{\nu,\bm{u}}[\tstar] \geq \frac{1}{\kl{P}{Q}}\log\left(\frac{1}{3\delta}\right).
\]
Owing to Lemma \ref{lem: KL divergence for winning distributions WO}, we conclude that 
\[
\bb{E}_{\nu,\bm{u}}[\tstar] \geq  \frac{6V}{Cp\pi^2\varepsilon^2}\log\left(\frac{1}{3\delta}\right). 
\]
In conclusion, since $\log(1/(3\delta)) \geq \frac{1}{5}\log(1/\delta)$ for every $\delta\in (0,1/4]$, we have shown that 
\[
\bb{E}_{\nu,\bm{u}}[\tstar] = \Omega\left(\frac{V}{p\varepsilon^2}\log\left(\frac{1}{\delta}\right)\right).
\]

\section{Auxiliary Results}\label{app: auxiliary}
\subsection{Change-of-measure Lemma for Multi-armed Bandit Problems}

The next result restates Lemma 1 of \citep{kaufmann_changeofmeasure}. As this is typically used in the context of multi-armed bandit problems, we need to introduce some preliminary notation. The number of arms is any positive integer $n$, and we identify the arm set with $[n]$. At time $t$, let $A_t$ and $Z_t$ be the (possibly random) arm being chosen and the corresponding reward, respectively. Let $\mathcal{F}_t = \sigma(A_1,Z_1,\dots,A_t,Z_t)$ be the sigma-algebra generated by the trajectory followed up to time $t$. For every $i\in [n]$, we define $N_i(t)$ as the number of times that arm $i$ was played up to time $t$. We use the subscripts $\bb{P}_\nu$ and $\bb{E}_\nu$ to refer to the probability measure induced by a given bandit model $\nu$ and the implied expectation functional. 

\begin{lemma}\label{Lemma 1 - Kaufmann}
    Let $\nu = (\nu_i)_{i=1}^n$ and $\nu' = (\nu'_i)_{i=1}^n$ be two bandit models, where $\nu_i$ (resp. $\nu'_i$) is the reward distribution of arm $i\in [n]$. Assume that, for every $i\in [n]$, the probability distributions $\nu_i$ and $\nu'_i$ are equivalent, i.e., they are mutually absolutely continuous. For any almost-surely finite stopping time $\tau$ with respect to the filtration $(\mathcal{F}_t)_{t\in\nn}$, the following holds:
    \[
    \sum_{i=1}^n \bb{E}_\nu[N_i(\tau)] \kl{\nu_i}{\nu'_i} \geq \sup_{E\in \mathcal{F}_\tau} d\left(\bb{P}_\nu[E], \bb{P}_{\nu'}[E]\right),
    \]
    where $d(x,y) := x\log(x/y) + (1-x)\log((1-x)/(1-y))$ is the binary relative entropy, with the convention that $d(0,0) = d(1,1) = 0$. 
\end{lemma}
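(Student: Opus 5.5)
This follows the change-of-measure argument already used for the lower bounds of Theorem~\ref{thm: sample complexity lower bound for FR} and Theorem~\ref{thm: sample complexity lower bound WO}: the learning problem is cast as a one-armed bandit whose single arm returns the feedback signal, so Lemma~\ref{Lemma 1 - Kaufmann} applies with $N_1(\tau)=\tau$. The inequality to be established is
\[
\bb{E}_\nu[\tau]\,\kl{\nu_1}{\nu'_1}\;\geq\; d\bigl(\bb{P}_\nu[E],\bb{P}_{\nu'}[E]\bigr)\quad\text{for every } E\in\mathcal{F}_\tau ,
\]
and I will prove it in this general bandit form.

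\textbf{Step 1: log-likelihood ratio and Wald.} Write $L_t=\sum_{s\le t}\log\frac{d\nu_{A_s}}{d\nu'_{A_s}}(Z_s)$. Mutual absolute continuity makes every summand well defined. Conditioning on $\mathcal{F}_{s-1}$ and on the chosen arm, each increment has conditional mean $\kl{\nu_{A_s}}{\nu'_{A_s}}$. If these divergences are infinite, the claim is trivial; so assume they are finite. Wald's identity for the stopping time $\tau$, first applied to $\tau\wedge m$ and then letting $m\to\infty$ by monotone convergence, gives
\[
\bb{E}_\nu[L_\tau]=\sum_i \bb{E}_\nu[N_i(\tau)]\,\kl{\nu_i}{\nu'_i}.
\]
If $\bb{E}_\nu[\tau]=\infty$ with a positive divergence, the bound is again trivial, so I assume $\bb{E}_\nu[\tau]<\infty$.

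\textbf{Step 2: change of measure on $\mathcal{F}_\tau$.} The arm-selection kernels are the same under both models, so they cancel in the Radon--Nikodym derivative. For any $E\in\mathcal{F}_\tau$ this gives
\[
\bb{P}_{\nu'}[E]=\bb{E}_\nu\bigl[e^{-L_\tau}\mathbbm{1}_E\bigr],
\]
and this relies on $\tau$ being almost surely finite.

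\textbf{Step 3: data-processing step.} This is the main obstacle, namely showing $\bb{E}_\nu[L_\tau]\ge d(\bb{P}_\nu[E],\bb{P}_{\nu'}[E])$. I would apply conditional Jensen separately on $E$ and on $E^c$. Since $\bb{E}_\nu[e^{-L_\tau}\mid E]=\bb{P}_{\nu'}[E]/\bb{P}_\nu[E]$, convexity of $x\mapsto e^{-x}$ yields
\[
\bb{E}_\nu[L_\tau\mid E]\ge \log\frac{\bb{P}_\nu[E]}{\bb{P}_{\nu'}[E]},
\]
and the same argument on $E^c$ gives the analogous bound. Weighting the two bounds by $\bb{P}_\nu[E]$ and $\bb{P}_\nu[E^c]$ and adding produces exactly $d(\cdot,\cdot)$. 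The degenerate cases $\bb{P}_\nu[E]\in\{0,1\}$ are handled by the stated conventions $d(0,0)=d(1,1)=0$, together with the fact that mutual absolute continuity rules out $d=\infty$ when the other probability is positive. Taking the supremum over $E$ completes the argument.
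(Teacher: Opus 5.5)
The paper gives no proof of its own here: it restates Lemma~1 of \citet{kaufmann_changeofmeasure} and relies on the citation. Your argument (Wald's identity for the log-likelihood ratio $L_\tau$, the change of measure $\bb{P}_{\nu'}[E]=\bb{E}_\nu[e^{-L_\tau}\mathbbm{1}_E]$, and conditional Jensen on $E$ and $E^c$) is exactly the proof in that source, and it is correct, except for the order of the limit step in Step~1: the reduction to $\bb{E}_\nu[\tau]<\infty$ must come before the passage $m\to\infty$, and since the increments of $L$ are not sign-definite, monotone convergence has to be applied to their positive and negative parts separately (both have finite expected sums up to $\tau$ because finite divergence makes each log-likelihood ratio integrable).
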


As we shall motivate later, the setting considered in this lemma encompasses ours by choosing the number of arms equal to one.

\subsection{Technical Result for Bounding the Sample Complexity}

The next result is a technical lemma used to derive the sample complexity upper bounds given by Theorem \ref{thm: sample complexity upper bound FR} and Theorem \ref{thm: sample complexity upper bound for WO}. A more general version can be found as Lemma 12 in \citep{kaufmann_logbound}. The slightly modified version presented below is sufficient for our purposes. 

\begin{lemma}\label{lem: Kaufmann - log bound}
    Let $n\in\nn$ and $a,b  > 0$. Suppose that $n \leq  a \log(bn)$. Then 
    \[
    n\leq 2a\log\left(ab\right).
    \]
\end{lemma}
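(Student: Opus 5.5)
The plan is to split $\log(bn)$ as $\log(ab)+\log(n/a)$ and absorb the second term into the left-hand side with the elementary inequality $\log y\le y/e$, valid for all $y>0$. Before doing that, I will check that the hypothesis forces $ab\ge e$. This guarantees $\log(ab)\ge 1>0$, so the claimed bound is not vacuous or negative.

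\textbf{Step 1: the hypothesis forces $ab\ge e$.} Put $x:=bn>0$. The assumption $n\le a\log(bn)$ reads $x/(ab)\le \log x$. The function $x\mapsto \log x - x/(ab)$ on $(0,\infty)$ is maximized at $x=ab$, with maximum value $\log(ab)-1$. So the hypothesis can only hold if $\log(ab)\ge 1$, i.e.\ $ab\ge e$. In particular $\log(ab)>0$.

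\textbf{Step 2: absorb the $n$-dependence.} Write
\[
a\log(bn)=a\log(ab)+a\log\left(\frac{n}{a}\right).
\]
Applying $\log y\le y/e$ with $y=n/a$ bounds the second term by $a\cdot \frac{n}{ea}=n/e$. Hence
\[
n\le a\log(ab)+\frac{n}{e},
\]
which gives
\[
n\le \frac{e}{e-1}\,a\log(ab).
\]

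\textbf{Step 3: conclude.} Since $e/(e-1)\approx 1.58\le 2$ and $a\log(ab)\ge 0$ by Step 1, we get $n\le 2a\log(ab)$.

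The only delicate point is the sign of $\log(ab)$. Without Step 1, the final comparison $\frac{e}{e-1}a\log(ab)\le 2a\log(ab)$ would fail when $\log(ab)<0$. This is why I verify that the hypothesis itself rules out $ab<e$; everything else is a one-line calculus inequality. Note that integrality of $n$ is not used, so the bound holds for every real $n>0$.
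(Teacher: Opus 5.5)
Your proof is correct, but it takes a different route from the paper.

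\textbf{The paper's route.} The paper bootstraps. It first applies the crude inequality $\log x\le\sqrt{x}$ to turn the hypothesis into $n\le a\sqrt{b}\,\sqrt{n}$. This gives the a priori bound $n\le a^2b$. Feeding that back into the logarithm by monotonicity yields
\[
n\le a\log(b\cdot a^2b)=2a\log(ab).
\]
The constant $2$ comes directly from $\log(a^2b^2)$. No sign analysis is needed, because the bound is never obtained by enlarging a multiplicative constant.

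\textbf{Your route.} You split $\log(bn)=\log(ab)+\log(n/a)$ and absorb the $n$-dependent part linearly via the tangent-line bound $\log y\le y/e$. This gives the sharper constant $e/(e-1)\approx 1.58$. Relaxing it to $2$ then requires $a\log(ab)\ge 0$. Your Step~1 handles this correctly, and it also shows that the hypothesis can only hold when $ab\ge e$. It is, however, redundant: Step~2 already gives $(1-1/e)\,n\le a\log(ab)$ with $n>0$, which forces $\log(ab)>0$.

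\textbf{Comparison.} The paper's argument is the standard ``crude bound, then re-plug'' trick, and it needs no case analysis. Yours yields a slightly tighter constant.
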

\begin{proof}
    For every $x > 0$, $\log(x) \leq \sqrt{x}$. Therefore, we have
    \begin{equation*}
        \begin{split}
            n & \leq a \log(bn) \leq a \sqrt{bn} = a \sqrt{b} \sqrt{n},
        \end{split}
    \end{equation*}
    from which we obtain 
    \[
    \sqrt{n} \leq a \sqrt{b}. 
    \]
    Equivalently, $n \leq a^2 b$. Since the logarithm is increasing and $b>0$, we have
    \[
    n\leq a\log(bn) \leq a \log\left(b (a^2 b)\right) = 2a \log(ab).
    \]
\end{proof}

\subsection{Deviation Bound for Symmetric Unimodal Distributions}
The following is a standard result providing useful deviation bounds for symmetric unimodal distributions. The version for standardized random variables can be found as Theorem 7.1 in \citep{ion_sharpinequalities}.

\begin{lemma}\label{lem: Gauss deviation bounds for unimodal r.v.s}
    Let $Z$ be a symmetric unimodal random variable with $\var{Z} = \sigma^2 > 0$. Then 
    \begin{equation}
        \pp{Z \geq t} \leq \begin{cases}
            \displaystyle \frac{1}{2}(1 - \frac{t}{\sqrt{3}\sigma}) & \text{if }0\leq t < 2\sigma/\sqrt{3}\\
            \displaystyle \frac{2\sigma^2}{9t^2} & \text{if }t \geq 2\sigma/\sqrt{3}
        \end{cases}.
    \end{equation}
\end{lemma}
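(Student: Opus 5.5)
The statement to prove is Lemma~\ref{lem: Gauss deviation bounds for unimodal r.v.s}, a Gauss-type deviation bound for a symmetric unimodal $Z$ with variance $\sigma^2$. I would first reduce to the standardized case $\sigma = 1$ by scaling, since both sides are homogeneous in $t/\sigma$. The plan then uses Khinchine's representation of symmetric unimodal laws: $Z \stackrel{d}{=} U\cdot Y$, where $U\sim\mathrm{Uniform}(-1,1)$ is independent of $Y\geq 0$. Since $\ee{U^2}=1/3$, we get $\ee{Y^2}=3\sigma^2$. Conditioning on $Y$ gives
\[
\pp{Z\geq t} = \ee{\tfrac12\,(1 - t/Y)_+}, \qquad t\geq 0.
\]
So the problem becomes maximizing the linear functional $\ee{g_t(Y)}$, with $g_t(y) := \tfrac12(1-t/y)_+$, over laws of $Y\geq 0$ satisfying $\ee{Y^2}= 3\sigma^2$.

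Because the functional is linear in the law of $Y$ and there is a single moment constraint, the supremum equals the least upper bound of $g_t$ by a function of the form $\alpha+\beta y^2$. Equivalently, it is attained by a two-point mixture, with one atom at $0$ (where $g_t = 0$) and one at some $y^*>t$. I would compute this directly by mass splitting. Put mass $q$ at $y$ and mass $1-q$ at $0$, with $qy^2 = 3\sigma^2$ and $q\le 1$. The objective is then
\[
h(y) = \frac{3\sigma^2}{2y^2}\Bigl(1-\frac{t}{y}\Bigr).
\]
Maximizing over $y\ge \max(t,\sqrt3\sigma)$ gives an unconstrained maximizer $y=3t/2$, with value $\frac{3\sigma^2}{2}\cdot\frac{4}{9t^2}\cdot\frac13 = \frac{2\sigma^2}{9t^2}$. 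This maximizer is feasible ($q\le1$) exactly when $3t/2\ge\sqrt3\sigma$, i.e.\ $t\ge 2\sigma/\sqrt3$, which yields the second branch. Otherwise the constraint binds: $y=\sqrt3\sigma$ with $q=1$, giving $\tfrac12(1-t/(\sqrt3\sigma))$, which is the first branch.

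To make the argument rigorous rather than heuristic, I would verify the tangent-majorant inequality $g_t(y)\le \alpha+\beta y^2$ for all $y\ge0$ with suitable $\alpha,\beta$, and then take expectations. For $t\ge 2\sigma/\sqrt3$, take $\alpha=0$ and $\beta = 2/(27t^2)$; this reduces to $\tfrac12(1-t/y)\le \frac{2y^2}{27t^2}$, an elementary cubic inequality with a double root at $y=3t/2$. For small $t$, take the tangent line in $y^2$ at $y=\sqrt3\sigma$ and check by one-variable calculus that it dominates $g_t$.

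The main obstacle is this majorant check in the small-$t$ regime, where one must confirm that the tangent parabola stays above the concave piece of $g_t$ on the whole half-line. Alternatively, since the paper itself presents this as a standard result, the proof can simply cite Theorem~7.1 of \citet{ion_sharpinequalities} for $\sigma=1$ and rescale.
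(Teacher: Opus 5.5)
Your proposal is correct, and it takes a genuinely different route from the paper. The paper gives no argument at all: it simply invokes Theorem~7.1 of \citet{ion_sharpinequalities} for standardized variables and rescales.

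Your reduction via Khinchine's mixture representation, followed by a weak-duality majorant, has three advantages over the citation:
\begin{itemize}
\item It is self-contained.
\item It exhibits the extremal laws, which shows the bound is sharp: the uniform law on $[-\sqrt{3}\sigma,\sqrt{3}\sigma]$ for small $t$, and a mixture of a point mass at $0$ with the uniform law on $[-3t/2,3t/2]$ for large $t$.
\item It shows the branch $2\sigma^2/(9t^2)$ is valid for every $t>0$, because $4s^3-27s+27=(2s-3)^2(s+3)\geq 0$.
\end{itemize}

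The step you flag as the main obstacle closes in two lines if you work in the variable $x=y^2$.
\begin{itemize}
\item The function $\phi(x)=\frac{1}{2}(1-t\,x^{-1/2})$ is concave on $[t^2,\infty)$. Note that $x_0=3\sigma^2>t^2$, because $t<2\sigma/\sqrt{3}<\sqrt{3}\sigma$. So its tangent at $x_0$ dominates it on $[t^2,\infty)$.
\item This tangent is $\alpha+\beta x$ with $\beta=t/(12\sqrt{3}\,\sigma^3)\geq 0$ and $\alpha=\frac{1}{2}-\sqrt{3}\,t/(4\sigma)$.
\item The condition $\alpha\geq 0$ holds exactly when $t\leq 2\sigma/\sqrt{3}$. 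Since the tangent is non-decreasing, it then also dominates the zero piece of $g_t$ on $[0,t^2]$.
\item Taking expectations gives $\alpha+3\sigma^2\beta=\frac{1}{2}\bigl(1-t/(\sqrt{3}\sigma)\bigr)$, which is the first branch.
\end{itemize}

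One caveat concerns $t=0$. Your identity $\pp{Z\geq t}=\ee{g_t(Y)}$ with $g_t(0)=0$ holds only for $t>0$. At $t=0$, an atom of $Y$ at zero (equivalently, an atom of $Z$ at its mode) contributes fully rather than with weight $1/2$. In that case the claimed bound $\pp{Z\geq 0}\leq 1/2$ actually fails. For example, take $Z=0$ with probability $1/2$ and uniform on $[-c,c]$ otherwise; this is symmetric unimodal in Khinchine's sense, yet $\pp{Z\geq 0}=3/4$. The lemma should therefore be read for $t>0$, or for $Z$ without an atom at $0$. The latter is the case in the paper's application, where $Z=Y-X$ is absolutely continuous.
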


\end{document}